\documentclass{article}

\usepackage[preprint]{mstyle}
\PassOptionsToPackage{table}{xcolor}

\usepackage[utf8]{inputenc}
\usepackage[T1]{fontenc}
\usepackage{microtype}
\usepackage{url}
\usepackage{xspace}
\usepackage[normalem]{ulem}

\usepackage{amsmath,amssymb,amsfonts,amsthm}
\usepackage{mathtools}
\usepackage{bm}

\usepackage{graphicx}
\usepackage{caption}
\usepackage{subcaption}
\usepackage{xcolor}
\usepackage{colortbl}
\usepackage{booktabs}
\usepackage{multirow}
\usepackage{threeparttable}
\usepackage{adjustbox}
\usepackage{arydshln}
\usepackage{nicefrac}
\usepackage{pifont}
\usepackage{algorithm}
\usepackage{algpseudocode}
\usepackage{scalefnt}

\definecolor{linkcolor}{RGB}{83,83,182}
\usepackage{hyperref}
\hypersetup{
  colorlinks=true,
  citecolor=linkcolor,
  linkcolor=linkcolor,
  urlcolor=linkcolor
}
\usepackage[nameinlink,capitalize]{cleveref}

\crefname{section}{Sec.}{Secs.}
\Crefname{section}{Sec.}{Secs.}
\crefname{table}{Tab.}{Tabs.}
\Crefname{table}{Tab.}{Tabs.}
\crefname{figure}{Fig.}{Figs.}
\Crefname{figure}{Fig.}{Figs.}
\crefname{algorithm}{Alg.}{Algs.}
\Crefname{algorithm}{Alg.}{Algs.}
\crefname{theorem}{Thm.}{Thms.}
\Crefname{theorem}{Thm.}{Thms.}
\crefname{lemma}{Lem.}{Lems.}
\Crefname{lemma}{Lem.}{Lems.}
\crefname{proposition}{Prop.}{Props.}
\Crefname{proposition}{Prop.}{Props.}

\definecolor{mydarkgreen}{rgb}{0,0.42,0}
\definecolor{mydarkred}{rgb}{0.75,0,0}
\definecolor{mygreen1}{rgb}{0,0.8,0}
\definecolor{mygreen2}{RGB}{0,120,20}
\definecolor{RoyalBlue}{RGB}{65,105,225}
\definecolor{Gray}{gray}{0.95}
\definecolor{bgcolor}{rgb}{0.93,0.99,1}
\definecolor{bgcolor2}{rgb}{0.8,1,0.8}
\definecolor{bgcolor3}{rgb}{0.50,0.90,0.50}
\definecolor{bgcolor4}{rgb}{0.94,0.97,1}

\newcommand{\gbox}{\colorbox{bgcolor4}}
\newcommand{\gboxeq}[1]{\text{\colorbox{bgcolor4}{$#1$}}}

\DeclareRobustCommand{\algname}[1]{%
  \ifmmode
    \text{\textup{\ttfamily #1}}%
  \else
    \textup{\ttfamily #1}%
  \fi
  \xspace
}
\newcommand{\algn}[1]{{\sf\scalefont{0.90}{#1}}\xspace}
\newcommand{\algnn}[1]{{\sf\scalefont{0.90}{#1}}}

\newcommand{\eqdef}{\coloneqq}

\newcommand{\EE}{\mathbb{E}}
\newcommand{\E}{\mathbb{E}}

\newcommand{\R}{\mathbb{R}}

\newcommand{\F}{\mathcal{F}}

\newcommand{\ste}{\algn{STE}}
\newcommand{\method}{\algn{JacQuant}}
\newcommand{\jvr}{\algn{JacQuant-VR}}
\newcommand{\jbase}{\algn{JacQuant-Base}}
\newcommand{\probe}{\algn{JacQuant-Probe}}
\newcommand{\dither}{\algn{JacQuant-Dither}}

\newcommand{\GradEst}{\texttt{GradEst}\xspace}
\newcommand{\RefGrad}{\texttt{RefGrad}\xspace}
\newcommand{\ProbeUpdate}{\texttt{ProbeUpdate}\xspace}
\newcommand{\CtrlUpdate}{\texttt{CtrlUpdate}\xspace}
\newcommand{\DitherEMA}{\texttt{DitherEMA}\xspace}

\theoremstyle{plain}
\newtheorem{theorem}{Theorem}
\newtheorem{proposition}{Proposition}
\newtheorem{lemma}{Lemma}
\newtheorem{corollary}{Corollary}

\theoremstyle{definition}

\newtheorem{assumption}{Assumption}

\theoremstyle{remark}

\crefname{section}{Sec.}{Secs.}
\Crefname{section}{Sec.}{Secs.}
\crefname{algorithm}{Alg.}{Algs.}
\Crefname{algorithm}{Alg.}{Algs.}
\crefname{table}{Tab.}{Tabs.}
\Crefname{table}{Tab.}{Tabs.}
\crefname{figure}{Fig.}{Figs.}
\Crefname{figure}{Fig.}{Figs.}
\crefname{equation}{Eq.}{Eqs.}
\Crefname{equation}{Eq.}{Eqs.}
\crefname{theorem}{Thm.}{Thms.}
\Crefname{theorem}{Thm.}{Thms.}
\crefname{lemma}{Lem.}{Lems.}
\Crefname{lemma}{Lem.}{Lems.}
\crefname{corollary}{Cor.}{Cors.}
\Crefname{corollary}{Cor.}{Cors.}
\crefname{proposition}{Prop.}{Props.}
\Crefname{proposition}{Prop.}{Props.}
\crefname{definition}{Def.}{Defs.}
\Crefname{definition}{Def.}{Defs.}
\crefname{assumption}{Assump.}{Assumps.}
\Crefname{assumption}{Assump.}{Assumps.}
\crefname{remark}{Rem.}{Rems.}
\Crefname{remark}{Rem.}{Rems.}
\crefname{example}{Ex.}{Exs.}
\Crefname{example}{Ex.}{Exs.}
\crefname{appendix}{App.}{Apps.}
\Crefname{appendix}{App.}{Apps.}

\newcommand{\rev}[1]{{\color{blue}#1}}

\usepackage{tabularx}
\usepackage{subcaption}
\usepackage{array}

\newcolumntype{Y}{>{\centering\arraybackslash}X}

\newcommand{\panelcaption}[1]{%
  \noindent{\footnotesize\bfseries #1}\par\vspace{2pt}%
}

\usepackage{enumitem}

\title{JacQuant: STE-Free Quantization-Aware Training\\
via Learned Jacobian Surrogates}

\author{%
  Kai Yi\thanks{Correspondence to: \texttt{kaiyi96@meta.com}.}\\
  Meta AI\\
  \And
  Vignesh Vivekraja\\
  Meta AI\\
  \And
  Harshit Khaitan\\
  Meta AI\\
  \And 
  Steven Li\\
  Meta AI
}

\begin{document}

\maketitle

\begin{abstract}
Quantization-aware training (QAT) is widely deployed but typically relies on the Straight-Through Estimator (\ste), which passes gradients through non-differentiable quantizers by fiat. This often makes training brittle near bin boundaries and weakly aligned with the actual behavior of the low‑precision model. 
We introduce \method, a QAT framework that learns a lightweight surrogate of the model’s local sensitivity to parameter changes and uses it to stabilize and accelerate training within standard variance‑reduced optimizers. The surrogate is inexpensive (diagonal or block‑diagonal), data‑driven, and compatible with common weight and activation quantizers.
On “code‑preserving” training phases, we prove convergence for non‑convex objectives and obtain linear rates under a PŁ condition, and we relate the learned sensitivity to end‑to‑end output fidelity via a simple calibration argument. 
Across LLM benchmarks at $\leq2$ bits, \method consistently reaches higher accuracy than \algnn{STE}-based QAT, and the runtime analyses on various models show that the added cost remains negligible under practical group sizes. The method is drop-in and requires no changes to the forward quantizers; our empirical claims are scoped to ultra-low-bit LLM QAT.
\end{abstract}
\section{Introduction}

Quantization is a primary route to deploy large language models (LLMs) under tight memory and latency budgets by reducing parameter precision and improving hardware utilization \citep{aqlm, gptq, tseng2024quip, liu2024vptq, liu2025paretoq}. In many practical deployments, \emph{post-training quantization} (PTQ) remains attractive because it avoids gradient-based training and can be applied quickly to a pretrained checkpoint \citep{gptq, tseng2024quip}. However, when pushing to \emph{ultra-low precision}, most notably $\leq2$-bit weights, PTQ often suffers sharp degradation, and \emph{quantization-aware training} (QAT) becomes important to recover accuracy by adapting scales, clipping, and representations to the task distribution \citep{liu2025paretoq, winq2025}.

Almost all modern QAT relies on the \emph{Straight-Through Estimator} (\ste) \citep{bengio2013estimating}: the forward pass uses a hard quantizer $Q_\Delta(\cdot)$, but backpropagation pretends its Jacobian is the identity. This creates a persistent mismatch because hard quantizers are piecewise constant and dominated by bin boundaries and saturation/clipping at low bit-widths \citep{yin2019understanding,esser2020learned,nagel2020up}.
In the $\leq 2$-bit regime, where many parameters lie near boundaries or hit clipping limits, \algnn{STE}’s identity Jacobian can over-propagate gradients, leading to instability and suboptimal solutions.

\textbf{Key idea}: \emph{learn the missing quantizer Jacobian.} We view QAT as a \emph{missing-Jacobian} problem and introduce \method (\algnn{Jac}obian learning for \algnn{Quant}ization), which keeps the \emph{deployed hard quantizer} in the forward pass but replaces \algnn{STE}’s fixed identity Jacobian with a learned surrogate sensitivity map $B(W)$. Let $q=Q_\Delta(W)$ and $v=\nabla_q\ell$. \method uses $\nabla_W \ell \approx B(W)\,v,$ where $B(W)$ is diagonal or block-diagonal with one scalar per quantization group, making it cheap and easy to integrate. We train $B(W)$ to track the quantizer’s \emph{local sensitivity}: near $1$ in bin interiors and near $0$ under active saturation, producing geometry-aware gradients that naturally damp saturated directions and reduce oscillations near boundaries.

In \Cref{fig:tissue}, we contrast our approach with representative prior methods. \ste treats quantization as gradient-transparent; fixed-transform methods (e.g., Rotation Trick \citep{rotationtrick}) alter gradient geometry in a static way. In contrast, \method learns an \emph{adaptive} backward gain $B(W)$ that aligns gradient propagation with how the hard quantizer actually responds to perturbations, especially under clipping/saturation where \ste is most misaligned.

\begin{figure}[!tb] 
    \centering
    \includegraphics[width=0.9\linewidth, trim = 250 400 950 500, clip]{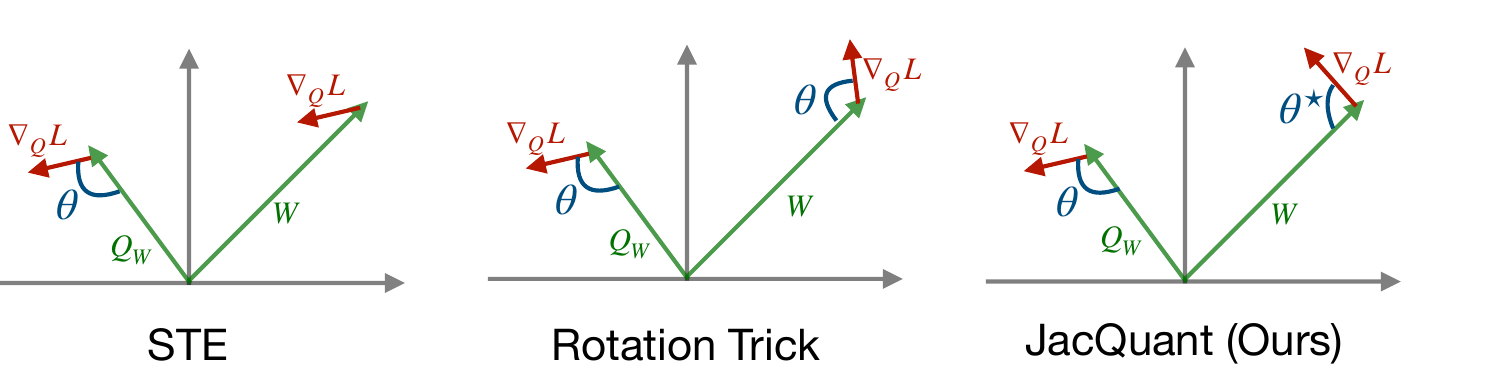}
    \vspace{-2mm}
    \caption{Gradient propagation schemes in quantized optimization.
    \emph{Left}: \ste, which directly copies the forward gradient direction.
    \emph{Middle}: \algn{Rotation Trick}~\citep{rotationtrick}, which applies a fixed relative rotation $\theta$ to the gradient.
    \emph{Right}: \method (ours), which learns an adaptive surrogate Jacobian $B(W)$, yielding an optimal rotation $\theta^{\star}$ together with an optimal gradient rescaling.}
    \label{fig:tissue}
    \vspace{-2mm}
\end{figure}

To keep the additional cost negligible, \method updates the surrogate Jacobian $B(W)$ only occasionally and amortizes it across many QAT steps. Sensitivity is estimated using either random \emph{probes} or \emph{subtractive dithering}, requiring only a single scalar per group and preserving standard QAT throughput. While the framework is compatible with classical variance-reduction or control-variate techniques, the central departure from \ste lies in explicitly learning the Jacobian rather than assuming an identity surrogate.

Our \textbf{contributions} are summarized as follows:
\begin{itemize}[leftmargin=1.2em]
    \item \textit{STE-free backward rule via Jacobian learning:} we replace STE’s identity Jacobian with a learned, group-wise sensitivity map $B(W)$ while keeping the hard forward quantizer unchanged.
    \item \emph{Practical estimators with negligible overhead.} We propose two efficient estimators (\algn{Probe} and \algn{Dither}) that update $B(W)$ intermittently and scale naturally with group-wise quantization. 
    \item \emph{Theory for a principled target gradient.} Using a smoothed/dithered quantization model, we relate \algnn{JacQuant}’s updates to a well-defined target gradient and establish standard non-convex convergence guarantees (and linear rates under a PŁ condition) when $B(W)$ tracks the mean sensitivity \citep{karimi2016linear}. 
    \item \emph{Extensive empirical validation on LLMs.} We demonstrate that \method is a drop-in upgrade to \ste-based QAT pipelines and improves stability and accuracy in $\leq 2$-bit regimes on LLM benchmarks. Empirically, we evaluate \method on \texttt{LLaMA3-1B/3B}, \texttt{Qwen3-1.7B}, and a \texttt{LLaMA3-8B} W2A16 trajectory; compare against \algn{Rotation Trick} under matched \algn{ParetoQ} settings; and report runtime overhead on both \texttt{LLaMA3-1B} and \texttt{LLaMA3-8B}. We scope the empirical claim to ultra-low-bit LLM QAT while isolating the algorithmic contribution to the backward rule.
\end{itemize}
\section{Related Work}
\vspace{-1mm}
\paragraph{Post-training quantization (PTQ).}
PTQ calibrates a pretrained model without gradient updates and remains a strong default for rapid LLM deployment. Representative methods include \algn{SmoothQuant}~\citep{xiao2023smoothquant} (activation outlier smoothing for W8A8), \algn{GPTQ}~\citep{gptq} (Hessian-informed weight-only quantization), \algn{ZeroQuant}/\algn{ZeroQuant-V2}~\citep{yao2022zeroquant,yao2023zeroquant} (per-layer PTQ with hardware-friendly kernels), and \algn{AWQ}~\citep{lin2024awq} (activation-aware protection of salient channels for 4-bit inference).

\paragraph{Quantization-aware training (QAT).}
QAT typically improves accuracy in aggressive regimes (e.g., $\leq$2-bit weights and/or W\&A quantization) by adapting scales
and clipping to the task distribution rather than relying purely on calibration statistics.
Classic \ste-based QAT spans early work such as \algn{DoReFa-Net}~\citep{zhou2016dorefa}, \algn{PACT}~\citep{choi2018pact},
and step-size learning methods such as \algn{LSQ}/\algn{LSQ+}~\citep{esser2020learned,bhalgat2020lsq+}.
Recent findings similarly suggest QAT is often preferable to PTQ in ultra-low-bit settings
(e.g., \algn{ParetoQ}~\citep{liu2025paretoq}, \algn{WinQ}~\citep{winq2025}).

\paragraph{Beyond the Straight-Through Estimator.}
Most QAT methods propagate gradients through quantizers using the Straight-Through Estimator (\ste), which can be biased near
bin boundaries and poorly aligned with the true low-precision objective. Two notable departures are \algn{PV-Tuning}~\citep{malinovskii2024pv}, which uses a proximal surrogate, and the \algn{Rotation Trick}~\citep{rotationtrick}, which applies a fixed geometric alignment.
Other complementary routes include differentiable relaxations/annealing (e.g., \algn{Soft-to-Hard VQ}~\citep{agustsson2017soft})
and hybrid first/zeroth-order corrections (e.g., \algn{FOGZO}~\citep{yang2025fogzo}), as well as optimizer-level advances (e.g., \algn{Muon}~\citep{liu2025muon}). Zeroth-order approaches such as \algn{ZeroQAT}~\citep{zeroqat} address a complementary cost regime: they change the gradient-estimation mechanism, whereas \method isolates the backward rule through the deployed hard quantizer and amortizes probing cost across many QAT steps.
We pursue \emph{Jacobian learning} because it preserves the deployed hard quantizer in the forward pass, yet replaces the fixed \ste with a lightweight learned sensitivity map $B(W)$ that (i) \emph{reduces bias} by matching local quantization sensitivity,
(ii) \emph{stabilizes} training via adaptive damping near saturation/clipping, and (iii) amortizes any probing cost by updating
$B(W)$ only occasionally. An expanded comparison is provided in \Cref{app:beyond_ste}.

\paragraph{Variance reduction.}
Control-variate methods (e.g., \algn{SAG}~\citep{roux2012stochastic}, \algn{SVRG}~\citep{johnson2013accelerating}, \algn{SAGA}~\citep{defazio2014saga}) and Jacobian-sketching (\algn{JacSketch}~\citep{gower2021stochastic}) reduce stochastic noise in large-scale optimization. We adapt these ideas to stabilize Jacobian-modulated QAT updates with minimal overhead; see \Cref{app:vr} and \ref{app:vr-api}.
\section{Method}\label{sec:method}
\vspace{-1mm}
We introduce \method, a QAT framework that replaces \ste with a \emph{learned surrogate Jacobian}. Rather than treating quantization as ``identity in backward", we learn how perturbations of full-precision weights \rev{affect} their quantized proxies and use this sensitivity to map gradients back to weight space. 

\subsection{Preliminaries: Grouped Quantization in LLMs}\label{sec:prelim}
\vspace{-1mm}
Consider a linear layer with weights $W\in\mathbb{R}^{d_{\text{out}}\times d_{\text{in}}}$ and input $x$, producing $z=Wx$.
QAT uses a quantized proxy $\widehat{W}=Q_\Delta(W)$ in the forward pass.
We use \emph{grouped quantization}: partition weights into groups $\{W^{(g)}\}_{g=1}^G$ (e.g., 128 weights per group) and quantize each group independently.
For a sample $(x_i,y_i)$, let
$v_i := \nabla_{\widehat{W}}\,\ell(f(x_i;\widehat{W}),y_i)\big|_{\widehat{W}=Q_\Delta(W)}$.
The challenge is that $Q_\Delta(\cdot)$ is piecewise constant (often with clipping), so its true Jacobian is missing.

\subsection{The Core Idea: Learning the Quantization Jacobian}\label{sec:core}
We optimize the standard supervised objective
\begin{equation}\label{eq:obj}
L(W)=\mathbb{E}_{(x,y)\sim\mathcal D}\big[\ell(f(x;Q_\Delta(W)),y)\big].
\end{equation}
\ste corresponds to using an identity surrogate for the missing Jacobian.
\method instead learns a lightweight sensitivity map $B(W)$ and uses the backward rule
\begin{equation}\label{eq:jacquant-core}
\gboxeq{
\nabla_W L(W) \approx B(W)\,\nabla_{\widehat{W}}L(\widehat{W})\big|_{\widehat{W}=Q_\Delta(W)} .
}
\end{equation}

\paragraph{Parameterization and cost.}
We use a block-diagonal $B(W)$ with one scalar per quantization group, $B(W)=\mathrm{blkdiag}(b_1 I,\dots,b_G I)$. With group size 128 this adds one scalar per 128 weights (negligible vs.\ optimizer state).

\subsection{Practical Estimation of $B(W)$}\label{sec:algorithms}
\method updates $B(W)$ only on refresh steps (Alg.~\ref{alg:jacquant-unified}) and reuses it across many QAT iterations.

\textbf{\probe (default).}
For each group $g$, draw $\delta_g\sim\mathcal N(0,\sigma^2 I)$ and measure $\Delta q_g = Q_\Delta(W_g+\delta_g)-Q_\Delta(W_g)$. A one-step slope fit gives
\begin{equation}\label{eq:probe-update}
\hat b_g=\frac{\langle \Delta q_g,\delta_g\rangle}{\|\delta_g\|_2^2+\epsilon},
\quad
b_g \leftarrow (1-\beta)b_g+\beta\,\mathrm{clip}(\hat b_g,0,1).
\end{equation}

\textbf{\dither.}
We also consider subtractive dithering $r_g\sim \mathcal U[-\Delta/2,\Delta/2]$ and the de-dithered proxy $q_g=Q_\Delta(W_g+r_g)-r_g$, and update $b_g$ toward the empirical sensitivity of $W_g\mapsto q_g$. We provide the full estimator and implementation details in \Cref{app:vr,app:vr-api}.

\begin{algorithm}[t]
\caption{\jvr\ (unified view)}
\label{alg:jacquant-unified}
\small
\begin{algorithmic}[1]
\Require $W_0$, dataset $\mathcal D$, quantizer $Q_\Delta$, stepsize $\eta$, refresh prob $p\in(0,1]$, mode $\in\{\textsc{Probe},\textsc{Dither}\}$, EMA rate $\beta$
\State $B_0\leftarrow I$; \quad anchors $(\tilde W,\tilde B)\leftarrow(W_0,B_0)$ \Comment{{start from the STE gain}}
\State VR state $s_0$; \quad reference gradient $\tilde g \leftarrow \textsc{RefGrad}(\tilde W,\tilde B)$ \Comment{{anchor/control variate}}
\For{$t=0,1,\dots$}
  \State Sample minibatch $S_t\subset\mathcal D$
  \State $g_t \leftarrow \textsc{GradEst}(W_t,B_t,\tilde W,\tilde B,\tilde g,s_t;S_t)$ \Comment{{backprop uses learned $B_t$}}
  \State $W_{t+1} \leftarrow W_t - \eta\, g_t$ \Comment{{standard QAT update}}
  \State $s_{t+1} \leftarrow \textsc{CtrlUpdate}(s_t,W_{t+1},\tilde W;S_t)$ \Comment{{update VR memory if used}}
  \If{$u\sim\mathrm{Unif}[0,1]\le p$ \textbf{ (or criterion $\mathcal C$ holds)}} \Comment{{refresh rarely}}
      \If{\textsc{Probe}}
        \State $B_{t+1}\leftarrow(1{-}\beta)B_t{+}\beta\,\textsc{ProbeUpdate}(W_{t+1})$ \Comment{{fit local slope}}
      \Else
        \State $B_{t+1}\leftarrow(1{-}\beta)B_t{+}\beta\,\textsc{DitherUpdate}(W_{t+1})$ \Comment{{dithered sensitivity}}
      \EndIf
      \State $(\tilde W,\tilde B)\leftarrow(W_{t+1},B_{t+1})$ \Comment{{synchronize anchor}}
      \State $\tilde g \leftarrow \textsc{RefGrad}(\tilde W,\tilde B)$ \Comment{{refresh reference}}
  \Else
      \State $B_{t+1}\leftarrow B_t$ \Comment{{reuse amortized Jacobian}}
  \EndIf
\EndFor
\end{algorithmic}
\end{algorithm}

\subsection{Optional Variance Reduction}\label{sec:vr}
Jacobian learning adds stochasticity (from \algnn{probes}/\algnn{dithers}), so \method can be combined with scalable variance-reduced estimators via occasional refresh steps (Alg.~\ref{alg:jacquant-unified}). \Cref{app:vr} gives the concrete control-variate forms. Disabling the control-variate state yields \jbase (Alg.~\ref{alg:jacquant-unified-no-vr}), isolating the effect of Jacobian learning.
\providecommand{\GradEst}{\mathrm{GradEst}}
\providecommand{\RefGrad}{\mathrm{RefGrad}}

\section{Convergence Analysis of \method}
\label{sec:convergence}
\paragraph{Analytical storyline.}
The analysis is organized around one question: \textit{what should replace the identity Jacobian used by \ste, and what happens to optimization when we use this replacement?}  The argument has four linked parts.  First, because a hard quantizer is piecewise constant, we analyze short \emph{code-preserving windows} and introduce a smoothed reference gradient that has a well-defined quantizer sensitivity $J(W)$.  Second, we show that the only extra bias in \method is the mismatch $B(W)-J(W)$, whereas \ste has the fixed mismatch $I-J(W)$.  Third, we prove that the probe and dither updates estimate this same $J(W)$ and therefore make the learned backward map competitive with, and often less biased than, \ste.  Finally, once this surrogate gradient is bounded and locally regular, standard stochastic/variance-reduced convergence rates apply with two interpretable residuals: a Jacobian-tracking bias and a stochastic-variance floor.
Thus the proof should be read as a chain rather than a list of independent claims: \Cref{lem:bias-to-target} defines the target that \ste misses, \Cref{lem:probe-consistency,prop:sa-tracking,lem:dither-fixedpoint} explain how the algorithm learns that target, and \Cref{lem:stability,lem:vr-bias,thm:nonconvex-generic,thm:pl-generic} show that optimization behaves like a standard smooth stochastic method once the remaining Jacobian error and variance are accounted for.

\subsection{Local Smoothness and the Target Gradient}
\label{subsec:setup}
\label{subsec:target-gradient}
Quantization $Q_\Delta:\R^d\to\R^d$ has derivative zero almost everywhere and undefined derivative on cell boundaries.  Directly analyzing $\nabla_W L(Q_\Delta(W))$ is therefore uninformative.  We instead examine short intervals during training in which the quantization code does not change.  This assumption is not an algorithmic restriction; it is a local analytical device matching the practical fact that QAT weights usually cross quantization cells only intermittently.

\begin{assumption}[Code preservation]
\label{as:code}
There exist $t_0$ and $T\ge1$ such that $Q_\Delta(W_t)=Q_\Delta(W_{t_0})\equiv q$ for all $t\in[t_0,t_0+T)$.
\end{assumption}

Within such a window, let $v_i:=\nabla_q\ell(f(x_i;q),y_i)$ and $\bar v:=n^{-1}\sum_i v_i$.  To define a meaningful gradient through the hard quantizer, we smooth the quantizer by subtractive dithering:
$$
    m(W):=\E_r[Q_\Delta(W+r)-r],\qquad r\sim\mathrm{Unif}[-\Delta/2,\Delta/2],
$$
and define the mean-field quantizer sensitivity $J(W):=\nabla_W m(W)$.  For unclipped bin interiors, $J(W)$ is close to the identity; near clipping or saturation, its diagonal entries shrink toward zero.  Thus the principled target gradient is
$$
    g^\dagger(W)=J(W)\bar v,
$$
which is the gradient of the dither-averaged quantized objective on the window.  \method uses $g_B(W)=B(W)\bar v$, whereas \ste uses $g^{\ste}(W)=\bar v$.

\begin{lemma}[Bias to the target gradient]
\label{lem:bias-to-target}
On a code-preserving window,
$$
\|g_B(W)-g^\dagger(W)\|\le \|B(W)-J(W)\|\,\|\bar v\|,
\qquad
\|g^{\ste}(W)-g^\dagger(W)\|\le \|I-J(W)\|\,\|\bar v\|.
$$
\end{lemma}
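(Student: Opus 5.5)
The plan is to write both gradients on the window as linear maps applied to the same vector $\bar v$ and then use the definition of the induced operator norm. Under \Cref{as:code}, the code $q=Q_\Delta(W_t)$ is fixed for all $t$ in the window. Hence each $v_i=\nabla_q\ell(f(x_i;q),y_i)$, and therefore $\bar v$, is the same vector throughout. This is the key point: the three quantities $g_B(W)=B(W)\bar v$, $g^{\ste}(W)=I\bar v$ and $g^\dagger(W)=J(W)\bar v$ act on one common vector $\bar v$, with no dependence of $\bar v$ on which backward rule is used.

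The steps are then immediate. First, by linearity,
\[
g_B(W)-g^\dagger(W)=\bigl(B(W)-J(W)\bigr)\bar v,
\qquad
g^{\ste}(W)-g^\dagger(W)=\bigl(I-J(W)\bigr)\bar v .
\]
Second, apply $\|Av\|\le\|A\|\,\|v\|$. Here $\|\cdot\|$ on matrices is the operator norm induced by the Euclidean vector norm. One could also use the Frobenius norm, which dominates it, so the bound remains valid in either reading. Both claimed inequalities follow.

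The only step needing care is that $g^\dagger$ is well defined, i.e.\ that $J(W)=\nabla_W m(W)$ exists. I would note that $m$ is Lipschitz and coordinatewise smooth. Under uniform subtractive dither, $m$ is in fact affine (the identity) away from clipping, with a piecewise-linear transition near the saturation limits. So $J$ exists almost everywhere, and on a window one may take any measurable selection of it; the lemma is a pointwise algebraic statement given $J(W)$. I would state this briefly rather than treat it as an obstacle. There is no genuine difficulty in this lemma; the substantive work is deferred to the later tracking results.
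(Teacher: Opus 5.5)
Your proof is correct and matches the paper's intended argument. The paper gives no separate proof of this lemma; it treats the bound as immediate from $g_B(W)-g^\dagger(W)=(B(W)-J(W))\bar v$ and $g^{\mathrm{STE}}(W)-g^\dagger(W)=(I-J(W))\bar v$ together with $\|Av\|\le\|A\|\,\|v\|$. Keep your inequality form, because the proof of \Cref{cor:dominance} later restates this lemma with equalities, which hold only when $\bar v$ lies in the top singular subspace of the relevant matrix.
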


\noindent\textit{Why this lemma is the core comparison.}
The lemma isolates the backward-rule question from all other parts of QAT.  If a coordinate is in a stable bin interior, $J(W)\approx I$ and \ste is already reasonable; \method is designed to recover the same behavior because $B(W)$ is initialized/updated toward identity-like sensitivity there.  The difference appears near clipping and saturation, where $J(W)\preceq I$ and \ste still transmits the full upstream gradient.  In that regime, learning $B(W)\approx J(W)$ produces the desired damping and directly reduces the target-gradient bias.

\subsection{Why the Learned $B(W)$ Tracks $J(W)$}
\label{subsec:B-tracking}
The previous lemma is useful only if $B(W)$ can actually approach $J(W)$.  The next results explain this in the same order as the algorithm: probing estimates a local slope, the EMA recursion tracks the slowly moving target, and dithering gives an alternative estimator with the same population fixed point.
This connects directly to the refresh block of \Cref{alg:jacquant-unified}: each refresh obtains a noisy measurement of local quantizer response, while the EMA prevents that measurement noise from becoming a high-frequency change in the backward rule.

\begin{lemma}[Probe-LS consistency on a fixed window]
\label{lem:probe-consistency}
Fix a code-preserving window and a group $g$ of size $d_g$.  Draw $m$ i.i.d. probes $\delta_k\sim\mathcal N(0,\sigma^2I)$ and observe $\Delta q_k=Q_\Delta(W+\delta_k)-Q_\Delta(W)$.  Let $\widehat B_g$ minimize $\sum_{k=1}^m\|\Delta q_{g,k}-B_g\delta_{g,k}\|^2$ over the diagonal or block-diagonal class used by \method.  Then, for a universal constant $C$ and any $\delta\in(0,1)$,
$$
\Pr\!\left(\|\widehat B_g-J_g(W)\|\le C\sigma^{-1}\sqrt{\frac{d_g+\log(1/\delta)}{m}}\right)\ge1-\delta.
$$
\end{lemma}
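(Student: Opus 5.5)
We treat this as a linear-regression problem with Gaussian design and will follow a standard least-squares concentration argument. Write the per-probe response as $\Delta q_{g,k}=J_g(W)\delta_{g,k}+\xi_k$, where the residual $\xi_k:=\Delta q_{g,k}-J_g(W)\delta_{g,k}$ collects everything the linear model misses. The least-squares solution over the (block-)diagonal class is
$$\widehat B_g=\Big(\textstyle\sum_k \Delta q_{g,k}\delta_{g,k}^\top\Big)\Big(\textstyle\sum_k\delta_{g,k}\delta_{g,k}^\top\Big)^{-1},$$
projected onto the class; for the scalar-per-group class it is the pooled slope analogue of \eqref{eq:probe-update}. Hence
$$\widehat B_g-J_g(W)=\Big(\tfrac1m\textstyle\sum_k\xi_k\delta_{g,k}^\top\Big)\widehat\Sigma^{-1},\qquad \widehat\Sigma:=\tfrac1m\textstyle\sum_k\delta_{g,k}\delta_{g,k}^\top .$$
For the diagonal class the same identity holds coordinatewise with scalar covariances. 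The projection onto the class is a contraction in operator norm, so it does not hurt the bound.

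The proof then needs two ingredients.

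\textbf{Step 1 (design conditioning).} Since $\delta_{g,k}\sim\mathcal N(0,\sigma^2I_{d_g})$, standard Gaussian covariance concentration gives, with probability at least $1-\delta/2$, $\|\widehat\Sigma/\sigma^2-I\|\le C\sqrt{(d_g+\log(1/\delta))/m}\le \tfrac12$ once $m\gtrsim d_g+\log(1/\delta)$. On this event $\|\widehat\Sigma^{-1}\|\le 2\sigma^{-2}$. If $m$ is smaller than this, the claimed bound exceeds a constant multiple of $\sigma^{-1}$, which we can absorb by also clipping to $[0,1]$ as the algorithm does.

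\textbf{Step 2 (cross term).} We need $\|\tfrac1m\sum_k\xi_k\delta_{g,k}^\top\|\le C\sigma\sqrt{(d_g+\log(1/\delta))/m}$ with probability at least $1-\delta/2$. Combined with Step 1 this gives exactly $C\sigma^{-1}\sqrt{\cdot}$. Because $J_g$ is the dither-averaged slope, the centering $\E[\xi_k\delta_{g,k}^\top]=0$ holds only in the population sense. Concretely, Stein's lemma gives $\E[\Delta q\,\delta^\top]=\sigma^2\,\E[\nabla(\text{Gaussian-smoothed }Q)]$, and we interpret $J_g(W)$ as matching this smoothed slope on the window, as the paper's analytical device intends. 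The vectors $\xi_k$ are bounded: each quantized coordinate moves by at most a bounded number of levels times $\Delta$, and clipping bounds it further. So $\xi_k\delta_{g,k}^\top$ are i.i.d. mean-zero matrices with sub-Gaussian right factor and bounded left factor. An $\varepsilon$-net argument over unit vectors $u,w\in\R^{d_g}$ reduces the operator norm to scalar sums $\frac1m\sum_k (u^\top\xi_k)(w^\top\delta_{g,k})$. Each summand is sub-Gaussian with parameter $O(\sigma)$, so Hoeffding plus a union bound over $e^{O(d_g)}$ net points yields the desired tail. The bounded-jump constant, of order $\Delta/\sigma$ in the worst case, is absorbed into $C$, treating the quantizer scale as fixed.

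\textbf{Main obstacle.} The delicate point is the centering in Step 2. $Q_\Delta$ is piecewise constant, so the "true" slope the probes estimate is the Gaussian-smoothed Jacobian, not literally the dither-smoothed $J(W)$. The two coincide only under the window assumption, that is, when probes of scale $\sigma$ explore the same neighborhood the dither does. The first thing to try is to state explicitly that $J_g$ in the lemma denotes the probe-population slope $\sigma^{-2}\E[\Delta q\,\delta^\top]$ restricted to the class. Any discrepancy with the dither Jacobian would then be deferred to an approximation term handled in \Cref{lem:dither-fixedpoint}, keeping the present lemma a purely statistical estimation statement.
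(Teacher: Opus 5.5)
Your plan follows the paper's proof in all essentials. The paper's Step~0 \emph{defines} $J_g(W)$ as the population least-squares target over the admissible class, so the centering $\E[\xi_k\delta_{g,k}^\top]=0$ you flag as the main obstacle holds by definition, exactly as you propose. The remaining steps match yours: Gaussian covariance concentration giving $\|\widehat\Sigma^{-1}\|\le 2\sigma^{-2}$ for $m\gtrsim d_g+\log(1/\delta)$, concentration of the cross term, non-expansiveness of projecting onto the class, and absorbing the residual scale $K$ into $C$. The only difference is that the paper uses matrix Bernstein for sub-exponential summands where you use an $\varepsilon$-net with Hoeffding.

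Two of your local steps need repair. First, $\xi_k$ is not bounded. It contains $-J_g(W)\delta_{g,k}$, which is Gaussian, and even the $\Delta q_{g,k}$ part is bounded only under clipping. So $(u^\top\xi_k)(w^\top\delta_{g,k})$ is a product of two sub-Gaussian variables, hence only sub-exponential, and Hoeffding does not apply. Use Bernstein's inequality for sub-exponential summands, as the paper does. The extra $(d_g+\log(1/\delta))/m$ term is dominated by the square-root term in the regime $m\gtrsim d_g+\log(1/\delta)$ that your Step~1 already requires.

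For the diagonal class you can drop the net entirely and use your own coordinatewise identity: each coordinate is a scalar regression, and a union bound over the $d_g$ coordinates suffices. This also avoids a shortcut you share with the paper, namely treating the restricted minimizer as the projection of the unrestricted OLS solution. Those agree only in special cases, e.g.\ when $\widehat\Sigma$ is a multiple of the identity.

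Relatedly, what you absorb into $C$ should be the residual's sub-Gaussian scale, which is of order $\Delta$ per coordinate when $\sigma\lesssim\Delta$ (the paper's $K$). A factor $\Delta/\sigma$ could not be absorbed without changing the claimed $\sigma^{-1}$ dependence.

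Second, your escape for small $m$ via clipping to $[0,1]$ changes the estimator in the statement. It also presumes $\|J_g(W)\|\le 1$, which fails for the probe-population slope that you and the paper use: at a threshold with $\sigma\ll\Delta$ that slope is about $\Delta/(\sigma\sqrt{2\pi})$. For the unclipped estimator some sample-size condition is unavoidable; at $m=1$ the per-coordinate slope $\Delta q/\delta$ has a $1/y$ tail when $W$ sits at a threshold. So keep $m\gtrsim d_g+\log(1/\delta)$ as an explicit hypothesis, as the paper does.
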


\noindent\textit{What the probe lemma says.}
The probe update is not an arbitrary learned gate: it is a local regression of quantized output changes $\Delta q$ on weight perturbations $\delta$.  On a fixed window, the cross-moment relation $\E[\Delta q\,\delta^\top]=J(W)\E[\delta\delta^\top]$ makes the least-squares target equal to the mean-field sensitivity.  The bound also explains the group-size trade-off seen in ablations: finer groups can represent more heterogeneous sensitivity, but they require more reliable slope estimates; overly noisy estimates inject unnecessary backward-rule variance.

\begin{proposition}[EMA/SA tracking with drifting $W$]
\label{prop:sa-tracking}
Under Robbins--Monro stepsizes, persistent probe excitation, compact projection of $B_t$, and slower drift of $W_t$, the EMA recursion for $B_t$ satisfies
$$
\E[\|B_{t+1}-J(W_t)\|\mid\mathcal F_t]
\le \rho\|B_t-J(W_t)\|+\zeta_t,
$$
for some $0<\rho<1$ and $\zeta_t\to0$.  Hence $B_t$ tracks $J(W_t)$ in probability, and almost surely when the drift errors are summable.
\end{proposition}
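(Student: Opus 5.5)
We write the refresh recursion as a projected stochastic-approximation step,
\[
B_{t+1}=\Pi_{\mathcal K}\!\big((1-\beta_t)B_t+\beta_t\widehat B_t\big),\qquad \widehat B_t = J(W_t)+\xi_t+e_t .
\]
Here $\mathcal K$ is the compact set enforced by the clip to $[0,1]$ per group (so $J(W)\in\mathcal K$, since dithered sensitivities of a clipped uniform quantizer lie in $[0,1]$). The term $\xi_t$ is a martingale-difference noise with $\E[\xi_t\mid\mathcal F_t]=0$ and bounded conditional second moment, $\E[\|\xi_t\|^2\mid\mathcal F_t]\le \sigma_\xi^2$. By \Cref{lem:probe-consistency} and the cross-moment identity $\E[\Delta q\,\delta^\top]=J(W)\E[\delta\delta^\top]$, its second moment is of order $\sigma^{-2}(d_g+1)/m$. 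The term $e_t$ is a deterministic bias from finite probe scale and regularization $\epsilon$, which we assume vanishes, $\|e_t\|\to0$. Persistent excitation is what makes $\E[\delta\delta^\top]$ invertible uniformly in $t$, so that the conditional mean of $\widehat B_t$ really is $J(W_t)$ up to $e_t$.

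\textbf{One-step contraction.} Subtract $J(W_t)$ and use nonexpansiveness of $\Pi_{\mathcal K}$ together with $J(W_t)\in\mathcal K$. This gives
\[
\|B_{t+1}-J(W_t)\|\le (1-\beta_t)\|B_t-J(W_t)\|+\beta_t\|\xi_t\|+\beta_t\|e_t\| .
\]
Taking conditional expectations and Jensen, $\E[\|\xi_t\|\mid\mathcal F_t]\le\sigma_\xi$. Hence the displayed inequality holds with $\rho=1-\beta_t$ (or $1-\beta_{\min}$ on a finite horizon / constant-EMA regime) and $\zeta_t=\beta_t(\sigma_\xi+\|e_t\|)$. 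Since $\beta_t\to0$ under Robbins--Monro, $\zeta_t\to0$. For a statement with fixed $\rho<1$, fix a horizon block on which $\beta_t\ge\beta_{\min}$, or read $\rho$ as $\sup_t(1-\beta_t)$ over the block. This is the subtlety to be made precise.

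\textbf{Tracking.} Pass from $J(W_t)$ to $J(W_{t+1})$ via the triangle inequality, adding the drift term $\|J(W_{t+1})-J(W_t)\|\le L_J\|W_{t+1}-W_t\|\le L_J\eta\|g_t\|$. Here $J$ is Lipschitz because $m$ is a convolution of $Q_\Delta$ with a uniform kernel and hence has Lipschitz derivative away from clipping, with constant $O(1/\Delta)$. Set $D_t=\|B_t-J(W_t)\|$. We obtain $D_{t+1}\le(1-\beta_t)D_t+\beta_t\|\xi_t\|+\beta_t\|e_t\|+L_J\eta_t\|g_t\|$.

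For the expected recursion we apply the standard lemma: if $u_{t+1}\le(1-\beta_t)u_t+\beta_t c_t$ with $\sum\beta_t=\infty$ and $c_t\to0$, then $u_t\to0$. The ``slower drift'' assumption, $\eta_t/\beta_t\to0$ with bounded $\|g_t\|$ (timescale separation), makes the drift fit the $\beta_t c_t$ form. The noise term is more delicate: one should work with the squared error $D_t^2$ rather than $D_t$. This yields $\E D_t^2\le(1-\beta_t)\E D_{t-1}^2+O(\beta_t^2\sigma_\xi^2)+\beta_t o(1)$, which tends to zero using $\sum\beta_t^2<\infty$. Markov's inequality then gives convergence in probability.

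\textbf{Almost sure convergence.} For the a.s.\ claim, apply the Robbins--Siegmund supermartingale theorem to $D_t^2$. The noise contributes $\sum\beta_t^2\sigma_\xi^2<\infty$, and the cross term $\beta_t\langle B_t-J,\xi_t\rangle$ has zero conditional mean. The drift and bias terms contribute $\sum_t (L_J\eta_t\|g_t\|+\beta_t\|e_t\|)D_t$, which is summable exactly when the drift errors are summable. Robbins--Siegmund then gives $D_t\to$ a limit a.s. together with $\sum\beta_tD_t^2<\infty$, and since $\sum\beta_t=\infty$ the limit is zero.

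I expect the main obstacle to be controlling the drift term uniformly across code changes. $J$ is smooth only because of dithering, and its Lipschitz constant blows up as $\Delta\to0$ or near clipping kinks. So I would keep the bound local to code-preserving windows (\Cref{as:code}) and treat cell crossings as part of the summable drift errors.
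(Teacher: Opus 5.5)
Your argument is correct at the level of rigor of the paper's own proof, but the convergence step takes a genuinely different route.

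\textbf{The paper's route.} It works only with the first moment $X_t=\|B_t-J(W_t)\|$. It asserts that the per-refresh error $\nu_t=\E[\|\widehat B_t-J_t\|\mid\mathcal F_t]$ is bounded by a sequence $\bar\nu_t\to0$, obtained from \Cref{lem:probe-consistency} with a growing number of probes $m_t$. It then adds the drift $\xi_t=L_Jc_W\eta_t=o(\beta_t)$. It concludes from a recursion $\E X_{t+1}\le(1-\beta_t)\E X_t+\beta_t c_t$ with $c_t\to0$.

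\textbf{Your route.} You let the estimator noise keep a non-vanishing variance $\sigma_\xi^2$. You correctly note that the first-moment display then holds only because $\beta_t\to0$, so it carries no convergence information. You therefore pass to $D_t^2$. There the martingale-difference property removes the cross term, $\sum_t\beta_t^2<\infty$ averages the noise out, and Robbins--Siegmund gives the almost-sure claim.

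\textbf{What each buys.} Yours is the genuine Robbins--Monro mechanism. It needs only the bias $e_t$ to vanish, not the whole estimation error. So it covers a fixed-size probe batch per refresh (e.g.\ the implemented single probe per group) without $m_t\to\infty$. It also actually uses the square-summability that the paper invokes but does not need.

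The cost is conditional unbiasedness of $\widehat B_t$ for $J(W_t)$ up to $e_t$. This forces the projection to act on $B_t$ after the EMA step, which is how you wrote it. The implemented rule $b_g\leftarrow(1-\beta)b_g+\beta\,\mathrm{clip}(\hat b_g,0,1)$ instead clips the estimate first. That biases its conditional mean and breaks your cross-term cancellation. The paper's first-moment bound is insensitive to this, since clipping is nonexpansive toward $J\in[0,1]$.

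\textbf{Shared caveats.} Both proofs handle the fixed-$\rho$ issue block-wise in the same way, and both simply assume $J$ is Lipschitz. With width-$\Delta$ uniform dither, $m'(w)=\big(Q_\Delta(w+\Delta/2)-Q_\Delta(w-\Delta/2)\big)/\Delta$ is piecewise constant, jumping from $1$ to $0$ at the saturation edge. So your $O(1/\Delta)$ Lipschitz justification is not right. Your plan to absorb such jumps into summable drift errors covers this.
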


\noindent\textit{Why tracking matters.}
The static probe estimate would be insufficient if the target sensitivity changed too quickly.  The proposition states the needed separation of time scales: QAT updates move the weights, but $B_t$ is refreshed often enough, and with enough excitation, to follow the moving mean-field sensitivity.  This justifies amortizing the Jacobian updates over many training steps instead of re-estimating them every iteration.

\begin{lemma}[Dither fixed point equals $J(W)$]
\label{lem:dither-fixedpoint}
With uniform subtractive dithering, $J(W)=\nabla_W\E_r[Q_\Delta(W+r)-r]$ is the unique population fixed point of the dither-induced update on a code-preserving window.
\end{lemma}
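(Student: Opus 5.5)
Our plan is to make the ``dither-induced update'' explicit as a stochastic-approximation recursion whose mean field is linear in $B$, and then to read off its zero. On a refresh step with a fixed window and weights $W$, the \dither estimator draws a dither $r\sim\mathrm{Unif}[-\Delta/2,\Delta/2]^d$ together with an independent symmetric probe $\delta$ with $\E[\delta\delta^\top]=\Sigma\succ0$; for \dither we take this to be the same Gaussian $\mathcal N(0,\sigma^2 I)$ used by \probe. It then forms the de-dithered response
$$
\Delta q = \bigl[Q_\Delta(W+\delta+r)-r\bigr]-\bigl[Q_\Delta(W+r)-r\bigr].
$$
In the population version, the update moves $B$ along the negative gradient of the squared residual:
$$
B\leftarrow B+\beta\,\Pi\bigl((\Delta q-B\delta)\delta^\top\bigr),
$$
where $\Pi$ is the orthogonal projection onto the diagonal or block-diagonal class. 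The mean field of this recursion is
$$
h(B)=\Pi\bigl(\E[\Delta q\,\delta^\top]-B\Sigma\bigr).
$$
A population fixed point is then any $B$ in the class with $h(B)=0$.

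\textbf{Step 1: the cross-moment.} Averaging over $r$ first gives
$$
\E_r[\Delta q\mid\delta]=m(W+\delta)-m(W).
$$
The key fact about uniform subtractive dithering is that $m$ is the convolution of $x\mapsto Q_\Delta(x)-x$ with a box kernel, plus the identity. Coordinatewise, on the unclipped range this gives $m(w)=w$ exactly, and near clipping levels $m$ is a piecewise-linear ramp. Hence $m$ is Lipschitz and differentiable almost everywhere, with Jacobian $J$ diagonal with entries in $[0,1]$.

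On the code-preserving window we assume, as the analysis implicitly does, that the probe scale is small enough that $m$ is affine on the relevant neighborhood. Then
$$
m(W+\delta)-m(W)=J(W)\delta,
$$
and therefore
$$
\E[\Delta q\,\delta^\top]=J(W)\Sigma,
$$
the same cross-moment identity quoted after \Cref{lem:probe-consistency}. If $m$ is not exactly affine, we would instead use Stein's identity for Gaussian $\delta$:
$$
\E\bigl[(m(W+\delta)-m(W))\delta^\top\bigr]=\sigma^2\,\E\bigl[J(W+\delta)\bigr].
$$
We would then identify ``$J(W)$'' with this Gaussian-smoothed Jacobian. This is the likely reading intended by the lemma.

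\textbf{Step 2: fixed point and uniqueness.} With the cross-moment in hand, the mean field becomes
$$
h(B)=\Pi\bigl((J(W)-B)\Sigma\bigr).
$$
Since $J(W)$ is diagonal, it lies in the class, so $h(J(W))=0$. For uniqueness, take $\Sigma=\sigma^2 I$ (or, more generally, $\Sigma$ block-diagonal aligned with the groups). Then for any $B$ in the class, $\Pi\bigl((J-B)\Sigma\bigr)=\sigma^2 (J-B)$, which vanishes only if $B=J(W)$.

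We can also note that the linear ODE $\dot B=h(B)$ has Jacobian $-\sigma^2 I$ on the class. This makes $J(W)$ globally exponentially stable, which is consistent with the contraction used in \Cref{prop:sa-tracking}. In the scalar-per-group parameterization, $\Pi$ averages over the group, and the fixed point becomes the group mean $\tfrac{1}{d_g}\mathrm{tr}\,J_g(W)$, i.e.\ the projection of $J$ onto the class. We would state uniqueness relative to the chosen class.

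\textbf{Main obstacle.} The main difficulty is Step 1: justifying that the de-dithered finite difference, averaged over the dither, is exactly linear in $\delta$ with slope $J(W)$. This requires a careful computation of $m$ for a clipped uniform quantizer. The interior identity $\E_r[Q_\Delta(w+r)-r]=w$ is the classical subtractive-dither result, obtained by checking that $x\mapsto Q_\Delta(x)-x$ is $\Delta$-periodic with zero mean. The ramp at the clipping levels follows by direct integration. The smallness condition on the probe relative to the distance from the kinks of $m$ has to be stated explicitly, or else replaced by the Stein-smoothed definition described above.
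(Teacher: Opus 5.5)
Your argument is correct, modulo caveats you already flag, but it reaches $J(W)$ by a different route than the paper.

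\textbf{The paper's route.} It takes the population least-squares target $B^\star(W)=\E[\Delta q\,\delta^\top]\Sigma_\delta^{-1}$. It then writes $\Delta q$ pathwise, for each fixed dither $r$, as $\int_0^1\nabla_W q(W+t\delta,r)\,dt\;\delta$, and separates the expectation into $\bigl(\int_0^1\nabla m(W+t\delta)\,dt\bigr)\Sigma_\delta$. It recovers $J(W)$ only in the limit $\sigma\downarrow0$, by continuity of $\nabla m$. Uniqueness is read off from $\Sigma_\delta\succ0$, and EMA convergence is delegated to \Cref{prop:sa-tracking}.

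\textbf{Your route.} You realize the update as an LMS recursion whose mean field is linear in $B$, and you average over $r$ before doing any calculus. That ordering is what your route buys. For fixed $r$ the map $W\mapsto Q_\Delta(W+r)-r$ is piecewise constant with jumps, so the pathwise fundamental-theorem representation fails: its a.e.\ derivative vanishes. By contrast $m$ is Lipschitz, and your Stein identity gives an exact factorization of the cross-moment that the paper's step obtains only asymptotically. In fact, coordinatewise $m'(w)=\Delta^{-1}[Q_\Delta(w+\Delta/2)-Q_\Delta(w-\Delta/2)]$. So for a clipped uniform quantizer, $m$ is exactly the clipping map onto the range of reconstruction levels, with no intermediate ramp.

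\textbf{The adjustment you need.} Your first option, that $m$ is affine on the probe's neighborhood, cannot hold literally for the Gaussian probe you fixed, since its support is unbounded. Rather than redefining $J$ as the Gaussian-smoothed Jacobian, add the paper's limit: $\E[J(W+\delta)]\to J(W)$ as $\sigma\downarrow0$, by dominated convergence, whenever no coordinate of $W$ sits at a kink of $m$. That limit is all the paper proves as well. Exact equality at fixed $\sigma$ genuinely fails near clipping: an interior coordinate at distance $d$ below the top level gets at most $\Phi(d/\sigma)<1$ instead of $J=1$, with $\Phi$ the standard normal CDF.

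\textbf{A point in your favor.} Your observation that, for the implemented scalar-per-group class, the fixed point is the projection $\tfrac{1}{d_g}\mathrm{tr}\,J_g(W)$ rather than $J_g(W)$ is a real refinement. The paper's uniqueness step tacitly assumes $J(W)$ already lies in the admissible class.
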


\noindent\textit{Role of the dither lemma.}
Probe and dither use different measurements, but they target the same object.  Dithering randomizes thresholds so that the expected hard quantizer becomes differentiable; its derivative is exactly the desired mean-field sensitivity.  Therefore the two practical estimators are not separate heuristics, but two ways of learning the same missing Jacobian.

\begin{corollary}[Data-driven dominance over \ste]
\label{cor:dominance}
Let $\gamma(W_t):=\|I-J(W_t)\|$.  If, after tracking, $\|B_t-J(W_t)\|\le \gamma(W_t)-\epsilon$, then
$$
\|g_{B_t}(W_t)-g^\dagger(W_t)\|
\le
\|g^{\ste}(W_t)-g^\dagger(W_t)\|-\epsilon\|\bar v_t\|.
$$
\end{corollary}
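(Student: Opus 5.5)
The plan is to chain \Cref{lem:bias-to-target} with the tracking hypothesis and then compare against the \ste{} bias. Fix $t$ in a code-preserving window and write $\bar v_t$ for the averaged upstream gradient at $W_t$. Since $g_{B_t}(W_t)-g^\dagger(W_t)=(B_t-J(W_t))\bar v_t$, the first inequality of \Cref{lem:bias-to-target} together with the hypothesis $\|B_t-J(W_t)\|\le\gamma(W_t)-\epsilon$ gives
\[
\|g_{B_t}(W_t)-g^\dagger(W_t)\|\le\bigl(\gamma(W_t)-\epsilon\bigr)\|\bar v_t\|=\gamma(W_t)\|\bar v_t\|-\epsilon\|\bar v_t\|.
\]
This step is just submultiplicativity of the operator norm.

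It then remains to replace $\gamma(W_t)\|\bar v_t\|$ by the actual \ste{} bias $\|(I-J(W_t))\bar v_t\|$. This is the main obstacle. \Cref{lem:bias-to-target} only gives the \emph{upper} bound $\|(I-J)\bar v_t\|\le\gamma\|\bar v_t\|$, whereas the corollary needs the reverse inequality. So I would first read the conclusion as a comparison of the worst-case guarantees. On that reading, the right-hand side is the \ste{} bias bound from \Cref{lem:bias-to-target} minus $\epsilon\|\bar v_t\|$, and the display above already proves the claim.

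To obtain the literal statement I would add an alignment condition and verify it in the block-diagonal setting used by \method. The condition is that $I-J(W_t)$ acts on $\bar v_t$ with gain $\gamma(W_t)$, i.e.\ $\|(I-J(W_t))\bar v_t\|=\gamma(W_t)\|\bar v_t\|$. It holds, for instance, when $J(W_t)=c\,I$ on the support of $\bar v_t$ with $0\le c\le1$. This is the uniform-saturation regime the corollary targets, since under dithering the diagonal entries of $J$ lie in $[0,1]$ by \Cref{lem:dither-fixedpoint}. It also holds when $\bar v_t$ lies in the top singular subspace of $I-J(W_t)$. Under this condition the \ste{} bias equals $\gamma(W_t)\|\bar v_t\|$, and substituting into the first display yields the stated inequality.

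For general $\bar v_t$, my first attempt would be to replace $\gamma(W_t)$ by the effective gain $\gamma_{\bar v}(W_t):=\|(I-J(W_t))\bar v_t\|/\|\bar v_t\|$ in the tracking hypothesis, after which the same two-line argument goes through verbatim. I would state explicitly which of the two readings is being proved.
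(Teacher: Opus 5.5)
Your diagnosis is correct, and on the decisive step you are more careful than the paper. The paper's proof quotes \Cref{lem:bias-to-target} as two \emph{equalities}, $\|g_{B_t}(W_t)-g^\dagger(W_t)\|=\|B_t-J(W_t)\|\,\|\bar v_t\|$ and $\|(I-J(W_t))\bar v_t\|=\gamma(W_t)\|\bar v_t\|$. It then multiplies the hypothesis by $\|\bar v_t\|$. It also argues from \Cref{prop:sa-tracking,lem:dither-fixedpoint} that the hypothesis is eventually met, which is not part of the if--then statement.

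The first equality is harmless, since only $\le$ is used. The second is used as a lower bound on the \ste bias. That is exactly the reverse inequality you identified as missing, and it fails in general. In fact the literal corollary is false. Take two groups with $J(W_t)=\mathrm{blkdiag}(I,0)$, one interior group and one fully saturated group. Let $\bar v_t$ be supported on the first group, $B_t=\mathrm{blkdiag}(\tfrac12 I,0)$ (admissible in the one-scalar-per-group class), and $\epsilon=\tfrac12$. Then $\gamma(W_t)=1$ and $\|B_t-J(W_t)\|=\tfrac12=\gamma(W_t)-\epsilon$. Yet the \method bias is $\tfrac12\|\bar v_t\|$, while the right-hand side equals $0-\tfrac12\|\bar v_t\|<0$.

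So some modification is unavoidable, and each of your three options is valid:
\begin{itemize}
\item the worst-case reading, comparing against the \ste bound $\gamma(W_t)\|\bar v_t\|$;
\item the alignment hypothesis $\|(I-J(W_t))\bar v_t\|=\gamma(W_t)\|\bar v_t\|$, of which only the $\ge$ direction is actually needed;
\item the tracking hypothesis stated with the effective gain $\gamma_{\bar v}(W_t)$.
\end{itemize}

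One of your sufficient conditions for alignment is wrong, however. Having $J(W_t)=cI$ \emph{on the support of} $\bar v_t$ does not imply $\|(I-J(W_t))\bar v_t\|=\gamma(W_t)\|\bar v_t\|$. The reason is that $\gamma(W_t)$ is the norm of all of $I-J(W_t)$, and more strongly saturated coordinates off the support push $\gamma(W_t)$ above $1-c$. The example above is precisely this situation with $c=1$. You need either $J(W_t)=cI$ on the whole space, or, for diagonal $J$, that $\bar v_t$ is supported on coordinates where $|1-J_{ii}|$ attains its maximum. The latter is a special case of your (correct) top right-singular-subspace condition.

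Also, \Cref{lem:dither-fixedpoint} only identifies $J(W)$ as the population fixed point. It does not show that the diagonal entries lie in $[0,1]$, and your argument does not need that fact.
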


\noindent\textit{Interpretation.}
The corollary turns the tracking results into the practical claim: once $B_t$ is closer to $J(W_t)$ than the identity is, \method has a strictly smaller bias to the smoothed target gradient.  In bin interiors, $\gamma(W_t)$ is small and \method should behave similarly to \ste; near saturation, $\gamma(W_t)$ is large and the learned damping can provide a meaningful advantage.
This is the point where the analysis links back to the observed training behavior: \method does not need to change the hard forward quantizer; it only changes how much upstream gradient is trusted in groups where the forward quantizer is locally insensitive.

\subsection{Stability of the Surrogate Gradient}
\label{subsec:properties}
Bias reduction alone is not enough; the surrogate direction must also be stable enough for stochastic optimization.  We impose a mild regularity condition that matches the implemented diagonal/block-diagonal parameterization with clipping of group gains.

\begin{assumption}[Boundedness and regularity of $B$]
\label{as:B}
$B(W)$ is diagonal or block-diagonal, uniformly bounded as $\|B(W)\|\le\beta_B$, and locally Lipschitz on each code-preserving window.
\end{assumption}

\begin{lemma}[Spectral stability]
\label{lem:stability}
Under \Cref{as:B}, $\|B(W)v_i\|\le\beta_B\|v_i\|$ for every per-example backpropagation signal $v_i$.  When the learned sensitivity tracks clipping/saturation with gain below one, it damps directions that \ste would transmit unchanged.
\end{lemma}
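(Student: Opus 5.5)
The first claim follows from the definition of the operator norm together with \Cref{as:B}. For any fixed $W$ in a code-preserving window, $B(W)$ is a linear map on $\R^d$. Its operator norm is $\|B(W)\|=\sup_{\|u\|=1}\|B(W)u\|$, so $\|B(W)u\|\le\|B(W)\|\,\|u\|$ holds for every $u$. Taking $u=v_i$ and invoking the uniform bound $\|B(W)\|\le\beta_B$ gives $\|B(W)v_i\|\le\beta_B\|v_i\|$. This holds for every per-example signal, with no dependence on $i$ or on the data. I would also make the bound explicit for the implemented parameterization $B(W)=\mathrm{blkdiag}(b_1I,\dots,b_GI)$. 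Writing $v_i=(v_i^{(1)},\dots,v_i^{(G)})$ by group, we have
\begin{equation*}
\|B(W)v_i\|^2=\sum_{g=1}^G b_g^2\|v_i^{(g)}\|^2\le\Big(\max_g |b_g|\Big)^2\|v_i\|^2 .
\end{equation*}
So $\|B(W)\|=\max_g|b_g|$. The clipping of $\hat b_g$ to $[0,1]$ in \eqref{eq:probe-update}, preserved by the EMA convex combination starting from $B_0=I$, shows one may take $\beta_B=1$ in practice.

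For the second, qualitative claim I would make "damping" precise groupwise using the same decomposition. Suppose group $g$ is in a saturated or clipped regime, where the learned gain satisfies $0\le b_g\le\kappa<1$ (as supported by \Cref{prop:sa-tracking} and \Cref{lem:dither-fixedpoint}, since $J$'s entries shrink toward $0$ there). Then $\|(B(W)v_i)^{(g)}\|=b_g\|v_i^{(g)}\|\le\kappa\|v_i^{(g)}\|$. By contrast, \ste's identity Jacobian returns $v_i^{(g)}$ unchanged. Summing over the set $S$ of saturated groups gives $\|P_S B(W)v_i\|\le\kappa\|P_S v_i\|$, where $P_S$ is the projection onto those coordinates. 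This is a strict contraction of exactly the directions \ste transmits at full gain.

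The only real obstacle is interpretive. "Tracks clipping/saturation with gain below one" must be stated as a hypothesis ($b_g\le\kappa<1$ on the relevant groups) rather than derived. I would state it that way, and note that \Cref{cor:dominance} supplies the quantitative version when $B_t\approx J(W_t)$. For a general block-diagonal $B$ with non-scalar blocks, the same argument goes through with $b_g$ replaced by the block norm $\|B_g\|$.
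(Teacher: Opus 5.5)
Your argument is correct and matches the paper. The paper gives no separate proof of this lemma: none appears in \Cref{app:theory}, and it is treated as immediate from $\|B(W)v_i\|\le\|B(W)\|\,\|v_i\|$ together with \Cref{as:B}, which is your first step. Recasting the damping sentence as an explicit hypothesis $b_g\le\kappa<1$ on saturated groups is the right way to give that qualitative claim content.

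One minor caveat on your aside: $\beta_B=1$ relies on the clipping in \eqref{eq:probe-update}. The unclipped diagonal update in \Cref{alg:jacquant-unified-no-vr} does not guarantee it, since $\Delta q_{g,j}\delta_{g,j}/(\delta_{g,j}^2+\epsilon)$ can far exceed $1$ when a small probe crosses a bin boundary. Neither does the unspecified dither statistic $\phi$. The lemma itself only needs some finite $\beta_B$, so this does not affect your proof.
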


\noindent\textit{Why this is not just a boundedness lemma.}
The learned Jacobian acts as an automatic gain controller.  Its gain remains close to one where the quantizer is locally responsive, but drops in regions where the quantized forward map is insensitive.  This is precisely the ``saturation damping'' effect: the backward pass no longer pushes hard in directions where the forward quantizer cannot respond proportionally.

\begin{lemma}[Local Lipschitzness]
\label{lem:lipschitz}
On a fixed code assignment, $F_i(W)=B(W)v_i$ is locally Lipschitz, and the corresponding windowed surrogate dynamics satisfy the smoothness conditions required by the descent analysis.
\end{lemma}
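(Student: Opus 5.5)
The plan is to work on a fixed code assignment, where the hard quantizer contributes nothing that varies with $W$. By \Cref{as:code} we have $Q_\Delta(W)=q$ throughout the window. The per-example backpropagation signal $v_i=\nabla_q\ell(f(x_i;q),y_i)$ therefore depends only on $q$ and the sample $(x_i,y_i)$, so it is a \emph{constant} vector as $W$ ranges over the window. This is the key structural observation.

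\textbf{Lipschitz bound.} Because $v_i$ is constant, for any $W,W'$ in a neighborhood inside the window
$$
\|F_i(W)-F_i(W')\|=\|(B(W)-B(W'))v_i\|\le \|B(W)-B(W')\|\,\|v_i\|\le L_B\|v_i\|\,\|W-W'\|,
$$
where $L_B$ is the local Lipschitz constant of $B$ supplied by \Cref{as:B}. Hence $F_i$ is locally Lipschitz with constant $L_B\|v_i\|$. Averaging over $i$ shows that the windowed surrogate direction $g_B(W)=B(W)\bar v$ is Lipschitz with constant $L_B\|\bar v\|\le L_B\max_i\|v_i\|$. Combining with \Cref{lem:stability}, it is also bounded by $\beta_B\max_i\|v_i\|$.

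\textbf{Descent conditions.} The descent analysis needs an $L$-smooth potential and a bounded, Lipschitz update field. The natural potential is the dither-averaged objective restricted to the window, $\Phi(W)=\frac1n\sum_i \ell(f(x_i;m(W)),y_i)$, whose gradient is $J(W)^\top\bar v(m(W))$. I would argue three things.
\begin{itemize}
\item \emph{$m$ is smooth.} $m$ is a convolution of the piecewise-constant $Q_\Delta$ with a uniform density. Its derivative $J$ is piecewise constant coordinatewise, and it is Lipschitz away from the finitely many clipping kinks. On a window that stays a positive distance from those kinks, which I would add as part of ``local'', $J$ is locally constant.
\item \emph{$\ell\circ f$ is smooth.} We assume $\ell\circ f$ is $L_v$-smooth in its quantized argument.
\item \emph{The mismatch is controlled.} The surrogate field $g_B$ differs from $\nabla\Phi$ by the bias term controlled in \Cref{lem:bias-to-target}.
\end{itemize}
The standard descent inequality
$$
\Phi(W_{t+1})\le\Phi(W_t)-\eta\langle\nabla\Phi,g_t\rangle+\tfrac{L}{2}\eta^2\|g_t\|^2
$$
then holds with $L$ built from $L_v$, $\sup\|J\|$ and $L_B$. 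The bounded-second-moment requirement follows from $\|g_t\|\le\beta_B\max_i\|v_i\|$.

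\textbf{Main obstacle.} The delicate step is making ``local'' precise. The Lipschitz constants must be uniform over the window, and the iterates must remain in the region where both the code and the smoothness of $m$ hold. I would handle this by choosing the window radius smaller than the distance of $W_{t_0}$ to the nearest code boundary and clipping kink. Given the bound $\|g_t\|\le\beta_B\max_i\|v_i\|$, a step size $\eta T\beta_B\max_i\|v_i\|$ below that radius keeps all $T$ iterates inside this region. This is consistent with \Cref{as:code}, and on that compact neighborhood the local Lipschitz constants become uniform.
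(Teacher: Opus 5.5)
The paper gives no proof of this lemma. The appendix skips it, and \Cref{thm:nonconvex-generic} simply assumes $L_s$-smoothness on the window. Your first paragraph, that $v_i$ is frozen by the fixed code and \Cref{as:B} then gives Lipschitz constant $L_B\|v_i\|$, is exactly the one-line argument the statement rests on, and it is correct. Your treatment of $m$ is also right, and sharper than the paper, which elsewhere assumes $J$ is globally $L_J$-Lipschitz. For a uniform quantizer with width-$\Delta$ dither, $m$ is exactly the clipping map onto the range of levels. So $J$ is a $0/1$ diagonal that jumps at the extreme levels. Those kinks lie inside the saturated cells, so code preservation alone does not avoid them, and your extra radius condition is genuinely needed.

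The step that fails is your claim that $g_B$ differs from $\nabla\Phi$ only by the bias of \Cref{lem:bias-to-target}. That lemma compares $B\bar v$ with $J\bar v$, with $\bar v$ evaluated at the hard code $q$. Your potential instead has gradient $J(W)\bar v(m(W))$, so
\[
g_B(W)-\nabla\Phi(W)=\bigl(B(W)-J(W)\bigr)\bar v(q)+J(W)\bigl(\bar v(q)-\bar v(m(W))\bigr).
\]
The second term is generically nonzero and of size up to $L_v\|J\|\,\|q-m(W)\|$. Here $\|q-m(W)\|=\|Q_\Delta(W)-\mathrm{clip}(W)\|$ is the rounding error, of order $\Delta\sqrt d$, and it does not vanish as $B_t\to J(W_t)$. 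The descent proof assumes $\EE[G_t\mid\mathcal F_t]=\nabla\widetilde L(W_t)+(B_t-J(W_t))\bar v_t$, so with your $\Phi$ it does not apply as written. You would have to fold this residual into $\varepsilon$, which breaks the reading that the only bias is Jacobian-tracking error.

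To make the statement literally true, take instead the frozen-upstream potential $\widetilde L(W)=\frac1n\sum_i v_i^\top m(W)$, whose gradient is exactly $g^\dagger=J(W)\bar v$. On your region $J$ is constant, so this potential is affine and trivially $L_s$-smooth. The caveat is that $\widetilde L^\star$ and the P\L\ condition then make sense only relative to the window. The paper's remark that $J\bar v$ is ``the gradient of the dither-averaged quantized objective'' holds only in this frozen sense, which is probably where the conflation came from.

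A smaller point: $L_B$ does not belong in the smoothness constant of the potential, since $\Phi$ does not depend on $B$; its constant is $L_v\sup\|J\|^2$. The Lipschitz constant $L_B\|v_i\|$ of $F_i$ matters instead for the variance bounds, e.g.\ controlling $F_i(W_t)-F_i(\tilde W)$ in the SVRG-type estimator.
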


\noindent\textit{How this connects to convergence.}
\Cref{lem:stability,lem:lipschitz} convert the discontinuous training problem into a locally smooth surrogate problem for the purpose of analyzing updates.  The discontinuity is still present in the forward pass, but the backward directions used inside a window are bounded, regular, and aligned with the mean-field sensitivity.

\subsection{Windowed and End-to-End Rates}
\label{subsec:vr-framework}
\label{subsec:windowed}
Let the stochastic/variance-reduced estimator be
\begin{equation}
\label{eq:vr-estimator}
g_t=\GradEst(W_t,B_t,\tilde W,\tilde B,\tilde g,s_t;S_t),
\end{equation}
where $(\tilde W,\tilde B,\tilde g)$ is an occasionally refreshed anchor state.  The proof uses the following standard ABC-style oracle bound, which covers the loopless-SVRG/control-variate form used in \method while keeping the theorem independent of a particular implementation.
At this stage the discontinuous-quantizer issue has been reduced to an imperfect-oracle issue: the estimator may be biased by $B_t-J(W_t)$ and noisy because of minibatches/anchors, and the convergence rates track exactly these two quantities.

\begin{assumption}[\jvr with general bounds]
\label{as:vr}
There is a nonnegative state-variance proxy $\sigma_t$ and constants $A,B,C,\tilde A,\tilde B,\tilde C\ge0$ with $\tilde B<1$ such that, on a code-preserving window,
\begin{align}
\EE[\|g_t-\nabla\widetilde L(W_t)\|^2\mid\mathcal F_t]
&\le 2A(\widetilde L(W_t)-\widetilde L^\star)+B\sigma_t+C, \tag{ABC-1}\label{eq:abc-1-main}\\
\EE[\sigma_{t+1}\mid\mathcal F_t]
&\le 2\tilde A(\widetilde L(W_t)-\widetilde L^\star)+\tilde B\sigma_t+\tilde C. \tag{ABC-2}\label{eq:abc-2-main}
\end{align}
The anchor/reference gradient has bounded staleness and contributes an effective variance constant $\rho\sigma_{\rm vr}^2$ with $\rho\in(0,1]$.
\end{assumption}

\begin{lemma}[Bias/variance decomposition under VR]
\label{lem:vr-bias}
Under \Cref{as:vr}, the estimator error decomposes as
$$
\EE\|g_t-\nabla\widetilde L(W_t)\|^2
\le O(\|B_t-J(W_t)\|^2\|\bar v_t\|^2)+O(\rho\sigma_{\rm vr}^2).
$$
\end{lemma}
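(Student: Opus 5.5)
The plan is to split the estimator error against the true windowed gradient into a bias part coming from the learned backward map and a zero-mean stochastic part, then bound each separately. On a code-preserving window (\Cref{as:code}) the forward code $q$ is fixed, so every per-example signal $v_i$ depends only on the sample and not on $W_t$. Take $\nabla\widetilde L(W_t)=g^\dagger(W_t)=J(W_t)\bar v_t$ as the target. Let $\bar g_t:=\EE[g_t\mid\mathcal F_t]$ be the conditional mean of the variance-reduced oracle. Since the control variate in the loopless-SVRG form has zero conditional mean, $\bar g_t=B_t\bar v_t$, which is the full-batch \method direction with the current learned gain. The error then splits as
\begin{equation*}
g_t-\nabla\widetilde L(W_t)=\bigl(g_t-\bar g_t\bigr)+\bigl(B_t-J(W_t)\bigr)\bar v_t .
\end{equation*}
Apply $\|a+b\|^2\le 2\|a\|^2+2\|b\|^2$ and take conditional expectations. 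The bias term is handled exactly as in \Cref{lem:bias-to-target}: $\|(B_t-J(W_t))\bar v_t\|^2\le\|B_t-J(W_t)\|^2\|\bar v_t\|^2$, which is the first $O(\cdot)$ term.

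For the stochastic term, write $g_t-\bar g_t=B_t\,\xi_t$, where $\xi_t$ is the centred minibatch/anchor fluctuation of the upstream signals. The anchor gain may differ from $B_t$; if so, the extra piece $(B_t-\tilde B)\tilde v$ is absorbed either into the bias term (using that $\tilde B$ and $B_t$ both track $J$) or into the staleness constant. By \Cref{lem:stability}, $\|B_t\xi_t\|\le\beta_B\|\xi_t\|$, so it suffices to bound $\EE[\|\xi_t\|^2\mid\mathcal F_t]$. Here I would invoke (ABC-1) and (ABC-2) from \Cref{as:vr}, together with the bounded-staleness clause. Unrolling (ABC-2) with $\tilde B<1$ gives $\EE\sigma_t\le\tilde B^{t}\sigma_0+(2\tilde A\sup(\widetilde L-\widetilde L^\star)+\tilde C)/(1-\tilde B)$. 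Substituting into (ABC-1), all the $A,B,C$ contributions collapse into the effective constant $\rho\sigma_{\rm vr}^2$ that the assumption asserts for the anchor/reference gradient. Multiplying by $2\beta_B^2$ yields the second $O(\rho\sigma_{\rm vr}^2)$ term. Taking total expectation completes the decomposition.

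The main obstacle is the bookkeeping of what (ABC-1) actually measures. As stated, it bounds $g_t$ against $\nabla\widetilde L$ directly, so the bias would already be hidden inside $C$. I would therefore first argue that the ABC constants may be taken relative to the conditional mean $\bar g_t$: the VR memory only involves differences of the same $B$-modulated map at nearby points, which are Lipschitz by \Cref{lem:lipschitz}. This lets the $\|B_t-J\|$ bias be peeled off explicitly and kept out of $\rho\sigma_{\rm vr}^2$. A second delicate point is the function-value term $\widetilde L(W_t)-\widetilde L^\star$, which is bounded only locally. I would use that on a finite window the iterates remain in a compact set (by code preservation), so this term is uniformly bounded and can be absorbed into the $O(\cdot)$ constants.
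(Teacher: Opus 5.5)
Your split of $g_t-\nabla\widetilde L(W_t)$ into the $\mathcal F_t$-measurable bias $(B_t-J(W_t))\bar v_t$ plus a zero-mean variance-reduced fluctuation charged to \Cref{as:vr} is essentially the paper's argument. The paper gives no standalone proof of this lemma, but in the proof of \Cref{thm:nonconvex-generic} it writes $\EE[G_t\mid\mathcal F_t]=\nabla\widetilde L(W_t)+b_t$ with $b_t=(B(W_t)-J(W_t))\bar v_t$ and leaves the remainder to the ABC/anchor constants, exactly as you do; your caveat that \eqref{eq:abc-1-main} literally bounds the total error, so its constants must be read relative to $B_t\bar v_t$ for the bias term to carry content, is a fair point the paper glosses over, and the fluctuation bound itself needs no reinterpretation, since conditional Pythagoras gives $\EE[\|g_t-\bar g_t\|^2\mid\mathcal F_t]\le\EE[\|g_t-\nabla\widetilde L(W_t)\|^2\mid\mathcal F_t]$ (and yields your split with constant $1$ rather than $2$).
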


\noindent\textit{Meaning of the decomposition.}
The first term is the Jacobian-learning error already identified in \Cref{lem:bias-to-target}; it vanishes as $B_t$ tracks $J(W_t)$.  The second term is the ordinary stochastic-optimization variance after control-variate/anchor reduction.  Thus VR tightens constants, while the learned Jacobian corrects the bias caused by the hard quantizer.

\begin{theorem}[Non-convex windowed convergence]
\label{thm:nonconvex-generic}
Under \Cref{as:code,as:B,as:vr} and $L_s$-smoothness on the window, for $\eta\le c/L_s$,
$$
\min_{0\le t<T}\EE\|\nabla\widetilde L(W_t)\|^2
\le O(1/(\eta T))+O(\varepsilon^2)+O(\rho\sigma_{\rm vr}^2\eta),
\quad
\varepsilon:=\sup_t\EE\|B_t-J(W_t)\|\,\|\bar v_t\|.
$$
\end{theorem}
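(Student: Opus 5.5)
The plan is the standard descent-lemma argument for biased stochastic gradients, combined with a Lyapunov function that absorbs the variance-reduction state from \Cref{as:vr}. Write $g_t=\nabla\widetilde L(W_t)+e_t$. Split the error as $e_t=b_t+\xi_t$, where $b_t:=\EE[g_t\mid\mathcal F_t]-\nabla\widetilde L(W_t)$ is the conditional bias and $\xi_t$ is the zero-mean noise. By \Cref{lem:bias-to-target,lem:vr-bias}, the bias satisfies $\|b_t\|\lesssim\|B_t-J(W_t)\|\,\|\bar v_t\|$, so $\EE\|b_t\|^2\le O(\varepsilon^2)$ after taking a supremum; if needed, $\varepsilon$ is read with the expectation on the square. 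The noise second moment is controlled by \eqref{eq:abc-1-main}, and after the anchor reduction its effective size is $O(\rho\sigma_{\rm vr}^2)$ plus terms in $\widetilde L(W_t)-\widetilde L^\star$ and $\sigma_t$. \Cref{as:code} keeps all iterates on the same code, so $L_s$-smoothness of $\widetilde L$ applies along the whole trajectory. \Cref{as:B} and \Cref{lem:stability,lem:lipschitz} guarantee that $g_t$ is bounded and that the surrogate dynamics are regular there.

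Step one is the descent inequality. Smoothness gives $\widetilde L(W_{t+1})\le\widetilde L(W_t)-\eta\langle\nabla\widetilde L,g_t\rangle+\frac{L_s\eta^2}{2}\|g_t\|^2$. Take conditional expectations, use $\langle\nabla\widetilde L,b_t\rangle\ge-\frac12\|\nabla\widetilde L\|^2-\frac12\|b_t\|^2$, and bound $\|g_t\|^2\le 2\|\nabla\widetilde L\|^2+2\|e_t\|^2$. With $\eta\le c/L_s$ and $c$ small, this yields
\[
\EE\widetilde L(W_{t+1})\le\EE\widetilde L(W_t)-\tfrac{\eta}{4}\EE\|\nabla\widetilde L(W_t)\|^2+\tfrac{\eta}{2}\EE\|b_t\|^2+L_s\eta^2\,\EE\|\xi_t\|^2 .
\]

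Step two handles the state variance. Define $\Phi_t:=\widetilde L(W_t)-\widetilde L^\star+M\eta^2\sigma_t$ with $M:=L_sB/(1-\tilde B)$. Using \eqref{eq:abc-2-main}, the $B\sigma_t$ contribution from \eqref{eq:abc-1-main} is cancelled by the contraction $(1-\tilde B)$ of $\sigma$. The leftover terms $2(A+M\tilde A)\eta^2(\widetilde L-\widetilde L^\star)$ are the ones I expect to be the main obstacle. These terms are multiplicative in the suboptimality, so a naive sum picks up a factor $(1+O(\eta^2 L_s(A+M\tilde A)))^T$.

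My first approach is the Khaled--Richtárik weighting trick. Multiply the $t$-th inequality by $w_t=(1+\kappa\eta^2)^{-t}$ with $\kappa:=2L_s(A+M\tilde A)$, and telescope. For $\eta\le c\min\{1/L_s,1/\sqrt{\kappa T}\}$ the weights stay within a constant factor of $1$, which keeps the final bound in the stated $O(1/(\eta T))$ form; this stepsize cap is absorbed into $c$. If that fails, I would instead assume bounded suboptimality on the window, which is natural since the window is short.

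Finally, telescope $\Phi_t$ over $t<T$, bound the weighted average below by the minimum, and divide by $\eta T/4$. This gives $\Phi_0/(\eta T)=O(1/(\eta T))$, the bias term $O(\varepsilon^2)$, and the noise term $O(L_s\eta(\rho\sigma_{\rm vr}^2+\tilde C M+C))$, which is $O(\rho\sigma_{\rm vr}^2\eta)$ after the constants $C,\tilde C$ are identified with the effective variance $\rho\sigma_{\rm vr}^2$ as \Cref{as:vr} prescribes.
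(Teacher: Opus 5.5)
Your proposal follows the paper's proof closely. Both use the descent lemma for the $L_s$-smooth windowed surrogate and split off the conditional bias $b_t=(B_t-J(W_t))\bar v_t$ by Young's inequality. Both bound the remaining noise with \eqref{eq:abc-1-main}, use a Lyapunov function $D_t+\Theta(\eta^2)\sigma_t$ whose weight (the paper takes $c_2\ge L_s\eta^2B/(2(1-\tilde B))$) lets \eqref{eq:abc-2-main} cancel the $B\sigma_t$ term, and finish by telescoping. Reading $\varepsilon$ through $\sup_t\EE\|b_t\|^2$ is also exactly what the paper does.

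The one real divergence is the leftover multiplicative term $\Theta(\eta^2)(\widetilde L(W_t)-\widetilde L^\star)$. The paper asserts $\sup_t\EE[\Phi_t]\le\EE[\Phi_0]+O(T\eta^2)+O(T\eta)\varepsilon^2$ and declares the resulting contribution $O(\eta^2)$. That step ignores the compounding factor $(1+\Theta(\eta^2))^T$. Even granted, it leaves terms that grow linearly in $T$ (of order $T\eta^3$ and $T\eta^2\varepsilon^2$ in the final bound). Your Khaled--Richt\'arik weighting actually controls this term, so on this point your argument is the sounder one.

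What you cannot do is say the resulting cap $\eta\le c/\sqrt{\kappa T}$ is ``absorbed into $c$''. The cap depends on $T$ and on $A,\tilde A,B,\tilde B$, whereas $c$ in the statement is a fixed constant. It must be stated as an explicit extra hypothesis, and so must your bounded-suboptimality fallback. Otherwise the constant in front of $1/(\eta T)$ carries a factor $e^{\kappa\eta^2T}$.

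Some strengthening of $\eta\le c/L_s$ is unavoidable in any case. Take $\widetilde L(w)=w^2/2$, $g_t=w_t(1+\sqrt{A}\,z_t)$ with $z_t\sim\mathcal N(0,1)$, and $\sigma_t\equiv0$. Then \eqref{eq:abc-1-main}--\eqref{eq:abc-2-main} hold with $B=C=\tilde A=\tilde B=\tilde C=0$, and there is no Jacobian bias. Yet $\EE w_{t+1}^2=\bigl((1-\eta)^2+A\eta^2\bigr)\EE w_t^2$. So once $A\eta>2-\eta$, the left-hand side stays equal to $w_0^2$ for every $T$. Meanwhile any bound of the form $K_1(\widetilde L(w_0)-\widetilde L^\star)/(\eta T)+K_2(C+\tilde C)\eta$ with $T$-independent $K_1,K_2$ tends to zero. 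At minimum, the admissible stepsize must depend on $A$.

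With the $A$- and $T$-dependent stepsize condition (or bounded suboptimality on the window) added explicitly to the hypotheses, your argument is complete.
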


\noindent\textit{Interpretation.}
The rate has the classical $O(1/T)$ stationarity term plus two transparent residuals.  The $O(\varepsilon^2)$ term measures how much the learned backward map still deviates from the mean-field quantizer sensitivity; the $O(\rho\sigma_{\rm vr}^2\eta)$ term is the stochastic variance floor.  Accurate probe/dither tracking reduces the former, and anchor/control-variate updates reduce the latter.

\begin{assumption}[P\L\ inequality]
\label{as:pl}
There exists $\mu>0$ such that $\frac12\|\nabla\widetilde L(W)\|^2\ge\mu(\widetilde L(W)-\widetilde L^\star)$ on the window.
\end{assumption}

\begin{theorem}[P\L\ regime]
\label{thm:pl-generic}
Under \Cref{as:code,as:B,as:vr,as:pl} and $\eta\le1/(2L_s)$,
$$
\EE[\widetilde L(W_t)-\widetilde L^\star]
\le (1-\eta\mu)^t(\widetilde L(W_0)-\widetilde L^\star)
+O(\rho\sigma_{\rm vr}^2\eta/\mu+\varepsilon^2/\mu).
$$
\end{theorem}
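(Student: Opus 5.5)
The plan is to run the standard one-step descent argument for $L_s$-smooth functions with a biased, noisy oracle, and to fold in the control-variate state $\sigma_t$ through a Lyapunov function. On the code-preserving window (\Cref{as:code}), the surrogate objective $\widetilde L$ is $L_s$-smooth. By \Cref{lem:stability,lem:lipschitz}, the iterates $W_{t+1}=W_t-\eta g_t$ stay in a regime where the descent lemma applies. Write $g_t=\nabla\widetilde L(W_t)+e_t$. Smoothness gives
\[
\widetilde L(W_{t+1})\le \widetilde L(W_t)-\eta\langle\nabla\widetilde L(W_t),g_t\rangle+\tfrac{L_s\eta^2}{2}\|g_t\|^2 .
\]
The cross term I would handle with Young's inequality, $-\langle a,a+e\rangle\le -\tfrac12\|a\|^2+\tfrac12\|e\|^2$, and the quadratic term with $\|g_t\|^2\le 2\|\nabla\widetilde L(W_t)\|^2+2\|e_t\|^2$. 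With $\eta\le 1/(2L_s)$ these combine, after conditioning on $\mathcal F_t$, into
\[
\EE[\widetilde L(W_{t+1})\mid\mathcal F_t]\le \widetilde L(W_t)-\tfrac{\eta}{2}\cdot\tfrac12\|\nabla\widetilde L(W_t)\|^2\cdot c'+\eta\,\EE[\|e_t\|^2\mid\mathcal F_t]
\]
for a constant $c'>0$, up to absolute constants.

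Next I substitute \Cref{lem:vr-bias} for $\EE\|e_t\|^2$, splitting the error into $O(\|B_t-J(W_t)\|^2\|\bar v_t\|^2)\le O(\varepsilon^2)$ and $O(\rho\sigma_{\rm vr}^2)$. To avoid carrying the $A(\widetilde L-\widetilde L^\star)$ and $B\sigma_t$ terms of (ABC-1) separately, I would form the Lyapunov function $\Psi_t=\widetilde L(W_t)-\widetilde L^\star+M\eta\,\sigma_t$ with $M=B/(1-\tilde B)$, in the usual loopless-SVRG way. Using (ABC-2), the $\sigma_t$ contributions telescope away. The $A,\tilde A$ terms multiply $\widetilde L-\widetilde L^\star$ by $O(\eta^2)$ factors, and I would absorb these using \Cref{as:pl}: $\widetilde L-\widetilde L^\star\le\frac1{2\mu}\|\nabla\widetilde L\|^2$. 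This costs only a constant fraction of the descent term once $\eta$ is at most a small multiple of $\mu/(A+\tilde A M)$, and I would fold that requirement into the constants of $\eta\le 1/(2L_s)$. One point on scaling: the variance enters with an $\eta^2$ factor from the quadratic term, so it becomes $O(\rho\sigma_{\rm vr}^2\eta^2)$ per step. The bias enters through the cross term with a single factor $\eta$, giving $\eta\varepsilon^2$ per step.

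I then apply PŁ to the remaining descent term, $-\tfrac{\eta}{2}\|\nabla\widetilde L\|^2\le-\eta\mu(\widetilde L-\widetilde L^\star)$, which yields the contraction
\[
\EE\Psi_{t+1}\le(1-\eta\mu)\EE\Psi_t+O(\eta\varepsilon^2)+O(\rho\sigma_{\rm vr}^2\eta^2).
\]
The $\sigma$ part of $\Psi$ contracts at rate $\tilde B<1$, and this is no slower than $1-\eta\mu$ for small $\eta$. Unrolling the recursion and summing the geometric series $\sum_k(1-\eta\mu)^k\le 1/(\eta\mu)$ turns the per-step residuals into $O(\varepsilon^2/\mu)+O(\rho\sigma_{\rm vr}^2\eta/\mu)$. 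Dropping the nonnegative $\sigma$ part at time $t$, and either absorbing $\sigma_0$ into the initial gap or assuming the anchor is fresh at $t=0$ (so $\sigma_0=0$), gives the stated bound.

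I expect the main obstacle to be the bookkeeping of the ABC terms. The weight $M$ and the stepsize restriction have to be chosen so that the $\tilde B$-contraction of $\sigma_t$ and the $A,\tilde A$ feedback into the function gap neither break the $(1-\eta\mu)$ rate nor inflate the residuals. The paper's hypothesis $\eta\le 1/(2L_s)$ alone may not suffice, so I would state the extra requirement $\eta\lesssim\min\{(1-\tilde B)/\mu,\ \mu/(A+\tilde A M)\}$ as part of the hidden constants. A second subtlety is that the bias is not zero-mean. This is why it must be handled via Young's inequality in the cross term, and it explains why $\varepsilon^2/\mu$ carries no $\eta$ factor.
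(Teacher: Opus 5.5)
Your skeleton (descent lemma, P\L{}, a Lyapunov function coupling the gap with $\sigma_t$, geometric unrolling) is the right one. The gap is in how you handle the oracle error, which destroys the $\eta$-scaling the statement depends on. You apply Young's inequality to the \emph{full} error before conditioning, $-\eta\langle\nabla\widetilde L(W_t),\nabla\widetilde L(W_t)+e_t\rangle\le-\frac{\eta}{2}\|\nabla\widetilde L(W_t)\|^2+\frac{\eta}{2}\|e_t\|^2$, so your displayed inequality carries $\eta\,\EE[\|e_t\|^2\mid\mathcal F_t]$. But $\EE\|e_t\|^2$ contains the stochastic part: $2A(\widetilde L-\widetilde L^\star)+B\sigma_t+C$ in (ABC-1), or the $O(\rho\sigma_{\rm vr}^2)$ term of \Cref{lem:vr-bias}. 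The variance therefore enters each step with a single factor $\eta$. Dividing by $\eta\mu$ in the geometric sum then leaves an $\eta$-free floor $O(\rho\sigma_{\rm vr}^2/\mu)$ instead of $O(\rho\sigma_{\rm vr}^2\eta/\mu)$. Your claim that the variance ``enters with an $\eta^2$ factor from the quadratic term'' does not follow from your own inequality. Your closing remark, that only the bias fails to be mean-zero, is exactly the fact your derivation never uses. The missing step, which the paper's proof takes, is to condition the cross term first. With $\EE[g_t\mid\mathcal F_t]=\nabla\widetilde L(W_t)+b_t$ and $b_t=(B_t-J(W_t))\bar v_t$, one has $\EE[\langle\nabla\widetilde L(W_t),g_t\rangle\mid\mathcal F_t]=\|\nabla\widetilde L(W_t)\|^2+\langle\nabla\widetilde L(W_t),b_t\rangle$. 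Young is then applied only to $\langle\nabla\widetilde L(W_t),b_t\rangle$, which produces the per-step $\eta\|b_t\|^2$ term behind $\varepsilon^2/\mu$. (ABC-1) is used only inside $\frac{L_s\eta^2}{2}\EE[\|g_t\|^2\mid\mathcal F_t]$.

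Your Lyapunov weight has a related problem: $M\eta$ is too large by a factor of order $\eta$. With it, (ABC-2) feeds back $2M\eta\tilde A(\widetilde L-\widetilde L^\star)+M\eta\tilde C$ per step. In your version, (ABC-1) inside $\frac{\eta}{2}\EE\|e_t\|^2$ also feeds back $\eta A(\widetilde L-\widetilde L^\star)$. These gap terms are of the same order as the P\L{} descent $\eta\mu(\widetilde L-\widetilde L^\star)$, so no stepsize restriction absorbs them. You would instead need a condition like $A+2B\tilde A/(1-\tilde B)\lesssim\mu$ on the problem constants, contrary to your statement that they carry $O(\eta^2)$ factors. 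In addition, $M\eta\tilde C$ yields a further $\eta$-free floor $M\tilde C/\mu$. After the split above, $\sigma_t$ enters only through $L_s\eta^2B\sigma_t$, so the right weight is $c_2\asymp L_s\eta^2B/(1-\tilde B)=\Theta(\eta^2)$, as in the paper's non-convex proof. The $A,\tilde A$ feedback is then $O(\eta^2)$ and is absorbed for $\eta\lesssim\mu/(L_s(A+B\tilde A/(1-\tilde B)))$, and $C,\tilde C$ give the $O(\eta/\mu)$ floor. Once repaired, your Lyapunov route is a legitimate variant of the paper's argument. The paper instead unrolls (ABC-2) directly and bounds $\sum_k\tilde B^{t-1-k}\EE[\widetilde L(W_k)-\widetilde L^\star]$ through a monotonicity claim, which the Lyapunov coupling makes unnecessary. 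A smaller point: with your balanced Young split, the coefficient of $\|\nabla\widetilde L(W_t)\|^2$ is $-\frac{\eta}{2}+L_s\eta^2$, which vanishes at $\eta=1/(2L_s)$. So the ``remaining descent term $-\frac{\eta}{2}\|\nabla\widetilde L\|^2$'' you hand to P\L{} is not what your bounds give. As written you obtain $(1-c\eta\mu)^t$ for some $c<1$, which is also all the paper's proof delivers ($1-\eta\mu/4$, then constants renamed).
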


\noindent\textit{Interpretation.}
In the P\L\ regime, \method inherits the same linear contraction as smooth stochastic optimization.  The asymptotic neighborhood is smaller when $B_t$ is closer to $J(W_t)$ and when the VR anchor lowers the variance.  Spectral damping further helps in practice by suppressing large updates in saturated regions before they can create oscillations.

\subsection{Composition Across Windows and Learner Convergence}
\label{subsec:global}
\label{subsec:B-convergence}
The windowed analysis remains meaningful across training because code changes can be treated as a sequence of small perturbations to the local surrogate, provided the changes become milder as training stabilizes.
Equivalently, a full QAT trajectory stitches together many local smooth problems; the drift term below measures the cost of moving from one local problem to the next.

\begin{assumption}[Vanishing inter-window drift]
\label{as:drift}
For successive code-preserving windows $k$, the corresponding surrogates satisfy $\|\nabla\widetilde L^{(k+1)}(W)-\nabla\widetilde L^{(k)}(W)\|\le\delta_k$ on visited iterates, with $\delta_k\downarrow0$ and $\sum_k\delta_k<\infty$.
\end{assumption}

\begin{theorem}[Global composition across windows]
\label{thm:global}
If each window satisfies \Cref{thm:nonconvex-generic} or \Cref{thm:pl-generic}, then the same rates hold with an additional drift term $O(\delta_k^2)$ in the non-convex bound and $O(\delta_k/\mu)$ in the P\L\ bound.  If code assignments eventually stabilize, the P\L\ trajectory is globally linearly convergent; if small flips persist with $\delta_k\to0$, the convergence neighborhood shrinks accordingly.
\end{theorem}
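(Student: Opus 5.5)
The plan is to treat the trajectory as a concatenation of code-preserving windows $k=1,2,\dots$ with start times $t_k$ and lengths $T_k$. Across windows, the single smooth surrogate is replaced by a time-indexed surrogate $\widetilde L^{(k(t))}$. The switch is absorbed as an extra gradient error: on window $k$, an oracle built for window $k'$ (stale anchor, or the reference gradient $\tilde g$ computed before a code flip) satisfies $\|\nabla\widetilde L^{(k')}(W)-\nabla\widetilde L^{(k)}(W)\|\le\sum_{j=k'}^{k-1}\delta_j$ by \Cref{as:drift} and the triangle inequality. Since the anchor is refreshed with probability $p$, the staleness is geometrically bounded, and the effective drift error seen on window $k$ is $O(\delta_k)$ up to constants depending on $p$, using that $\delta_k$ is nonincreasing. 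Inserting this into \eqref{eq:abc-1-main} via $\|a+b\|^2\le2\|a\|^2+2\|b\|^2$ replaces $C$ by $C+O(\delta_k^2)$ and leaves \eqref{eq:abc-2-main} unchanged. \Cref{lem:vr-bias} then holds per window with an additional $O(\delta_k^2)$ term.

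\textbf{Non-convex case.} Rerun the descent-lemma telescoping behind \Cref{thm:nonconvex-generic} over the concatenated iterates. The one new ingredient is that the telescoped potential jumps at window boundaries: $\widetilde L^{(k+1)}(W_{t_{k+1}})-\widetilde L^{(k)}(W_{t_{k+1}})$. To control this jump, I would assume (or extract from the construction of $\widetilde L$ as a dither-averaged loss) that the surrogates agree in value up to $O(\delta_k)$ on visited iterates. Summability $\sum_k\delta_k<\infty$ then keeps the total potential jump finite. The result is $\min_t\EE\|\nabla\widetilde L^{(k(t))}(W_t)\|^2\le O(1/(\eta T))+O(\varepsilon^2)+O(\rho\sigma_{\rm vr}^2\eta)+O(\sup_{k\ge k(t)}\delta_k^2)$, which is the claimed extra $O(\delta_k^2)$ term.

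\textbf{P\L{} case.} Write the one-step contraction from \Cref{thm:pl-generic}, with $a_t:=\EE[\widetilde L^{(k(t))}(W_t)-\widetilde L^{\star}]$:
\[
a_{t+1}\le(1-\eta\mu)\,a_t+\eta\,\bigl(O(\rho\sigma_{\rm vr}^2\eta)+O(\varepsilon^2)\bigr)+\eta\,O(\delta_{k(t)}).
\]
The linear drift term comes from combining the cross term $\langle\nabla\widetilde L,\,e\rangle$ with the value jump at boundaries. I expect the cross term to be bounded as $\|\nabla\widetilde L\|\,\delta$ using bounded gradients from \Cref{lem:stability}, rather than via Young's inequality, which is why the bound is linear in $\delta$. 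Unrolling this recursion with a discrete Grönwall/Chung-type lemma gives $(1-\eta\mu)^t a_0+O(\rho\sigma_{\rm vr}^2\eta/\mu+\varepsilon^2/\mu)+O(\delta_k/\mu)$. The last step uses $\sum_s(1-\eta\mu)^{t-s}\eta\,\delta_{k(s)}\to0$ when $\delta_k\to0$, and bounds it by $O(\sup\delta/\mu)$ in general.

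\textbf{Limiting cases and main obstacle.} If codes eventually stabilize, then from some finite window onward there is a single window of infinite length with $\delta=0$, and \Cref{thm:pl-generic} applies from that time on, giving global linear convergence to its neighborhood. If flips persist with $\delta_k\to0$, the limsup neighborhood shrinks to the no-drift one. The main obstacle is the boundary value jump and the varying minimum values $\widetilde L^{(k)\star}$: \Cref{as:drift} controls only gradients. The first thing I would try is to bound $|\widetilde L^{(k)\star}-\widetilde L^{(k+1)\star}|$ and the value gaps by $O(\delta_k)$ times a diameter of the visited set, using compactness of iterates. Failing that, I would strengthen \Cref{as:drift} to include values.
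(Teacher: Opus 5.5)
Your route differs from the paper's, and as written it has gaps that the stated hypotheses do not fill. The paper never telescopes across windows. It applies \Cref{thm:nonconvex-generic} separately on each window, so window $k$'s own initial gap sits inside its $O(1/(\eta_kT_k))$ term. It sets $M_k=\min_{t<T_k}\E\|\nabla\widetilde L^{(k)}(W_{k,t})\|^2$ and crosses a boundary only via $\E\|\nabla\widetilde L^{(k+1)}(W_{k,t})\|^2\le 2M_k+2\delta_k^2$. That uses exactly the gradient-level information in \Cref{as:drift}, at the price of requiring every window to be long ($\eta_kT_k\to\infty$). For the P\L{} part the paper merely asserts that drift perturbs the per-window bound by $C\delta_k/\mu$. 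Your concatenated Lyapunov argument aims higher, giving a rate in the total horizon that tolerates short windows, and that is where it needs more than is assumed.

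First, value control. \Cref{as:drift} bounds gradient differences only pointwise at visited iterates, which says nothing about values. Even a bound on a whole convex set of diameter $D$ would pin down $\widetilde L^{(k+1)}-\widetilde L^{(k)}$ only up to an additive constant, with oscillation at most $\delta_kD$. So neither the raw jump $\widetilde L^{(k+1)}(W)-\widetilde L^{(k)}(W)$ nor $|\widetilde L^{(k)\star}-\widetilde L^{(k+1)\star}|$ can be bounded by $\delta_k$ times a diameter. What you can bound, by $\delta_kD$, is the jump of the constant-invariant \emph{gap} $\widetilde L^{(k)}(W)-\widetilde L^{(k)\star}$. This requires the gradient bound to hold on a convex set containing the iterates and the (near-)minimizers of both surrogates. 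That is a genuine strengthening of \Cref{as:drift}, and your telescoping must be run on gaps, not values.

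Second, amortization in the P\L{} recursion. The boundary jumps are not multiplied by $\eta$, so folding them into a per-step term $\eta\,O(\delta_{k(t)})$ implicitly spreads them over long windows. Unrolled honestly, they contribute $\sum_{k:\,t_k\le t}(1-c\eta\mu)^{t-t_k}\delta_kD$. This is $O(\delta D)$ when windows have length $\gtrsim 1/(\eta\mu)$. For windows of length $T_{\min}\ll 1/(\eta\mu)$, it can exceed the claimed $O(\delta_k/\mu)$ by a factor of order $D/(\eta T_{\min})$. Summability still bounds the sum by $D\sum_j\delta_j$ and drives it to zero as $t\to\infty$ by dominated convergence. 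So, given gap-level control, your two limiting conclusions survive. The stated $O(\delta_k/\mu)$ form, however, needs a lower bound on window lengths; the paper's unproved $C\delta_k/\mu$ step hides the same requirement.

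Third, staleness. Monotonicity gives $\delta_j\ge\delta_k$ for $j<k$, so it bounds the accumulated stale drift from below, not above. What you actually get is a geometric convolution of past $\delta_j$, which still vanishes. You get $O(\delta_k)$ only with bounded ratios $\delta_{k-1}/\delta_k$ and staleness counted in windows rather than iterations.

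For the non-convex claim you can simply fall back on the paper's per-window transfer, which needs nothing beyond \Cref{as:drift}. For the P\L{} claim, your diagnosis of the obstacle is sharper than the paper's own proof. Closing it requires gap-level drift control and, for the stated form, window lengths bounded below.
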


\begin{assumption}[SA conditions for $B$]
\label{as:sa}
The $B$ updates use Robbins--Monro stepsizes, persistent probe/dither excitation, compact projection for the chosen diagonal/block-diagonal class, a locally contractive mean-field operator, and slower drift of $W_t$.
\end{assumption}

\begin{lemma}[Mean-field identification]
\label{lem:meanfield-id}
On a fixed window, both the probe and dither mean-field operators have $J_g(W)$ as the unique fixed point within the chosen group structure.
\end{lemma}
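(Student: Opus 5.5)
We need to prove that on a fixed window both the probe and dither mean-field operators have $J_g(W)$ as their unique fixed point. The plan is to write each EMA recursion in the form $B\leftarrow(1-\beta)B+\beta\,H(B;\xi)$ and compute its mean field $h(B)=\E_\xi[H(B;\xi)]-B$. The claim then reduces to two checks: the zero set of $h$ within the diagonal/block-diagonal class is exactly $\{J_g(W)\}$, and $h$ is affine with a negative-definite linear part, which gives uniqueness. Throughout, the code-preserving window (\Cref{as:code}) keeps $W$ and the base code $q$ frozen, so all expectations are over the probe or dither randomness only.

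\textbf{Probe operator.} The least-squares (population) form of the probe update minimizes $\Phi(B_g)=\E\|\Delta q_g-B_g\delta_g\|^2$ over the structured class. Its gradient step defines $h_{\rm probe}(B_g)=\Pi\big(\E[\Delta q_g\delta_g^\top]-B_g\E[\delta_g\delta_g^\top]\big)=\sigma^2\,\Pi\big(\tilde J_g-B_g\big)$, where $\Pi$ projects onto the group structure. The key step is the cross-moment identity $\E[\Delta q\,\delta^\top]=\sigma^2 J(W)$ already quoted after \Cref{lem:probe-consistency}. I would justify it with Stein's lemma applied to the mean-field map: $\E[\phi(W+\delta)\delta^\top]=\sigma^2\E[\nabla\phi(W+\delta)]$, which needs $\phi$ to be differentiable. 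To meet this I would identify $\Delta q$ with the dither-smoothed response $m(W+\delta)-m(W)$ on the window, interpreting the probe measurement through the same smoothed quantizer that defines $J$, and take the small-$\sigma$/window-average so that $\E\nabla m(W+\delta)=J(W)$. The map $h_{\rm probe}$ is then affine with linear part $-\sigma^2\Pi$, which is negative definite on the class. Its unique zero is $B_g=\Pi J_g(W)$, which I take to be $J_g(W)$ "within the chosen group structure". For the scalar-per-group case this equals $\mathrm{tr}(J_g)/d_g$.

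\textbf{Dither operator.} Here I would invoke \Cref{lem:dither-fixedpoint} directly. The dither update regresses $q_g=Q_\Delta(W_g+r_g)-r_g$ against perturbations. Its population version has mean field $\Pi(\nabla m_g(W)-B_g)$, because $m$ is, by definition, $\E_r[Q_\Delta(W+r)-r]$ with derivative $J$; for uniform dither on width $\Delta$, $m$ is piecewise linear and differentiable away from clipping kinks. Uniqueness follows from the same affine, strictly contractive structure. For $\beta\in(0,1]$ the EMA map $B\mapsto(1-\beta)B+\beta\,\Pi J_g$ is a contraction with factor $1-\beta$, which also supplies the "locally contractive mean-field operator" of \Cref{as:sa}.

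\textbf{Main obstacle.} The hardest step is the Stein identity for the probe. The hard quantizer $Q_\Delta$ is not weakly differentiable: it has jumps, and its distributional derivative consists of Dirac masses at thresholds, so $\E[\Delta q\,\delta^\top]$ is really $\sigma^2$ times a Gaussian-smoothed jump density, not $J(W)$ exactly. My first approach is to define $J$ consistently with the Gaussian smoothing (or to note that within the window the relevant operator is the one induced by the smoothed model). Failing that, I would prove the fixed point equals $J_g$ up to an $O(\sigma)$ smoothing discrepancy and then absorb that discrepancy into the tracking error $\varepsilon$ of \Cref{thm:nonconvex-generic}.
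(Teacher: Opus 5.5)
Your skeleton matches the paper's proof. For the probe you use population least squares plus Stein's identity, with uniqueness from strict convexity; your negative-definite affine drift $-\sigma^2\Pi$ is the same fact. For the dither you use a $B$-independent target $\nabla m_g$. Your observation that the restricted minimizer is the Frobenius projection $\Pi J_g$, i.e.\ $\mathrm{tr}(J_g)/d_g$ for one scalar per group, is correct. It is sharper than the paper, which just assumes $J$ is admissible.

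\textbf{The gap.} The problem is the probe step as you plan to execute it. Identifying $\Delta q$ with $m(W+\delta)-m(W)$ is a change of operator, not a proof step. The probe in \eqref{eq:probe-update} uses the undithered $Q_\Delta$, so there is no $r$ to average out. With thresholds $\tau_k$ and Gaussian density $\varphi_\sigma$, the actual cross-moment is, per coordinate, $\E[\Delta q\,\delta]=\sigma^2\sum_k\Delta\,\varphi_\sigma(\tau_k-w)$.

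Your fallback of an ``$O(\sigma)$ discrepancy'' is false. As $\sigma\to0$ this slope tends to $0$ in bin interiors and is of order $\Delta/\sigma$ within $\sigma$ of a threshold, while $\nabla m=1$ on the unclipped range. For thresholds $\tau_k=k\Delta$, Poisson summation gives $1+2\sum_{n\ge1}e^{-2\pi^2n^2\sigma^2/\Delta^2}\cos(2\pi nw/\Delta)$. So the slope approaches $\nabla m$ only once $\sigma$ is a sizable fraction of $\Delta$. That is the opposite limit, and it smears the clipping transition. There is no small residual to absorb into $\varepsilon$.

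The small-$\sigma$ argument does work for a dithered, common-random-number measurement, because $\nabla m$ is piecewise constant with jumps only at the clip points. That is why invoking \Cref{lem:dither-fixedpoint} is fine in your dither paragraph.

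\textbf{How the paper closes it.} The paper uses exactly your ``first approach''. It \emph{defines} the probe sensitivity as the population LS solution, $J(W)\equiv\E[\Delta q\,\delta^\top]\Sigma_\delta^{-1}$, with the hard-quantizer $\Delta q$. It uses a mollified (distributional) Stein identity only to characterize this as $\E[\nabla Q_\Delta(W+\delta)]$. Then $B^\star=J_g$ is immediate from the normal equations, and uniqueness is strict convexity on $\mathcal B_g$.

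Commit to that reading and drop the other two routes. Be aware that it makes the probe's $J_g$ and the dither's $\nabla m_g$ distinct objects. They are close only in the large-$\sigma$, away-from-clipping regime described above. The lemma's ``both'' therefore holds only with $J_g$ read per operator, and the paper does not reconcile the two definitions either.
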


\begin{proposition}[Convergence and stability of the Jacobian learner]
\label{prop:B-fixedpoint}
Under \Cref{as:code,as:sa} and \Cref{lem:meanfield-id}, $B_t^{(g)}\to J_g(W)$ almost surely on a fixed window, and under slow drift it tracks $J_g(W_t)$ with the contraction form in \Cref{prop:sa-tracking}.
\end{proposition}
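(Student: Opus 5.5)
The proof is a standard two-time-scale stochastic-approximation argument, using \Cref{lem:meanfield-id} to identify the limit. We handle one group $g$ at a time; the parameterization is block-diagonal with one scalar (or block) per group, so the recursions decouple. On a fixed window (\Cref{as:code}) with $W$ frozen, write the refresh step of \Cref{alg:jacquant-unified} as
\[
B_{t+1}^{(g)}=\Pi_{\mathcal K}\big(B_t^{(g)}+\beta_t\,(h_g(B_t^{(g)};W)+M_{t+1})\big).
\]
Here $\Pi_{\mathcal K}$ is the compact projection from \Cref{as:sa}. The mean-field drift is $h_g(B;W)=\E[\widehat B_g]-B$, where $\widehat B_g$ is the clipped probe slope \eqref{eq:probe-update} or its dither analogue. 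The term $M_{t+1}=\widehat B_{g,t}-\E[\widehat B_g]$ is a martingale difference with respect to $\mathcal F_t$. Its conditional second moment is bounded because the probes are Gaussian with fixed $\sigma$, the clip maps into $[0,1]$, and $\mathcal K$ is compact.

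By \Cref{lem:meanfield-id}, $h_g(\cdot;W)$ has the unique zero $J_g(W)$ in $\mathcal K$. \Cref{as:sa} states that the mean-field operator is locally contractive, i.e.\ $\langle h_g(B;W),B-J_g(W)\rangle\le-c\|B-J_g(W)\|^2$. In our parameterization the drift is affine in $B$ with slope $-1$, so this holds with $c=1$ once the clipped population slope is identified with $J_g$. This gives the Lyapunov function $V(B)=\|B-J_g(W)\|^2$.

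We expand $V(B_{t+1})$ using nonexpansiveness of the projection; $J_g(W)\in\mathcal K$ because its entries lie in $[0,1]$. Taking conditional expectations gives
\[
\E[V(B_{t+1})\mid\mathcal F_t]\le(1-2c\beta_t)V(B_t)+\beta_t^2K.
\]
The Robbins--Monro conditions $\sum\beta_t=\infty$ and $\sum\beta_t^2<\infty$ let us apply the Robbins--Siegmund supermartingale theorem. It yields almost-sure convergence of $V(B_t)$, and since $\sum\beta_t V(B_t)<\infty$ while $\sum\beta_t=\infty$, the limit must be $0$. Hence $B_t^{(g)}\to J_g(W)$ almost surely, which is the first claim. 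Persistent excitation is used only to ensure the probe regression is well-posed, so that \Cref{lem:probe-consistency} gives finite conditional variance.

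For the drifting case we re-center at $J_g(W_t)$ and split
\[
B_{t+1}-J_g(W_{t+1})=\big(B_{t+1}-J_g(W_t)\big)+\big(J_g(W_t)-J_g(W_{t+1})\big).
\]
The first piece contracts by the previous step, with factor $\sqrt{1-2c\beta_t}\le\rho<1$ in conditional $L^1$ via Jensen, plus an $O(\beta_t)$ noise term. The second piece is bounded by $\mathrm{Lip}(J_g)\,\eta\|g_t\|$. This uses local Lipschitzness of $J=\nabla m$ on the window, which holds because $m$ is a dither-smoothed piecewise-linear map, together with bounded gradients from \Cref{lem:stability}. Slower drift of $W_t$ (\Cref{as:sa}) makes this $o(\beta_t)$, so it is absorbed into $\zeta_t\to0$. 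This reproduces the contraction form of \Cref{prop:sa-tracking}, and a standard lemma on recursions $a_{t+1}\le\rho a_t+\zeta_t$ gives tracking.

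The main obstacle is justifying the contraction coefficient after the clip and EMA. Clipping $\hat b_g$ to $[0,1]$ makes $\E[\widehat B_g]$ equal to $\E[\mathrm{clip}(\hat b_g)]$, which in general differs from $J_g$. I would address this first by arguing from \Cref{lem:meanfield-id}, which already asserts that $J_g$ is the fixed point of the operator as implemented. Failing that, I would take $\epsilon\to0$ and small probe variance, and use that $J_g$ has entries in $[0,1]$, so clipping is inactive at the target to leading order.
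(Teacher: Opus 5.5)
Your fixed-window half is correct, and it is more elementary than the paper's argument. The paper cites the ODE method (Kushner--Yin, Borkar) together with a Banach fixed-point argument. You instead use that $\widehat B_g$ does not depend on $B$, so the drift is exactly $J_g(W)-B$, and you close directly with Robbins--Siegmund on $V=\|B-J_g(W)\|^2$. Your version is self-contained, keeps the martingale noise at order $\beta_t^2$, and never has to upgrade a merely local contraction to global stability. Relying on \Cref{lem:meanfield-id} for the clipped operator is exactly what the paper does, so that part is fine.

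The gap is in the drifting part. After Jensen you have $\E[\|B_{t+1}-J_g(W_t)\|\mid\F_t]\le\sqrt{1-2c\beta_t}\,\|e_t\|+\sqrt{K}\,\beta_t$, and you then assert $\sqrt{1-2c\beta_t}\le\rho<1$. Under Robbins--Monro stepsizes $\beta_t\to0$, so this factor tends to $1$ and no uniform $\rho$ exists. With the true factor $\sqrt{1-2c\beta_t}\le 1-c\beta_t$, the recursion $a_{t+1}\le(1-c\beta_t)a_t+\sqrt{K}\beta_t+o(\beta_t)$ only yields $\limsup_t a_t\le\sqrt{K}/c$, which is not tracking. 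The square root turned the $\beta_t^2$ noise that made your fixed-window proof work into a $\beta_t$ term of the same order as the contraction gain. A constant EMA rate would give a fixed $\rho=1-\beta$, but then the noise term $\beta\,\E\|\widehat B_g-J_g\|$ does not vanish; in this $L^1$ form you cannot have both.

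The paper's proof makes the identical jump, from $(1-\beta_t(1-\kappa))\|e_t\|+C_M\beta_t+\|\Delta_t\|$ to a fixed $\rho\in(0,1)$. It then applies Robbins--Siegmund with the non-summable $C_M\beta_t$, so it does not supply the missing step. The repair is to stay with $V_t=\|B_t-J_g(W_t)\|^2$ and split off the drift by Young's inequality with weight $c\beta_t$. Writing $\xi_t^2$ for a bound on $\E[\|J_g(W_{t+1})-J_g(W_t)\|^2\mid\F_t]$, for $c\beta_t\le\tfrac12$,
\[
\E[V_{t+1}\mid\F_t]\le(1+c\beta_t)\big((1-2c\beta_t)V_t+K\beta_t^2\big)+\Big(1+\frac{1}{c\beta_t}\Big)\xi_t^2\le(1-c\beta_t)V_t+2K\beta_t^2+\frac{2\xi_t^2}{c\beta_t}.
\]
Then $\xi_t=o(\beta_t)$ gives $\E V_t\to0$, which is tracking in probability. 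If also $\sum_t\xi_t^2/\beta_t<\infty$, Robbins--Siegmund gives almost-sure tracking. What you can honestly claim is the contraction form with the time-varying factor $\rho_t=1-c\beta_t$, as in \eqref{eq:key-rec}, not a fixed $\rho<1$.

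A smaller point: Lipschitzness of $J_g$ does not follow from $m$ being a dither-smoothed piecewise-linear map, because piecewise linearity gives a piecewise-constant Jacobian. For a clipped uniform quantizer, width-$\Delta$ subtractive dither makes $m$ exactly the identity clipped at the outermost quantization level, so $\nabla m$ jumps from $1$ to $0$ there. That level lies inside the outermost cell, so the jump can be crossed within a code-preserving window. As in the paper's proof, you must assume Lipschitzness in $W$ (of $J_g$ or of the mean-field operator). Alternatively, restrict to windows that avoid the kink; there $J_g$ is locally constant and the drift term vanishes.
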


\noindent\textit{Overall conclusion.}
The lemmas and theorems serve distinct roles.  \Cref{lem:bias-to-target} identifies the bias that \method must reduce; \Cref{lem:probe-consistency,prop:sa-tracking,lem:dither-fixedpoint,cor:dominance} show that the learned map can reduce it; \Cref{lem:stability,lem:lipschitz} guarantee stable surrogate gradients; and \Cref{thm:nonconvex-generic,thm:pl-generic,thm:global,prop:B-fixedpoint} show that these ingredients preserve standard convergence behavior while improving the constants most affected by low-bit saturation.

\section{Experiments}
We present a comprehensive empirical evaluation of \method on modern LLMs under ultra-low-bit QAT. Our experiments are designed to (i) validate the central thesis that \emph{learning a quantization Jacobian surrogate} improves optimization over the fixed \ste surrogate, (ii) quantify end-task benefits on both perplexity and downstream reasoning accuracy, and (iii) demonstrate that the gains come at negligible memory/runtime cost. Throughout, \method is a \emph{drop-in backward-rule replacement}: the forward quantizer, model, data, and optimizer remain unchanged.

\paragraph{Experimental setup.}\label{sec:exp_setup_main}
We evaluate \method on three representative LLM families (\texttt{LLaMA3-1B/3B} \citep{llama3} and \texttt{Qwen3-1.7B} \citep{yang2025qwen3}) under symmetric uniform QAT with per-channel scaling, focusing on ultra-low-bit regimes (W$\le$2, A8/A16). We also include a \texttt{LLaMA3-8B} W2A16 trajectory and a matched \algn{Rotation Trick} \citep{rotationtrick} comparison on \texttt{LLaMA3-1B}. We compare against popular QAT methods \algn{ParetoQ} \citep{liu2025paretoq} and \algn{WinQ} \citep{winq2025}. All methods share identical model architectures, training data, optimization settings, and forward quantization, and differ only in their backward gradient rules. Scale/clipping handling is kept exactly the same as in the corresponding base QAT pipeline; \method adds only the learned backward sensitivity map $B(W)$ and does not introduce a separate scale/clipping advantage. We report validation perplexity on WikiText-2 and zero-shot accuracy on eight standard benchmarks. Appendix~\ref{app:exp-setup} provides the detailed experimental settings and baseline definitions.

\begin{table}[tb]
\centering
\caption{QAT results under symmetric uniform quantization.}
\label{tab:qat_results_all}

\scriptsize
\setlength{\tabcolsep}{2.6pt}
\renewcommand{\arraystretch}{1.10}

\begin{minipage}[t]{0.505\textwidth}
\vspace{2pt}
\phantomsubcaption\label{tab:main_results}
\panelcaption{(a) \texttt{LLaMA3-1B}, A16.}

\begin{tabularx}{\linewidth}{@{}l*{6}{Y}@{}}
\toprule
& \multicolumn{2}{c}{W1A16} 
& \multicolumn{2}{c}{W1.58A16} 
& \multicolumn{2}{c}{W2A16}\\
\cmidrule(lr){2-3} 
\cmidrule(lr){4-5} 
\cmidrule(lr){6-7}
Metrics 
& PPL & Acc. 
& PPL & Acc. 
& PPL & Acc.\\
\midrule
FP16 Baseline 
& 9.6 & 58.5 
& 9.6 & 58.5 
& 9.6 & 58.5 \\
\midrule
\algn{RTN} (\algn{PTQ}) 
& 4.2e8 & 33.7 
& 1.8e6 & 36.2 
& 1.5e6 & 38.5\\
\algn{GPTQ} (\algn{PTQ}) 
& 3.3e8 & 32.7 
& 4.6e4 & 32.8 
& 3.3e2 & 36.8 \\
\algn{SpinQuant} (\algn{PTQ}) 
& 2.4e8 & 33.7 
& 2.2e3 & 32.6 
& 46.7 & 38.3 \\
\midrule
\algn{ParetoQ} 
& 17.9 & 51.9 
& 14.8 & 54.3 
& 13.5 & 55.7 \\
\ \ $+$Rotation Trick & 18.1 & 52.0 & 14.2 & 54.7 & 13.3 & 56.1\\
\ \ $+$\method 
& \textbf{17.6} & \textbf{52.4} 
& \textbf{13.9} & \textbf{55.3} 
& \textbf{12.3} & \textbf{56.6}\\
\midrule
\algn{WinQ} 
& 15.3 & 52.6 
& 12.9 & 55.6 
& 11.9 & 56.6 \\
\ \ $+$\method 
& \textbf{15.1} & \textbf{53.5} 
& \textbf{12.3} & \textbf{56.1} 
& \textbf{11.8} & \textbf{56.8}\\
\bottomrule
\end{tabularx}
\end{minipage}
\hfill
\renewcommand{\arraystretch}{0.9}
\begin{minipage}[t]{0.475\textwidth}
\vspace{2pt}

\phantomsubcaption\label{tab:complimentary_results}
\panelcaption{(b) \texttt{LLaMA3-3B} and \texttt{Qwen3-1.7B}, A8.}

\begin{tabularx}{\linewidth}{@{}l*{8}{Y}@{}}
\toprule
& \multicolumn{4}{c}{\texttt{LLaMA3-3B}} 
& \multicolumn{4}{c}{\texttt{Qwen3-1.7B}} \\
\cmidrule(lr){2-5} 
\cmidrule(lr){6-9}
& \multicolumn{2}{c}{W1A8} 
& \multicolumn{2}{c}{W1.58A8} 
& \multicolumn{2}{c}{W1A8} 
& \multicolumn{2}{c}{W2A8}\\
\cmidrule(lr){2-3} 
\cmidrule(lr){4-5} 
\cmidrule(lr){6-7} 
\cmidrule(lr){8-9}
Metrics 
& PPL & Acc. 
& PPL & Acc. 
& PPL & Acc. 
& PPL & Acc. \\
\midrule
FP16 Base 
& 7.7 & 65.2 
& 7.7 & 65.2 
& 16.2 & 58.1 
& 16.2 & 58.1 \\
\midrule
\algn{ParetoQ} 
& 15.7 & 54.0 
& 13.1 & 55.9 
& 46.5 & 42.2 
& 22.2 & 47.8 \\
\ \ $+$\method 
& \textbf{15.2} & \textbf{54.8} 
& \textbf{12.7} & \textbf{57.1} 
& \textbf{45.3} & \textbf{42.7} 
& \textbf{21.5} & \textbf{48.0}\\
\bottomrule
\end{tabularx}

\vspace{0.35em}

\phantomsubcaption\label{tab:llama3_8b_results}
\panelcaption{(c) \texttt{LLaMA3-8B}, W2A16.}

\begin{tabularx}{\linewidth}{@{}l*{6}{Y}@{}}
\toprule
& \multicolumn{2}{c}{\texttt{30K}} 
& \multicolumn{2}{c}{\texttt{60K}} 
& \multicolumn{2}{c}{\texttt{90K}} \\
\cmidrule(lr){2-3} 
\cmidrule(lr){4-5} 
\cmidrule(lr){6-7}
Metrics 
& PPL & Acc. 
& PPL & Acc. 
& PPL & Acc.\\
\midrule
\algn{ParetoQ} 
& 11.78 & 58.54 
& 10.43 & 60.67 
& 9.79 & 61.41 \\
\ \ $+$\method 
& \textbf{11.69} & \textbf{59.10} 
& \textbf{10.32} & \textbf{60.98} 
& \textbf{9.71} & \textbf{62.04} \\
\bottomrule
\end{tabularx}
\end{minipage}

\end{table}

\paragraph{Main QAT results.}
\Cref{tab:main_results} shows that replacing the identity-\ste backward rule with \method improves two strong QAT pipelines under the same forward quantizer.  On \texttt{LLaMA3-1B}, \algn{ParetoQ}+\method improves PPL/Acc. from 17.9/51.9 to 17.6/52.4 at W1A16, from 14.8/54.3 to 13.9/55.3 at W1.58A16, and from 13.5/55.7 to 12.3/56.6 at W2A16.  The matched \algn{Rotation Trick} row gives a direct beyond-\ste comparison: \method is better at all three bit-widths, with the largest gap at W2A16 (13.3/56.1 $\rightarrow$ 12.3/56.6).  \algn{WinQ}+\method shows the same trend, reaching the best mean accuracy in the table (56.8).  PTQ baselines degrade sharply at 1--2 bits, confirming that this is a genuinely difficult ultra-low-bit regime.

\paragraph{Scaling to other families and the 8B run.}
\Cref{tab:complimentary_results} extends the matched-forward comparison to \texttt{LLaMA3-3B} and \texttt{Qwen3-1.7B}: \method improves both PPL and mean accuracy at W1A8/W1.58A8 for \texttt{LLaMA3-3B} and at W1A8/W2A8 for \texttt{Qwen3-1.7B}.  \Cref{tab:llama3_8b_results} adds the \texttt{LLaMA3-8B} W2A16 trajectory.  Across 30K/60K/90K checkpoints, \method monotonically improves ParetoQ from 11.78/58.54 to 11.69/59.10, from 10.43/60.67 to 10.32/60.98, and from 9.79/61.41 to 9.71/62.04 (PPL/Acc.).  These gains are modest but consistent, suggesting that the learned backward sensitivity remains useful beyond the smaller matched-ablation suite.

\begin{figure}
    \centering
    \includegraphics[width=1.0\linewidth]{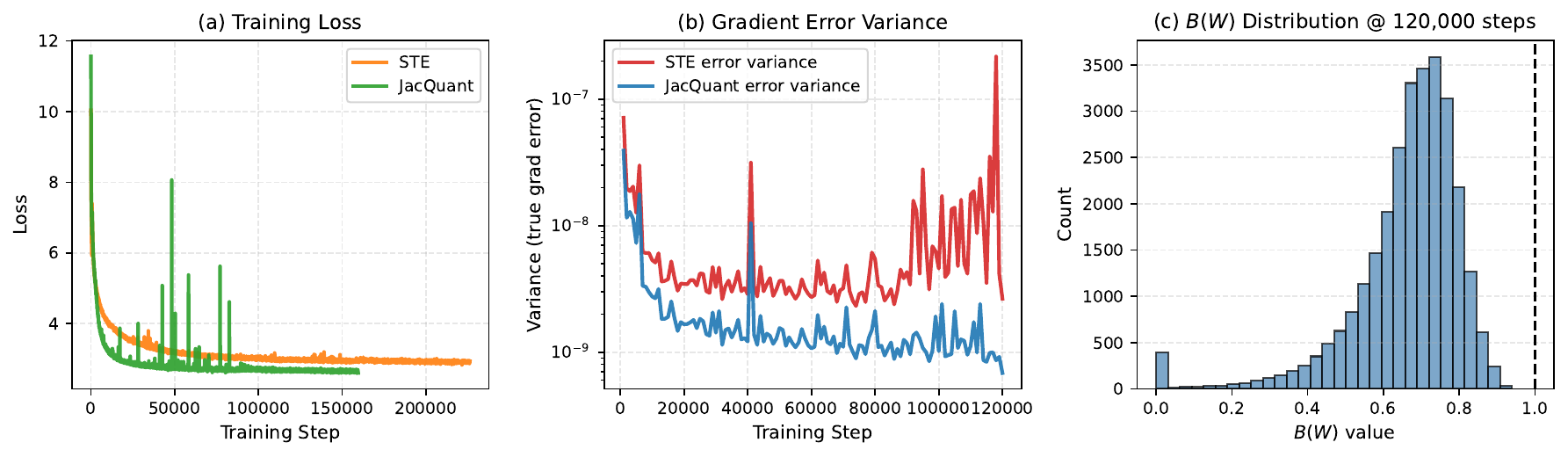}
    \vspace{-6mm}
\caption{\emph{Finite-difference gradient diagnostics.}
    (a) \method converges faster than \ste and stabilizes within a smaller neighborhood;
    (b) variance of the coordinate-wise mismatch between a central-difference reference gradient and the approximate training gradients;
    (c) histogram of the learned group-wise Jacobian scalars $b_g$ at the target step, with the dashed line indicating the \ste identity ($b=1$).}
    \label{fig:truegrad}
    \vspace{-4mm}
\end{figure}

\paragraph{Mechanistic diagnostic.}
Because hard quantizers are non-differentiable, we use finite differences only as a diagnostic reference: for sampled coordinates, we estimate $\hat J_i^{\mathrm{FD}}=(Q_\Delta(w_i+\epsilon)-Q_\Delta(w_i-\epsilon))/(2\epsilon)$ and compare $\hat J_i^{\mathrm{FD}}v_i$ with each training rule.  \Cref{fig:truegrad} shows that \method has lower coordinate-wise gradient-error variance than \ste and learns group gains below one in locally insensitive/clipped regions.  The small loss transients are expected because \method periodically refreshes $B(W)$; they do not contradict the diagnostic or final metrics, which both favor the learned surrogate.

\paragraph{Efficiency and ablations.}
\method stores one scalar sensitivity per quantization group (15--30MB in our LLM settings, $<1.5\%$ of Adam-state memory) and refreshes it only every 50--200 steps.  The runtime appendix (App. \ref{sec:runtime_w2a16}) reports both the \texttt{LLaMA3-1B} W2A16 throughput suite and the \texttt{LLaMA3-8B} W2A16 step-time table.  In the 1B suite, overhead is typically $\sim$1--5\% depending on group size and estimator; on \texttt{LLaMA3-8B}, practical Probe settings with $g_s\in\{32,128\}$ stay within $0.8\%$ of the \ste baseline, and Dither remains within roughly $1.6\%$.  \Cref{tab:ablation-gs-uf} further shows a broad near-optimal region rather than fragile tuning: e.g., $g_s=32$/uf=200 gives 12.3 PPL and 56.6 Avg. Acc., while $g_s=128$/uf=100 or 200 remains close at 12.3--12.5 PPL and 56.3--56.5 Avg. Acc.  This supports the design choice of amortized, moderately coarse sensitivity learning rather than aggressive per-step refresh.

\vspace{-2mm}
\section{Discussion and Future Work}
\vspace{-2mm}
\label{sec:discussion}
The experiments substantiate the theoretical picture in \Cref{sec:convergence}: ultra-low-bit QAT is bottlenecked not only by the forward quantizer, but also by the backward model used to train through it.  Across \texttt{LLaMA3} and \texttt{Qwen3} families, the \texttt{LLaMA3-8B} W2A16 trajectory, and the matched \algn{Rotation Trick} comparison, \method improves perplexity and zero-shot accuracy without changing the deployed hard quantizer.  Mechanistically, the learned $B(W)$ acts as an adaptive gain controller: it stays near identity in responsive bin interiors and damps saturated/clipped directions where the mean-field sensitivity is smaller than one.

The practical claim is therefore deliberately scoped.  \method is a drop-in \ste-free backward rule for ultra-low-bit LLM QAT, with end-to-end empirical evidence up to \texttt{LLaMA3-8B} and runtime measurements on both \texttt{LLaMA3-1B} W2A16 and \texttt{LLaMA3-8B} W2A16.  Larger end-to-end QAT runs remain an empirical direction, but the current measurements indicate that the added state and throughput cost of learning $B(W)$ are small under practical group sizes.  Limitations, non-uniform quantizer extensions, and additional ablations are discussed in App. \ref{app:limitations}.



\bibliographystyle{mbst}
\bibliography{mybib}

@article{zhou2016dorefa,
  title        = {DoReFa-Net: Training Low Bitwidth Convolutional Neural Networks with Low Bitwidth Gradients},
  author       = {Zhou, Shuchang and Wu, Yuxin and Ni, Zekun and Zhou, Xinyu and Wen, He and Zou, Yuheng},
  journal      = {arXiv preprint arXiv:1606.06160},
  year         = {2016},
  url          = {https://arxiv.org/abs/1606.06160}
}

@article{choi2018pact,
  title        = {PACT: Parameterized Clipping Activation for Quantized Neural Networks},
  author       = {Jungwook Choi and Zhuo Wang and Swagath Venkataramani and Pierce I.-Jen Chuang and Vijayalakshmi Srinivasan and Kailash Gopalakrnan},
  journal      = {arXiv preprint arXiv:1805.06085},
  year         = {2018},
  url          = {https://arxiv.org/abs/1805.06085}
}

@article{liu2025muon,
  title={Muon is scalable for LLM training},
  author={Liu, Jingyuan and Su, Jianlin and Yao, Xingcheng and Jiang, Zhejun and Lai, Guokun and Du, Yulun and Qin, Yidao and Xu, Weixin and Lu, Enzhe and Yan, Junjie and others},
  journal={arXiv preprint arXiv:2502.16982},
  year={2025}
}

@inproceedings{SpinQuant,
  title={SpinQuant: LLM Quantization with Learned Rotations},
  author={Liu, Zechun and Zhao, Changsheng and Fedorov, Igor and Soran, Bilge and Choudhary, Dhruv and Krishnamoorthi, Raghuraman and Chandra, Vikas and Tian, Yuandong and Blankevoort, Tijmen},
  booktitle={The Thirteenth International Conference on Learning Representations},
  year={2025}
}

@inproceedings{karimi2016linear,
  title={Linear convergence of gradient and proximal-gradient methods under the polyak-{\l}ojasiewicz condition},
  author={Karimi, Hamed and Nutini, Julie and Schmidt, Mark},
  booktitle={Joint European conference on machine learning and knowledge discovery in databases},
  pages={795--811},
  year={2016},
  organization={Springer}
}

@article{agustsson2017soft,
  title={Soft-to-hard vector quantization for end-to-end learning compressible representations},
  author={Agustsson, Eirikur and Mentzer, Fabian and Tschannen, Michael and Cavigelli, Lukas and Timofte, Radu and Benini, Luca and Gool, Luc V},
  journal={Advances in neural information processing systems},
  volume={30},
  year={2017}
}

@article{yang2025fogzo,
  title        = {Improving the Straight-Through Estimator with Zeroth-Order Information},
  author       = {Yang, Ningfeng and Aamodt, Tor M.},
  journal      = {arXiv preprint arXiv:2510.23926},
  year         = {2025}
}

@article{winq2025,
  title={WinQ: Accelerating Quantization-Aware Training of LLMs around Saddle Points},
  author={Dongyue Li and Zechun Liu and Kai Yi and Changsheng Zhao and Raghuraman Krishnamoorthi and Harshit Khaitan and Hongyang R. Zhang and Steven Li},
  journal={ICML},
  year={2026}
}

@article{gower2021stochastic,
  title={Stochastic quasi-gradient methods: Variance reduction via Jacobian sketching},
  author={Gower, Robert M and Richt{\'a}rik, Peter and Bach, Francis},
  journal={Mathematical Programming},
  volume={188},
  number={1},
  pages={135--192},
  year={2021},
  publisher={Springer}
}

@article{zeroqat,
  title={End-to-End On-Device Quantization-Aware Training for LLMs at Inference Cost},
  author={Tan, Qitao and Song, Xiaoying and Lu, Jin and Li, Guoming and Liu, Jun and Hong, Lingzi and Ding, Caiwen and Li, Jundong and Zhai, Xiaoming and Huang, Shaoyi and Niu, Wei and Yuan, Geng},
  journal={arXiv preprint arXiv:2509.00031},
  year={2025}
}

@article{llama3,
  title={The llama 3 herd of models},
  author={Grattafiori, Aaron and Dubey, Abhimanyu and Jauhri, Abhinav and Pandey, Abhinav and Kadian, Abhishek and Al-Dahle, Ahmad and Letman, Aiesha and Mathur, Akhil and Schelten, Alan and Vaughan, Alex and others},
  journal={arXiv preprint arXiv:2407.21783},
  year={2024}
}

@article{yang2025qwen3,
  title={Qwen3 technical report},
  author={Yang, An and Li, Anfeng and Yang, Baosong and Zhang, Beichen and Hui, Binyuan and Zheng, Bo and Yu, Bowen and Gao, Chang and Huang, Chengen and Lv, Chenxu and others},
  journal={arXiv preprint arXiv:2505.09388},
  year={2025}
}

@article{aqlm,
  title={Extreme compression of large language models via additive quantization},
  author={Egiazarian, Vage and Panferov, Andrei and Kuznedelev, Denis and Frantar, Elias and Babenko, Artem and Alistarh, Dan},
  journal={arXiv preprint arXiv:2401.06118},
  year={2024}
}

@article{malinovskii2024pv,
  title={Pv-tuning: Beyond straight-through estimation for extreme llm compression},
  author={Malinovskii, Vladimir and Mazur, Denis and Ilin, Ivan and Kuznedelev, Denis and Burlachenko, Konstantin and Yi, Kai and Alistarh, Dan and Richtarik, Peter},
  journal={Advances in Neural Information Processing Systems},
  volume={37},
  pages={5074--5121},
  year={2024}
}

@article{bengio2013estimating,
  title={Estimating or propagating gradients through stochastic neurons for conditional computation},
  author={Bengio, Yoshua and L{\'e}onard, Nicholas and Courville, Aaron},
  journal={arXiv preprint arXiv:1308.3432},
  year={2013}
}

@inproceedings{liu2024vptq,
  title={VPTQ: Extreme Low-bit Vector Post-Training Quantization for Large Language Models},
  author={Liu, Yifei and Wen, Jicheng and Wang, Yang and Ye, Shengyu and Zhang, Li Lyna and Cao, Ting and Li, Cheng and Yang, Mao},
  booktitle={Proceedings of the 2024 Conference on Empirical Methods in Natural Language Processing},
  pages={8181--8196},
  year={2024}
}

@article{malinovsky2022variance,
  title={Variance reduced proxskip: Algorithm, theory and application to federated learning},
  author={Malinovsky, Grigory and Yi, Kai and Richt{\'a}rik, Peter},
  journal={Advances in Neural Information Processing Systems},
  volume={35},
  pages={15176--15189},
  year={2022}
}

@inproceedings{liu2025paretoq,
  title={ParetoQ: Improving Scaling Laws in Extremely Low-bit LLM Quantization},
  author={Liu, Zechun and Zhao, Changsheng and Huang, Hanxian and Chen, Sijia and Zhang, Jing and Zhao, Jiawei and Roy, Scott and Jin, Lisa and Xiong, Yunyang and Shi, Yangyang and Xiao, Lin and Tian, Yuandong and Soran, Bilge and Krishnamoorthi, Raghuraman and Blankevoort, Tijmen and Chandra, Vikas},
  booktitle={Advances in Neural Information Processing Systems},
  year={2025}
}

@article{chee2023quip,
  title={Quip: 2-bit quantization of large language models with guarantees},
  author={Chee, Jerry and Cai, Yaohui and Kuleshov, Volodymyr and De Sa, Christopher M},
  journal={Advances in Neural Information Processing Systems},
  volume={36},
  pages={4396--4429},
  year={2023}
}

@inproceedings{bhalgat2020lsq+,
  title={Lsq+: Improving low-bit quantization through learnable offsets and better initialization},
  author={Bhalgat, Yash and Lee, Jinwon and Nagel, Markus and Blankevoort, Tijmen and Kwak, Nojun},
  booktitle={Proceedings of the IEEE/CVF conference on computer vision and pattern recognition workshops},
  pages={696--697},
  year={2020}
}

@article{johnson2013accelerating,
  title={Accelerating stochastic gradient descent using predictive variance reduction},
  author={Johnson, Rie and Zhang, Tong},
  journal={Advances in neural information processing systems},
  volume={26},
  year={2013}
}

@article{fang2018spider,
  title={Spider: Near-optimal non-convex optimization via stochastic path-integrated differential estimator},
  author={Fang, Cong and Li, Chris Junchi and Lin, Zhouchen and Zhang, Tong},
  journal={Advances in neural information processing systems},
  volume={31},
  year={2018}
}

@article{clark2018think,
  title={Think you have Solved Question Answering? Try ARC, the AI2 Reasoning Challenge},
  author={Clark, Peter and Cowhey, Isaac and Etzioni, Oren and Khot, Tushar and Sabharwal, Ashish and Schoenick, Carissa and Tafjord, Oyvind},
  journal={arXiv preprint arXiv:1803.05457},
  year={2018}
}

@inproceedings{clark2019boolq,
  title={BoolQ: Exploring the surprising difficulty of natural yes/no questions},
  author={Clark, Christopher and Lee, Kenton and Chang, Ming-Wei and Kwiatkowski, Tom and Collins, Michael and Toutanova, Kristina},
  booktitle={Proceedings of the 2019 Conference of the North American Chapter of the Association for Computational Linguistics: Human Language Technologies, Volume 1 (Long and Short Papers)},
  pages={2924--2936},
  year={2019}
}

@inproceedings{bisk2020piqa,
  title={PIQA: Reasoning about physical commonsense in natural language},
  author={Bisk, Yonatan and Zellers, Rowan and Gao, Jianfeng and Choi, Yejin and others},
  booktitle={Proceedings of the AAAI Conference on Artificial Intelligence},
  volume={34},
  number={05},
  pages={7432--7439},
  year={2020}
}

@inproceedings{sap2019socialiqa,
  title={SocialIQA: Commonsense reasoning about social interactions},
  author={Sap, Maarten and Rashkin, Hannah and Chen, Derek and LeBras, Ronan and Choi, Yejin},
  booktitle={Proceedings of the 2019 Conference on Empirical Methods in Natural Language Processing and the 9th International Joint Conference on Natural Language Processing (EMNLP-IJCNLP)},
  pages={4463--4473},
  year={2019}
}

@inproceedings{zellers2019hellaswag,
  title={HellaSwag: Can a machine really finish your sentence?},
  author={Zellers, Rowan and Holtzman, Ari and Bisk, Yonatan and Farhadi, Ali and Choi, Yejin},
  booktitle={Proceedings of the 57th Annual Meeting of the Association for Computational Linguistics},
  pages={4791--4800},
  year={2019}
}

@inproceedings{mihaylov2018can,
  title={Can a suit of armor conduct electricity? A new dataset for open book question answering},
  author={Mihaylov, Todor and Clark, Peter and Khot, Tushar and Sabharwal, Ashish},
  booktitle={Proceedings of the 2018 Conference on Empirical Methods in Natural Language Processing},
  pages={2381--2391},
  year={2018}
}

@inproceedings{sakaguchi2021winogrande,
  title={WinoGrande: An adversarial winograd schema challenge at scale},
  author={Sakaguchi, Keisuke and Bras, Ronan Le and Bhagavatula, Chandra and Choi, Yejin},
  booktitle={Proceedings of the AAAI Conference on Artificial Intelligence},
  volume={34},
  number={05},
  pages={8732--8740},
  year={2020}
}

@inproceedings{merity2017pointer,
  title={Pointer Sentinel Mixture Models},
  author={Merity, Stephen and Xiong, Caiming and Bradbury, James and Socher, Richard},
  booktitle={International Conference on Learning Representations},
  year={2017}
}

@article{penedo2024fineweb,
  title={The fineweb datasets: Decanting the web for the finest text data at scale},
  author={Penedo, Guilherme and Kydl{\'\i}{\v{c}}ek, Hynek and Lozhkov, Anton and Mitchell, Margaret and Raffel, Colin A and Von Werra, Leandro and Wolf, Thomas and others},
  journal={Advances in Neural Information Processing Systems},
  volume={37},
  pages={30811--30849},
  year={2024}
}

@article{cutkosky2019momentum,
  title={Momentum-based variance reduction in non-convex sgd},
  author={Cutkosky, Ashok and Orabona, Francesco},
  journal={Advances in neural information processing systems},
  volume={32},
  year={2019}
}

@article{jiang2024adaptive,
  title={Adaptive variance reduction for stochastic optimization under weaker assumptions},
  author={Jiang, Wei and Yang, Sifan and Wang, Yibo and Zhang, Lijun},
  journal={Advances in Neural Information Processing Systems},
  volume={37},
  pages={22047--22080},
  year={2024}
}

@article{li2021zerosarah,
  title={ZeroSARAH: Efficient nonconvex finite-sum optimization with zero full gradient computation},
  author={Li, Zhize and Hanzely, Slavom{\'\i}r and Richt{\'a}rik, Peter},
  journal={arXiv preprint arXiv:2103.01447},
  year={2021}
}

@inproceedings{xiao2023smoothquant,
  title={Smoothquant: Accurate and efficient post-training quantization for large language models},
  author={Xiao, Guangxuan and Lin, Ji and Seznec, Mickael and Wu, Hao and Demouth, Julien and Han, Song},
  booktitle={International conference on machine learning},
  pages={38087--38099},
  year={2023},
  organization={PMLR}
}

@inproceedings{nguyen2017sarah,
  title={SARAH: A novel method for machine learning problems using stochastic recursive gradient},
  author={Nguyen, Lam M and Liu, Jie and Scheinberg, Katya and Tak{\'a}{\v{c}}, Martin},
  booktitle={International conference on machine learning},
  pages={2613--2621},
  year={2017},
  organization={PMLR}
}

@article{roux2012stochastic,
  title={A Stochastic Gradient Method with an Exponential Convergence Rate for Finite Training Sets},
  author={Roux, Nicolas Le and Schmidt, Mark and Bach, Francis},
  journal={Advances in Neural Information Processing Systems},
  volume={25},
  year={2012}
}

@article{lin2024awq,
  title={Awq: Activation-aware weight quantization for on-device llm compression and acceleration},
  author={Lin, Ji and Tang, Jiaming and Tang, Haotian and Yang, Shang and Chen, Wei-Ming and Wang, Wei-Chen and Xiao, Guangxuan and Dang, Xingyu and Gan, Chuang and Han, Song},
  journal={Proceedings of machine learning and systems},
  volume={6},
  pages={87--100},
  year={2024}
}

@article{yao2022zeroquant,
  title={Zeroquant: Efficient and affordable post-training quantization for large-scale transformers},
  author={Yao, Zhewei and Yazdani Aminabadi, Reza and Zhang, Minjia and Wu, Xiaoxia and Li, Conglong and He, Yuxiong},
  journal={Advances in neural information processing systems},
  volume={35},
  pages={27168--27183},
  year={2022}
}

@article{yao2023zeroquant,
  title={Zeroquant-v2: Exploring post-training quantization in llms from comprehensive study to low rank compensation},
  author={Yao, Zhewei and Wu, Xiaoxia and Li, Cheng and Youn, Stephen and He, Yuxiong},
  journal={arXiv preprint arXiv:2303.08302},
  year={2023}
}

@article{svrg,
  title={Variance reduced stochastic gradient descent with neighbors},
  author={Hofmann, Thomas and Lucchi, Aurelien and Lacoste-Julien, Simon and McWilliams, Brian},
  journal={Advances in Neural Information Processing Systems},
  volume={28},
  year={2015}
}

@inproceedings{rotationtrick,
  title={Restructuring Vector Quantization with the Rotation Trick},
  author={Fifty, Christopher and Junkins, Ronald Guenther and Duan, Dennis and Iyengar, Aniketh and Liu, Jerry Weihong and Amid, Ehsan and Thrun, Sebastian and Re, Christopher},
  booktitle={The Thirteenth International Conference on Learning Representations},
  year={2025}
}

@article{defazio2014saga,
  title={SAGA: A fast incremental gradient method with support for non-strongly convex composite objectives},
  author={Defazio, Aaron and Bach, Francis and Lacoste-Julien, Simon},
  journal={Advances in neural information processing systems},
  volume={27},
  year={2014}
}

@inproceedings{nagel2020up,
  title={Up or down? adaptive rounding for post-training quantization},
  author={Nagel, Markus and Amjad, Rana Ali and Van Baalen, Mart and Louizos, Christos and Blankevoort, Tijmen},
  booktitle={International conference on machine learning},
  pages={7197--7206},
  year={2020},
  organization={PMLR}
}

@inproceedings{esser2020learned,
  title={Learned Step Size Quantization},
  author={Esser, Steven K and McKinstry, Jeffrey L and Bablani, Deepika and Appuswamy, Rathinakumar and Modha, Dharmendra S},
  booktitle={International Conference on Learning Representations},
  year={2020}
}

@inproceedings{yin2019understanding,
  title={Understanding straight-through estimator in training activation quantized neural nets},
  author={Yin, P and Lyu, J and Zhang, S and Osher, S and Qi, YY and Xin, J},
  booktitle={International Conference on Learning Representations},
  year={2019}
}

@inproceedings{gptq,
  title={GPTQ: Accurate Quantization for Generative Pre-trained Transformers},
  author={Frantar, Elias and Ashkboos, Saleh and Hoefler, Torsten and Alistarh, Dan},
  booktitle={The Eleventh International Conference on Learning Representations},
  year={2023}
}

@inproceedings{tseng2024quip,
  title={QuIP $\# $: Even Better LLM Quantization with Hadamard Incoherence and Lattice Codebooks},
  author={Tseng, Albert and Chee, Jerry and Sun, Qingyao and Kuleshov, Volodymyr and De Sa, Christopher},
  booktitle={International Conference on Machine Learning},
  pages={48630--48656},
  year={2024},
  organization={PMLR}
}

\appendix
\newpage
\tableofcontents
\newpage

\appendix

\section{Limitations and Future Work}\label{app:limitations}
While \method already improves stability and final quality in $\le$2-bit LLM QAT, several directions remain open. First, extending Jacobian learning to \textbf{mixed-precision policies} and more aggressive \textbf{activation quantization} (including dynamic-activation and heterogeneous regimes) would broaden applicability beyond the weight-centric configurations emphasized here; a natural approach is to learn coupled sensitivity surrogates for weights, activations, and clipping/scale parameters. Second, \textbf{broader scaling} remains an important empirical direction. The present paper includes an 8B W2A16 accuracy trajectory and explicit runtime measurements on both \texttt{LLaMA3-1B} W2A16 and \texttt{LLaMA3-8B} W2A16; still larger end-to-end QAT runs would further test how far the learned-backward-rule advantage persists. Third, there is room to improve the \textbf{estimation and scheduling} of $B(W)$: although Probe currently outperforms Dither in our implementation (Appendix~\ref{app:ablation-gs-uf}), refined dither statistics, adaptive probe scales, and refresh criteria tied to quantization drift could improve robustness under frequent code changes. Finally, on the theory side, sharpening \textbf{non-convex guarantees beyond PL} and relaxing reliance on code-preserving windows to better capture frequent bin flips in transformer training remains an important challenge; addressing it would deepen understanding of why and when learned sensitivity maps most effectively stabilize quantized optimization.

\section{Broader Impacts}\label{app:broader-impacts}
This work studies a training algorithm for low-bit LLM quantization rather than a new model, dataset, or deployment product. Its main positive impact is efficiency: more stable ultra-low-bit QAT can reduce memory footprint and training/inference cost, which may lower energy use and make model compression research more accessible. The same efficiency gains may also make capable language models easier to deploy, including in applications with potential misuse or fairness, privacy, and safety concerns. \method does not introduce new model capabilities or collect new data, so these risks primarily inherit from the underlying models and deployment contexts. Responsible use should therefore follow the licenses, safety policies, evaluation protocols, and monitoring practices associated with the base models and downstream applications.

\section{Extended Related Work}
\label{app:relwork}

\subsection{LLM Quantization}
\textbf{Post-Training Quantization (PTQ).}
PTQ avoids gradient updates by calibrating a pretrained model with a small dataset.
For LLMs, \algn{SmoothQuant}~\citep{xiao2023smoothquant} smooths activation outliers via an offline migration of scale
from activations to weights, enabling accurate W8A8 deployment across many LLMs.
\algn{GPTQ}~\citep{gptq} introduces a highly accurate, per-layer Hessian-informed weight quantizer that attains 3--4 bit
weight-only compression with strong perplexity retention.
\algn{ZeroQuant}/\algn{ZeroQuant-V2}~\citep{yao2022zeroquant,yao2023zeroquant} propose affordable layer-by-layer PTQ
pipelines with hardware-friendly kernels and calibration-friendly deployment.
\algn{AWQ}~\citep{lin2024awq} further shows that protecting a small set of salient channels---identified by activations rather
than weights---reduces quantization error and improves practical 4-bit inference.
Overall, PTQ has become a strong default for training-free LLM deployment.

\textbf{Quantization-Aware Training (QAT).}
Early QAT works established the use of the Straight-Through Estimator (\ste) for backpropagating through non-differentiable
quantizers.
\algn{DoReFa-Net}~\citep{zhou2016dorefa} quantizes weights, activations, and gradients during training;
\algn{PACT}~\citep{choi2018pact} learns activation clipping levels end-to-end;
and \algn{LSQ}/\algn{LSQ+}~\citep{esser2020learned,bhalgat2020lsq+} learn per-layer step sizes, narrowing the accuracy gap
at low bitwidths.
Despite empirical success, most QAT methods inherit \ste’s bias, which ignores discrete bin geometry and can induce
optimization mismatch at 2--4 bits.
Recent \ste-free directions include proximal surrogates (e.g., \algn{PV-Tuning}~\citep{malinovskii2024pv}) and geometric
transforms (e.g., the \algn{Rotation Trick}~\citep{rotationtrick}). Forward-only zeroth-order QAT methods such as \algn{ZeroQAT}~\citep{zeroqat} attack a complementary scalability bottleneck by changing how gradients are estimated; \method instead replaces only the backward map used with the deployed hard quantizer.

\textbf{Why QAT for low-bit LLMs.}
Consistent with recent observations (e.g., \algn{ParetoQ}~\citep{liu2025paretoq}, \algn{WinQ}~\citep{winq2025}), QAT typically
outperforms PTQ under aggressive settings ($\leq$2-bit weights and/or W\&A quantization), because the optimizer can adapt
layer scales and clipping behavior to the task distribution rather than relying solely on calibration statistics.
Motivated by this, our work focuses on QAT but replaces the fixed \ste with a \emph{learned} surrogate Jacobian that adapts
to local quantization geometry.

\subsection{Beyond the Straight-Through Estimator}\label{app:beyond_ste}
Most QAT methods still rely on the Straight-Through Estimator (\ste) or its clipped variants to approximate the non-differentiable backward path~\citep{esser2020learned,bhalgat2020lsq+,gptq,chee2023quip,tseng2024quip}. Recent work explores several strategies to mitigate \ste mismatch: (i) analytic/proximal backward surrogates (e.g., \algn{PV-Tuning}~\citep{malinovskii2024pv}), (ii) fixed geometric transforms (e.g., \algn{Rotation Trick}~\citep{rotationtrick}), (iii) differentiable relaxations and continuation schedules (e.g., \algn{Soft-to-Hard VQ}~\citep{agustsson2017soft}), and (iv) hybrid first/zeroth-order estimators that use finite-difference probes to correct \ste bias (e.g., \algn{FOGZO}~\citep{yang2025fogzo}). In parallel, extreme low-bit LLM methods may modify the optimizer geometry (e.g., \algn{Muon}~\citep{liu2025muon}).

\textbf{Why Jacobian learning?}
We target the core failure mode shared by these settings: the backward pass lacks a faithful local sensitivity model for the
(piecewise-constant) quantizer. JacQuant therefore \emph{keeps the hard quantizer in the forward pass} but learns a lightweight
surrogate Jacobian $B(W)$ to approximate the quantizer's mean-field sensitivity, reducing gradient bias exactly where \ste is
most problematic (near clipping/saturation) while remaining a drop-in replacement with negligible overhead.

\textbf{JacQuant vs. PV-Tuning.}
\algn{PV-Tuning}~\citep{malinovskii2024pv} eliminates \ste by reformulating quantization as a proximal optimization problem. Instead of copying gradients through the quantizer, it introduces an analytic surrogate via a proximal operator with explicit convergence guarantees. However, the resulting Jacobian is \emph{static}: it depends on the proximal regularization and does not adapt to local loss curvature or quantization geometry. In contrast, \method learns a \emph{data-driven surrogate Jacobian} $B(W)$ that captures backward sensitivity through stochastic probing or dithering, dynamically aligning gradient propagation with the quantized landscape. As a result, \method can adapt its backward gain across groups/layers and over time (e.g., damping saturated coordinates), improving stability and convergence in ultra-low-bit regimes without changing the forward quantizer.

\textbf{JacQuant vs. Rotation Trick.}
The \algn{Rotation Trick}~\citep{rotationtrick} mitigates \ste bias via a fixed geometric constraint (preserving an angle relationship), but still uses a hand-crafted static transform. In contrast, \method learns an adaptive surrogate Jacobian $B(W)$ that models how perturbations in weights affect quantized parameters, generalizing fixed rotations as a special case. This adaptivity is crucial when quantization sensitivity is heterogeneous across layers/groups, where a single fixed transform cannot simultaneously correct boundary effects and saturation behavior. The matched \algn{ParetoQ} comparison in \Cref{tab:main_results} makes this distinction quantitative: \method improves over \algn{Rotation Trick} at W1A16, W1.58A16, and W2A16, with the largest PPL gap at W2A16 (13.3$\rightarrow$12.3).

\textbf{JacQuant vs. Soft-to-Hard VQ.}
\algn{Soft-to-Hard VQ}~\citep{agustsson2017soft} optimizes a \emph{smoothed} relaxation of quantization and gradually anneals toward hard assignments, enabling gradients to flow through the relaxed objective early in training. This is effective when a continuation schedule is acceptable and when the relaxed objective closely tracks the deployed hard quantizer.
\emph{Why we do not rely on relaxation:}
for low-bit LLM QAT, the forward operator used at deployment is typically a fixed hard quantizer, and a soft relaxation can introduce a train--deploy mismatch that depends on the annealing schedule and temperature.
\emph{JacQuant's benefit:}
JacQuant keeps the \emph{exact hard quantizer} in the forward pass and instead learns a local backward sensitivity $B(W)$, which directly corrects gradient bias near bin boundaries and clipping/saturation while preserving forward faithfulness. The methods are complementary: relaxation can serve as a warm start, while JacQuant improves the final hard-quantized optimization by learning the missing backward model.

\textbf{JacQuant vs. FOGZO (zeroth-order information).}
\algn{FOGZO}~\citep{yang2025fogzo} improves upon \ste by injecting zeroth-order (finite-difference) information, producing a hybrid estimator that can correct \ste when the straight-through gradient is unreliable.
\emph{Why we do not use per-step zeroth-order correction:}
finite-difference corrections typically require additional forward evaluations and must be repeated frequently to remain accurate as weights drift.
\emph{JacQuant's benefit:}
JacQuant amortizes sensitivity estimation by learning a reusable surrogate Jacobian $B(W)$ (diagonal or block-diagonal) and updating it only occasionally (e.g., on refresh steps), while using $B(W)$ to modulate standard backprop gradients on all other steps. This yields a principled bias/variance improvement with minimal runtime overhead, rather than paying repeated zeroth-order query cost at every iteration.

\textbf{JacQuant vs. ZeroQAT.}
\algn{ZeroQAT}~\citep{zeroqat} targets a different operating point: it uses forward-only zeroth-order estimation to reduce backpropagation memory and time costs. \method instead stays inside standard first-order QAT and changes only the quantizer backward rule. Thus, ZeroQAT is a systems/estimation alternative for reducing QAT cost, whereas \method is a drop-in gradient-surrogate replacement that can in principle be combined with memory-saving QAT recipes.

\textbf{JacQuant vs. Muon.}
\algn{Muon}~\citep{liu2025muon} is an optimizer that improves the \emph{update geometry} for matrix parameters (e.g., orthogonalizing momentum) and can speed up or stabilize large-scale training in full precision.
\emph{Why we address the quantizer Jacobian:}
optimizer improvements do not resolve the fact that the gradient signal itself is biased when backpropagating through a hard quantizer with \ste.
\emph{JacQuant's benefit:}
\method corrects the \emph{gradient estimator} by learning the missing backward sensitivity map $B(W)$, and is therefore compatible with (and potentially enhanced by) better optimizers such as Muon. This separation of concerns lets us retain standard QAT pipelines and forward quantizers while directly targeting the \ste-induced mismatch.

\textbf{Applicability beyond uniform scalar quantizers.}
Our experiments focus on symmetric uniform LLM QAT because it is the cleanest setting for isolating the backward-rule contribution. The mechanism is nevertheless not tied to uniform bins: for separable non-uniform scalar quantizers, the same group-wise slope estimator can learn local sensitivities around non-equally-spaced thresholds; for vector, trellis, or codebook quantizers, the natural extension is to replace scalar group gains with richer block-structured $B(W)$ matching the quantizer block/codeword structure. These variants may require different probe covariance, block size, and refresh schedules, so we treat them as promising extensions rather than claims established by the present experiments.

These comparisons align with our analysis: \method explicitly learns $B(W)\approx J(W)$, reducing bias to a principled target gradient and naturally damping saturated directions, whereas many alternatives either keep a static surrogate, change the forward objective, or pay repeated zeroth-order query cost.

\subsection{Variance Reduction in Stochastic Optimization}\label{app:vr}
Variance-reduced (\algn{VR}) methods accelerate stochastic optimization by reducing gradient noise via control variates, including
\algn{SAG}~\citep{roux2012stochastic}, \algn{SVRG}~\citep{johnson2013accelerating,malinovsky2022variance},
and \algn{SAGA}~\citep{defazio2014saga}, with extensions such as
\algn{SARAH}~\citep{nguyen2017sarah,li2021zerosarah},
\algn{SPIDER}~\citep{fang2018spider}, and \algn{STORM}~\citep{cutkosky2019momentum,jiang2024adaptive}.
Most of these works assume a (possibly noisy) \emph{fixed} gradient oracle for a smooth objective and focus on shrinking the
variance of minibatch gradients while preserving convergence rates.
A practical distinction is that some classical \algn{VR} schemes (notably \algn{SAG}/\algn{SAGA}) rely on \emph{per-example memory}
(control variates stored for all data indices), which becomes impractical for LLM-scale corpora and streaming training where
the effective dataset size is massive.

\textbf{Why \algn{VR} is nontrivial for \ste-free QAT.}
In \method, stochasticity arises not only from minibatch sampling, but also from \emph{learning the backward model itself}:
we optimize using per-sample surrogate gradients
$F_i(W) := B(W)\,v_i$ where $v_i=\nabla_{q}\ell(f(x_i;q),y_i)$ and $q=Q_\Delta(W)$, while $B(W)$ is estimated by
stochastic probing/dithering and updated intermittently.
This introduces an additional source of noise and drift beyond standard SGD/Adam, and naive frequent Jacobian updates can
inject high-frequency variance that destabilizes training near quantization boundaries.

\textbf{JacSketch connection and our adaptation (scalable \algn{VR} for LLMs).}
\algn{JacSketch}~\citep{gower2021stochastic} interprets \algn{VR} as \emph{Jacobian sketching} for stochastic oracles, yielding
low-variance quasi-gradient updates.
A common and effective instantiation of this principle is \algn{SAGA}-style memory (maintaining a table of historical per-index
control variates), but such \emph{dataset-sized} memory is typically infeasible for LLM training.
\method adopts the control-variate principle but is designed for the LLM regime: we (i) do \emph{not} sketch the Jacobian of the loss;
(ii) instead maintain a lightweight diagonal/block-diagonal surrogate of the \emph{quantizer sensitivity} via $B(W)$; and
(iii) use a \emph{low-memory} \algn{VR} estimator.
In particular, our theory is \algn{VR}-agnostic: we analyze a general class of variance-reduced estimators via an \emph{ABC-type}
oracle assumption (cf.\ the ABC bounds used in our convergence analysis), which covers a broad family of control-variate and
anchor-based methods.
Motivated by implementability at LLM scale, we instantiate this framework with a lightweight \algn{L\mbox{-}SVRG} (loopless \algn{SVRG})-style estimator that requires no per-example table.

Concretely, letting $F_S(W,B):=\frac{1}{|S|}\sum_{i\in S}F_i(W)$, the \algn{L\mbox{-}SVRG} update takes the form
$$
g_t \;=\; F_{S_t}(W_t,B_t)\;-\;F_{S_t}(\tilde W,\tilde B)\;+\;\tilde g,
\qquad
\tilde g \;=\; \mathrm{RefGrad}(\tilde W,\tilde B),
$$
where $(\tilde W,\tilde B,\tilde g)$ is an anchor state computed on a (full or large) reference batch and refreshed
probabilistically.
This reduces variance of the \emph{\method surrogate gradient} without changing the forward quantizer, while avoiding the
$\mathcal{O}(n)$ memory footprint of \algn{SAGA}/\algn{JacSketch} tables.

\textbf{Our key design: amortized refresh enables cheap and stable Jacobian learning.}
Inspired by skip/refresh \algn{VR} frameworks (e.g., \algn{VR-ProxSkip}~\citep{malinovsky2022variance}), \method refreshes the anchor
state $(\tilde W,\tilde B,\tilde g)$ only occasionally (e.g., with probability $p$ or via a drift/variance criterion).
Importantly, we also update $B(W)$ primarily on these refresh steps.
This coupling is central in \method: it (i) amortizes the cost of probing/dithering so most iterations remain as cheap as
standard QAT, (ii) prevents noisy Jacobian estimates from dominating the optimization, and (iii) keeps the \algn{VR} machinery
\emph{LLM-friendly} by using \algn{L\mbox{-}SVRG} (constant-memory anchor state) rather than dataset-sized \algn{SAGA} memory.

\textbf{What is gained relative to ``\algn{VR} alone''.}
Variance reduction by itself only shrinks stochastic noise; it does not address the \ste-induced \emph{bias} caused by using an
incorrect backward Jacobian.
\method separates these effects: Jacobian learning reduces bias by driving $B(W)$ toward the mean-field sensitivity,
while \algn{VR} tightens the remaining variance constant of stochastic updates.
This combination yields a geometry-aware \algn{VR} paradigm tailored to quantized optimization: forward quantization remains
unchanged, while the backward path is both \emph{bias-corrected} (via $B(W)$) and \emph{variance-controlled} (via scalable \algn{VR}).

\section{Implementation Primitives for \jvr}
\label{app:vr-api}

This appendix specifies the minimal “API” used by \Cref{alg:jacquant-unified}.
Let $S\subset\{1,\ldots,n\}$ be a minibatch, $q=Q_\Delta(W)$ (or $q=Q_\Delta(W{+}r){-}r$ for \textsc{Dither}),
$v_i=\nabla_q\ell(f(x_i;q),y_i)$, and $F_i(W){=}B(W)\,v_i$.

\medskip\noindent
\textbf{\GradEst: General control-variate gradient estimator.}
$$
g = \underbrace{\tfrac{1}{|S|}\sum_{i\in S}\big(F_i(W)-h_i(s)\big)}_{\text{stochastic difference}} +  
\underbrace{\tilde g}_{\text{reference term}},
$$
where $s$ is the VR state (e.g., a table), $h_i(s)$ the per-index control variate,
and $\tilde g$ is a reference gradient at anchor $(\tilde W,\tilde B)$.

\emph{Typical instantiations (choose one):}
\begin{align*}
\textbf{\algn{SVRG}-like:}\quad
& h_i(s)=F_i(\tilde W),
\quad \tilde g=\RefGrad(\tilde W,\tilde B)=\tfrac{1}{n}\sum_{j=1}^n F_j(\tilde W). \\
\textbf{\algn{SAGA}-like:}\quad
& s=\{Y_1,\dots,Y_n,\ \bar Y=\tfrac{1}{n}\sum_j Y_j\},\
h_i(s)=Y_i,\ \tilde g=\bar Y. \\
\textbf{\algn{SARAH}-like:}\quad
& \text{refresh: } \tilde g=\tfrac{1}{|S_{\rm ref}|}\sum_{j\in S_{\rm ref}}F_j(W),\ g=\tilde g;\
\text{else: } g=\tfrac{1}{|S|}\sum_{i\in S}\big(F_i(W)-F_i(W^{-})\big)+g^{-}.
\end{align*}

\medskip\noindent
\textbf{\RefGrad: Anchor/reference gradient.}
$$
\RefGrad(\tilde W,\tilde B)
  =  
\tfrac{1}{|S_{\rm ref}|}\sum_{i\in S_{\rm ref}} \tilde B\,\nabla_q \ell\left(f(x_i;\tilde q),y_i\right),
\quad \tilde q=Q_\Delta(\tilde W)\ (\text{or }Q_\Delta(\tilde W{+}r){-}r).
$$

\medskip\noindent
\textbf{\CtrlUpdate: Update of the VR state $s$.}
$$
\CtrlUpdate(s,W,\tilde W;S):
\ \begin{cases}
\text{\algn{SAGA}-like: } Y_i \leftarrow F_i(W)\ \forall i\in S,\quad
\bar Y \leftarrow \bar Y+\tfrac{1}{n}\sum_{i\in S}\big(F_i(W)-Y_i^{\text{old}}\big); \\
\text{\algn{SVRG}-like: } \text{stateless; refresh sets }(\tilde W,\tilde B,\tilde g); \\
\text{\algn{SARAH}-like: } \text{maintain recursive }g\text{ and previous }(W^{-},g^{-}).
\end{cases}
$$

\medskip\noindent
\textbf{\ProbeUpdate: Probe-based Jacobian update.}
For each quantization group $g$ (size $d_g$), draw $\delta_g\sim\mathcal N(0,\sigma^2 I)$, measure
$\Delta q_g=Q_\Delta(W_g+\delta_g)-Q_\Delta(W_g)$, and apply an EMA least-squares step:
$$
\widehat B_g   =   \arg\min_{B\in\mathcal B_g}\ \|\Delta q_g - B\,\delta_g\|^2,\quad
B_g \leftarrow (1{-}\beta)B_g + \beta\,\widehat B_g,\ \ 
\mathcal B_g=\{\text{diag/block-diag}\}.
$$
Diagonal one-step: $[\widehat B_g]_{jj}=\frac{(\Delta q_g)_j\,\delta_{g,j}}{\delta_{g,j}^2+\epsilon}$.

\medskip\noindent
\textbf{\DitherEMA: Dither-based Jacobian update.}
With $r\sim\mathrm{Unif}[-\tfrac{\Delta}{2},\tfrac{\Delta}{2}]$, set $q=Q_\Delta(W{+}r)-r$ and update
$$
B \leftarrow (1{-}\beta)\,B + \beta\,\phi(W;r),
$$
where $\phi(\cdot)$ is a diagonal/block-diagonal sensitivity statistic of $(q,W)$ (damped near saturation).

\begin{algorithm}[t]
\caption{API reference for \jvr}
\label{alg:api-routines}
\small
\begin{algorithmic}[1]
\Function{\GradEst}{$W,B,\tilde W,\tilde B,\tilde g,s;S$}
  \State \textbf{return} $\frac{1}{|S|}\sum_{i\in S}\big(B\nabla_q\ell(f(x_i;q),y_i)-h_i(s)\big)+\tilde g$
\EndFunction
\Function{\RefGrad}{$\tilde W,\tilde B$}
  \State Sample $S_{\rm ref}$ (full or large minibatch)
  \State \textbf{return} $\frac{1}{|S_{\rm ref}|}\sum_{i\in S_{\rm ref}}\tilde B\,\nabla_q\ell(f(x_i;\tilde q),y_i)$
\EndFunction
\Function{\ProbeUpdate}{$W$}
  \For{each group $g$}
    \State Draw $\delta_g\sim\mathcal N(0,\sigma^2 I)$; $\Delta q_g\leftarrow Q_\Delta(W_g{+}\delta_g)-Q_\Delta(W_g)$
    \State $\widehat B_g\leftarrow\arg\min_{B\in\mathcal B_g}\|\Delta q_g{-}B\delta_g\|^2$ \ \ (diag/blk-diag LS)
    \State $B_g \leftarrow (1{-}\beta)B_g + \beta\,\widehat B_g$
  \EndFor
  \State \textbf{return} $B$
\EndFunction
\end{algorithmic}
\end{algorithm}

\section{Non-Variance-Reduced Variants of \algn{JacQuant}}
\label{app:jacquant-variants}

This appendix provides additional implementation details of the proposed Jacobian-learning framework, including the full pseudocode for the  non-variance-reduced (Base) variants (the main variance-reduced variant is provided in \Cref{alg:jacquant-unified}), as well as a summary table comparing their key characteristics.

\textbf{\jvr (Main Algorithm).}
\Cref{alg:jacquant-unified} presents the unified variance-reduced version of \algn{JacQuant}, which integrates the stochastic control-variate principle of \algn{JacSketch}/\algn{SAGA} into quantization-aware training.
The algorithm maintains a table of historical Jacobian-modulated gradients $\{Y_i\}$ to construct a low-variance estimator:
$$
g_t = \bar{Y} + \tfrac{1}{|S_t|}\sum_{i \in S_t} (F_i(W_t) - Y_i),
$$
where $\bar{Y}$ denotes the running mean.
This estimator asymptotically matches the unbiased gradient of the surrogate loss but with reduced stochastic noise.
The Jacobian $B(W)$ is learned either via \textsc{Probe} (periodic perturbations) or \textsc{Dither} (continuous randomization).
This VR variant is the one used in all main-text experiments and theoretical results in \Cref{sec:convergence}.

\begin{algorithm}[t]
\caption{\textbf{\jbase} (Non-Variance-Reduced):
Unified Jacobian Learning for Uniform QAT (Appendix)}
\label{alg:jacquant-unified-no-vr}
\small
\begin{algorithmic}[1]
\Require Pretrained $W$, dataset $\mathcal D$, quantizer $Q_\Delta$, step size $\eta$, batch size $b$, EMA $\beta\in(0,1)$;
\Require \textbf{mode} $\in\{\gbox{\textsc{Probe}},\,\gbox{\textsc{Dither}}\}$; (Probe) probe scale $\sigma$, interval $T_{\text{probe}}$
\State Initialize $B \leftarrow I$ (diagonal or per-group)
\For{$t=1,2,\dots$}
  \State Sample minibatch $S_t \subset \mathcal D$
  \If{\gbox{mode = DITHER}}
     \State Draw $r \sim \mathcal U[-\frac{\Delta}{2},\frac{\Delta}{2}]$, set $q \leftarrow Q_\Delta(W + r) - r$ \Comment{\gbox{Unbiased in mean}}
  \Else \Comment{\gbox{Probe}}
     \State $q \leftarrow Q_\Delta(W)$
  \EndIf
  \State \textit{// Per-sample grads and learned-Jacobian backprop}
  \For{$i \in S_t$}
     \State $v_i \leftarrow \nabla_q \ell(f(x_i;q), y_i)$
  \EndFor
  \State $g_t \leftarrow \frac{1}{|S_t|}\sum_{i\in S_t} B(W)\,v_i$
  \State \textit{// Any first-order optimizer (SGD/AdamW)}
  \State $W \leftarrow W - \eta\, g_t$
  \If{\gbox{mode = PROBE} \textbf{and} $t \bmod T_{\text{probe}} = 0$}
     \For{each group $g$}
        \State Draw $\delta_g \sim \mathcal N(0,\sigma^2 I)$,   $W_g^+ \leftarrow W_g + \delta_g$
        \State $\Delta q_g \leftarrow Q_\Delta(W_g^+) - Q_\Delta(W_g)$
        \State $B_g \leftarrow (1-\beta) B_g + \beta\,\mathrm{diag}\Big(\frac{\Delta q_g \odot \delta_g}{\delta_g \odot \delta_g + \epsilon}\Big)$
        \Comment{\gbox{Probe-based $B$ update}}
     \EndFor
  \ElsIf{\gbox{mode = DITHER}}
     \For{each group $g$}
        \State $B_g \leftarrow (1-\beta) B_g + \beta\,\phi_g(W,r)$
        \Comment{\gbox{EMA to $I$ in interiors; damp near clipping}}
     \EndFor
  \EndIf
\EndFor
\end{algorithmic}
\end{algorithm}

\textbf{\jbase (Non-VR Variant).}
For completeness, \Cref{alg:jacquant-unified-no-vr} provides the simpler non-variance-reduced version used for ablation studies.
This baseline removes the JacSketch control-variate mechanism and uses the plain stochastic gradient
$$
g_t = \tfrac{1}{|S_t|}\sum_{i\in S_t} F_i(W_t),
$$
while keeping the same learned Jacobian updates (\textsc{Probe} or \textsc{Dither}).
Although it has higher variance, this variant more directly reflects standard \ste-style QAT pipelines and is computationally lighter.
Empirically, it converges more slowly and exhibits larger oscillations near quantization boundaries—consistent with the variance-reduction analysis in \Cref{subsec:vr-framework,thm:nonconvex-generic}.

\begin{table}[!htbp]
\centering
\caption{Comparison between \algn{JacQuant-VR} and \algn{JacQuant-Base}.
\algn{VR} integrates variance-reduced control variates (\algn{SAGA}, \algn{SVRG}, \algn{SARAH}),
while \algn{Base} uses direct stochastic gradients.
Both support Probe- and Dither-based Jacobian updates.}
\label{tab:jacquant-comparison-variants}
\small
\setlength{\tabcolsep}{4pt}
\renewcommand{\arraystretch}{1.5}
\begin{tabular}{p{3.2cm}p{5.4cm}p{5.4cm}}
\toprule
\textbf{Feature} & \textbf{\jvr (Main)} & \textbf{\jbase (Ablation)} \\
\midrule
Gradient estimator & Control-variate (\algn{JacSketch}) & Plain stochastic gradient \\
Variance level & Low ($\sim$$\rho\sigma^2$, $\rho{<}1$) & High ($\sigma^2$) \\
Memory overhead & $\approx1.5\%$ (gradient table) & Negligible \\
Convergence rate & $\mathcal{O}(1/T)$ / linear under PL & Same asymptotic, larger constants \\
Stability near clipping & Strong (damped $B(W)$ + low variance) & Moderate (no variance damping) \\
Recommended use & Main experiments, high precision & Ablations, small models \\
\bottomrule
\end{tabular}
\end{table}

\textbf{Comparison.}
\Cref{tab:jacquant-comparison-variants} contrasts the two implementations.
\algn{JacQuant-VR} achieves lower variance and smoother training dynamics at the cost of a modest memory overhead for storing historical gradient tables (about $1.5\%$ of Adam states).
\algn{JacQuant-Base} is stateless and slightly faster per step, but less stable in ultra-low-bit regimes (e.g., \texttt{W2A16}).
Both share the same theoretical convergence guarantees when $B(W)\to J(W)$, but differ in variance constants.

Overall, the variance-reduced version is recommended for high-precision or large-scale training, while the base variant serves as a minimal, easy-to-integrate alternative.

\section{Detailed Experimental Setup and Baselines}\label{app:exp-setup}

\subsection{Experimental Setup}
\textbf{Models and data.}
We evaluate on three representative LLM architectures: \texttt{LLaMA3-1B}, \texttt{LLaMA3-3B} \citep{llama3}, and \texttt{Qwen3-1.7B} \citep{yang2025qwen3}. We also report a \texttt{LLaMA3-8B} W2A16 trajectory to probe scaling behavior beyond the 1--3B matched-ablation suite.
All models are trained on \texttt{FineWebEdu}~\citep{penedo2024fineweb} for pre-training QAT and evaluated on \texttt{WikiText-2}~\citep{merity2017pointer} for perplexity, plus eight standard zero-shot benchmarks including \texttt{ARC-easy/challenge}~\citep{clark2018think}, \texttt{BoolQ}~\citep{clark2019boolq}, \texttt{PIQA}~\citep{bisk2020piqa}, \texttt{SIQA}~\citep{sap2019socialiqa}, \texttt{HellaSwag}~\citep{zellers2019hellaswag}, \texttt{OBQA}~\citep{mihaylov2018can}, and \texttt{WinoGrande}~\citep{sakaguchi2021winogrande}.

\textbf{Quantization configuration.}
We apply symmetric uniform quantization with per-channel scaling to all linear layers (attention Q/K/V projections, MLP up/down projections, output head).
Bit-widths range from 1 to 2 bits for weights (\texttt{W1}, \texttt{W1.58}, \texttt{W2}) and 8 to 16 bits for activations (\texttt{A8}, \texttt{A16}).
We focus primarily on ultra-low-bit regimes (\texttt{W$\leq$2}) where \ste-based methods struggle most. Each quantization group contains 32-256 parameters; we use 128 as the default.

\textbf{Training Details.}
All methods use identical forward quantization $Q(\cdot)$; only the \emph{backward gradient rule} differs. We train with AdamW ($\beta_1=0.9$, $\beta_2=0.999$, $\varepsilon=10^{-8}$), learning rate $\eta=2\times10^{-5}$, constant schedule with 1k warmup steps, weight decay $0$, batch size 8 per device, and gradient accumulation 1.
Training runs for 240k steps with evaluation every 5k steps. For \texttt{LLaMA3-8B}, we report intermediate checkpoints at 30K/60K/90K due to the substantially higher end-to-end QAT cost. For \method, we update the surrogate Jacobian $B(W)$ every 100 steps using EMA with $\beta=0.9$ by default.

We perform a systematic grid search to determine the optimal hyperparameters for \method. The search spans both the probe-based (\probe) and dither-based (\dither) variants, exploring probe noise levels $\sigma \in \{10^{-4}, 10^{-3}, 10^{-2}\}$, EMA coefficients $\beta \in \{0.8, 0.9, 0.95\}$, update frequencies $f_{\text{update}} \in \{50, 100, 200\}$, and group sizes $G \in \{32, 64, 128, 256\}$. We also evaluate the impact of optional JacSketch control variates for variance reduction.
The final results are reported using the best-performing hyperparameter settings selected by validation perplexity on the \texttt{WikiText-2} dataset. This grid is a development-time sweep rather than an added deployment requirement; after selecting defaults, the sensitivity study in \Cref{tab:ablation-gs-uf} shows that performance varies mildly over a broad range of group sizes and refresh intervals.

\textbf{Computational Overhead.}
The additional memory cost of \method for storing $B(W)$ is $\mathcal{O}(G) \approx 15$--$30$ MB for typical LLMs, which is less than $1.5\%$ of Adam optimizer states.
Jacobian updates add negligible runtime overhead ($<2\%$) since they occur infrequently (every 50--200 steps).The runtime study in \Cref{tab:runtime_w2a16,tab:runtime_llama3_8b} reports measured overhead explicitly for both \texttt{LLaMA3-1B} W2A16 and \texttt{LLaMA3-8B} W2A16.
For comparison, true gradient computation via finite differences (used only for validation) adds $10$--$20\%$ overhead but is not needed during standard training.

\textbf{Baselines}
We compare against representative QAT baselines: (i) \algn{STE}~\citep{bengio2013estimating}, the standard straight-through estimator with identity Jacobian; (ii) \algn{ParetoQ}~\citep{liu2025paretoq} and \algn{WinQ}~\citep{winq2025}, which optimize Pareto-balanced or noise-regularized quantization under the same \ste assumption; and (iii) our proposed \probe and \dither, which learn surrogate Jacobians via probing and dithering (\Cref{alg:jacquant-unified}).

\textit{Unless explicitly marked as \method-\textsc{Base} in \Cref{app:jacquant-variants}, all reported \method results use the variance-reduced implementation; the base/non-VR variant is included only to isolate the effect of variance reduction.} All baselines use identical model architectures, training data, optimization settings, and quantization configurations. All experiments are implemented in PyTorch 2.3 with custom CUDA kernels for quantization on 8 NVIDIA A100 (80GB) GPUs.

\section{Zero-Shot Generalization: Complementary Results}\label{app:zero_shot}
\Cref{tab:main_results}-\ref{tab:llama3_8b_results} reports the mean zero-shot generalization performance over eight standard QA reasoning tasks. 
We provide the full set of complementary results for LLaMA3-1B  (\Cref{tab:llama1b_a16}), 
LLaMA3-3B (\Cref{tab:llama_3b}), Qwen-1.7B (\Cref{tab:qwen}) and LLaMA3-8B (\Cref{tab:llama8b}).

\begin{table*}[!htbp]
\centering
\caption{%
\textbf{Zero-shot accuracy (\%) on downstream reasoning tasks. \texttt{LLaMA3-1B}.}
Models are trained with QAT on \texttt{FineWebEdu} and directly evaluated on reasoning benchmarks without fine-tuning.
Higher is better. Best per column in \textbf{bold}.
}
\label{tab:llama1b_a16}
\small
\setlength{\tabcolsep}{3.5pt}
\begin{tabular}{l@{\hskip 6pt}ccccccccc|@{\hskip 6pt}c}
\toprule
Method & \textbf{Wiki2 ($\downarrow$)} & ARC-e & ARC-c & BoolQ & PIQA & SIQA & HellaS. & OBQA & Wino. & \textbf{Avg. Acc. ($\uparrow$)}\\
\midrule
FP16 Baseline & 9.6 & 64.8 & 42.5& 64.8 & 74.8 & 44.8 & 64.4 & 50.2 & 61.5& 58.5 \\ \midrule\midrule
\multicolumn{10}{l}{\texttt{LLaMA3-1B, W1A16}} \\
\midrule
\algn{RTN} & 4.2e8 & 25.0 & 22.5 & 37.6 & 49.5 & 32.9 & 25.0 & 27.1 & 49.6 & 33.7 \\
GPTQ & 3.3e8 & 26.9 & 21.7 & 37.6 & 51.8 & 33.5 & 25.5 & 14.8 & 49.7 & 32.7\\
SpinQuant & 2.4e8 & 25.0 & 22.5 & 37.6 & 49.5 & 32.9 & 25.0 & 27.1 & 49.6 & 33.7 \\ \midrule
\algn{ParetoQ} & 17.9 & 58.7&	39.8&	62.6&	67.4&	42.4&	46.9&	43.9&	53.5&	51.9\\
\ \ $+$\method & \textbf{17.6} & 60.6 & 38.9 & 64.1 & 67.2 & 42.6 & 47.8 & 44.1 & 53.9 & \textbf{52.4}\\ \midrule
\algn{WinQ} & 15.3 & 59.7 & 37.0 & 61.3 & 69.5 & 42.6 & 49.8 & 46.1 & 54.4 & 52.6\\
\ \ $+$\method & \textbf{15.1} & 61.1 & 37.9 & 63.1 & 70.2 & 43.2 & 50.9 & 44.9 & 56.3 & \textbf{53.5}\\ 
\midrule\midrule
\multicolumn{10}{l}{\texttt{LLaMA3-1B, W1.58A16}} \\
\midrule
RTN & 1.8e6 & 24.5 & 22.3 & 62.4 & 52.7 & 33.4 & 25.4 & 18.4 & 50.2 & 36.2 \\ 
GPTQ & 7.5e4 & 24.8 & 22.2 & 38.0 & 52.2 & 32.5 & 25.4 & 17.0 & 49.4 & 32.7  \\
SpinQuant & 5.8e3 & 25.3 & 22.5 & 37.6 & 51.6 & 33.3 & 25.3 & 17.6 & 48.5 & 32.7 \\ \midrule
\algn{ParetoQ} & 14.8 & 61.4&	39.7&	63.1&	71.0&	42.6&	53.1&	47.5&	55.7&	54.3 \\
\ \ $+$\method & \textbf{13.9} & 64.2&	38.2&	62.2&	71.6&	44.4&	55.0&	47.7&	59.1&	\textbf{55.3}\\ \midrule
\algn{WinQ} & 12.9 & 65.0 & 39.7 & 62.1 & 72.9 & 44.3 & 56.2 & 47.1 & 57.4 & 55.6\\
\ \ $+$\method & \textbf{12.3} & 66.1 & 39.2 & 63.4 & 73.6 & 44.8 & 56.9 & 47.5 & 57.1 & \textbf{56.1}\\ 
\midrule\midrule
\multicolumn{10}{l}{\texttt{LLaMA3-1B, W2A16}} \\ \midrule
RTN & 1.5e6 & 24.5 & 23.1 & 62.4 & 52.3 & 33.6 & 25.4 & 17.6 & 50.3 & 36.2 \\ 
GPTQ & 3.8e4 & 26.9 & 21.7 & 37.6 & 51.8 & 33.5 & 25.5 & 14.8 & 49.7 & 32.7  \\
SpinQuant & 3.8e2 & 31.0 & 20.1 & 45.5 & 54.6 & 33.8 & 26.7 & 16.2 & 50.9 & 34.9 \\ \midrule
\algn{ParetoQ} & 13.5 & 63.1&	41.9&	62.5&	72.8&	43.6&	55.2&	49.2&	57.3&	55.7\\
\ \ $+$\method & \textbf{12.3} & 66.1 & 42.0 & 62.1 & 72.8 & 43.3 & 57.4 & 49.8 & 59.1 & \textbf{56.6}\\ \midrule
\algn{WinQ} & 11.9 & 65.0 & 42.5 & 62.5 & 73.8 & 43.2 & 58.4 & 48.1 & 59.1 & 56.6\\
\ \ $+$\method & \textbf{11.8} & 66.4 & 39.8 & 64.7 & 73.3 & 43.8 & 59.2 & 47.8 & 59.4 & \textbf{56.8}\\ 
\bottomrule
\end{tabular}
\vspace{-0.5em}
\end{table*}

\begin{table*}[!htbp]
\centering
\caption{%
\textbf{Zero-shot accuracy (\%) on downstream reasoning tasks. \texttt{LLaMA3-3B}.}
Models are trained with QAT on \texttt{FineWebEdu} and directly evaluated on reasoning benchmarks without fine-tuning.
Higher is better. Best per column in \textbf{bold}.
}
\label{tab:llama_3b}
\small
\setlength{\tabcolsep}{3.5pt}
\begin{tabular}{l@{\hskip 6pt}ccccccccc|@{\hskip 6pt}c}
\toprule
Method & \textbf{Wiki2 ($\downarrow$)} & ARC-e & ARC-c & BoolQ & PIQA & SIQA & HellaS. & OBQA & Wino. & \textbf{Avg. Acc. ($\uparrow$)}\\
\midrule
FP16 Baseline & 7.7 & 72.6 & 50.7 & 74.6 & 78.2 & 48.5 & 74.3 & 53.7 & 69.2 & 65.2 \\ \midrule\midrule
\multicolumn{10}{l}{\texttt{LLaMA3-3B, W1A8}} \\
\midrule
RTN & 7.3e7 & 25.0 & 22.5 & 37.6 & 49.5 & 32.9 & 25.0 & 27.1 & 49.6 & 33.7\\ 
GPTQ & 5.9e7  & 26.6 & 21.9 & 37.6 & 52.5 & 33.7 & 25.6 & 15.0 & 49.4 & 32.8  \\
SpinQuant & 4.5e7 & 27.0 & 22.0 & 37.6 & 52.0 & 33.4 & 25.5 & 14.5 & 50.2 & 32.8 \\ \midrule
ParetoQ & 15.7 & 63.1 & 39.1 & 63.0 & 70.9 & 42.6 & 50.6 & 46.7 & 56.5 & 54.1 \\ 
\ \ $+$\method & \textbf{15.2} & 64.3& 38.9&	60.6&	72.2&	43.2&	50.7&	49.8&	59.1&	\textbf{54.9}	\\
\midrule\midrule
\multicolumn{10}{l}{\texttt{LLaMA3-3B, W1.58A8}} \\
\midrule
RTN & 7.9e5 & 24.7 & 23.4 & 37.6 & 52.9 & 33.8 & 25.5 & 17.4 & 50.3 & 33.2 \\ 
GPTQ & 2.7e5 & 25.3 & 22.9 & 37.6 & 53.7 & 34.4 & 25.3 & 15.6 & 49.5 & 33.0  \\
SpinQuant & 3.1e3 & 26.6 & 23.4 & 39.1 & 52.7 & 34.7 & 25.3 & 16.4 & 48.5 & 33.3 \\ \midrule
\algn{ParetoQ} & 13.1 & 67.9 & 42.0 & 53.9 & 72.0 & 43.9 & 58.2 & 49.6 & 60.2 & 56.0 \\ 
\ \ $+$\method & \textbf{12.7} & 67.3&	42.0&	57.5&	73.4&	43.8&	59.1&	51.8&	61.3&	\textbf{57.1}\\ 
\bottomrule
\end{tabular}
\vspace{-0.5em}
\end{table*}

\begin{table*}[!htbp]
\centering
\caption{%
\textbf{Zero-shot accuracy (\%) on downstream reasoning tasks. \texttt{Qwen-1.7B}.}
Models are trained with QAT on \texttt{FineWebEdu} and directly evaluated on reasoning benchmarks without fine-tuning.
Higher is better. Best per column in \textbf{bold}.
}
\label{tab:qwen}
\small
\setlength{\tabcolsep}{3.5pt}
\begin{tabular}{l@{\hskip 5pt}ccccccccc|@{\hskip 5pt}c}
\toprule
Method & \textbf{Wiki2 ($\downarrow$)} & ARC-e & ARC-c & BoolQ & PIQA & SIQA & HellaS. & OBQA & Wino. & \textbf{Avg. Acc. ($\uparrow$)}\\
\midrule
FP16 Baseline & 16.2 & 68.9 & 41.0 & 78.9 & 71.7 & 45.1 & 59.6 & 38.5 & 61.6 & 58.2 \\ \midrule\midrule
\multicolumn{10}{l}{\texttt{Qwen-1.7B, W1A8}} \\ \midrule
\algn{ParetoQ} & 46.5 & 42.0 & 26.2 & 61.4 & 59.8 & 40.0 & 29.1 & 28.9 & 50.8 & 42.3\\
\ \ $+$\method & \textbf{45.3} & 42.5 & 26.5 & 61.8 & 59.6 & 40.6 & 29.5 & 29.4 & 51.7 & \textbf{42.7}\\ 
\midrule\midrule
\multicolumn{10}{l}{\texttt{Qwen-1.7B, W2A8}} \\\midrule
\algn{ParetoQ} & 22.2 & 52.9 & 32.6 & 62.4 & 65.1 & 42.7 & 41.2 & 32.8 & 53.1 & 47.9 \\
\ \ $+$\method & \textbf{21.5} & 54.1 & 31.1 & 64.4 & 64.1 & 43.5 & 40.3 & 34.1 & 52.8 & \textbf{48.1}\\ 
\bottomrule
\end{tabular}
\vspace{-0.5em}
\end{table*}

\begin{table*}[!htbp]
\centering
\caption{%
\textbf{Zero-shot accuracy (\%) on downstream reasoning tasks. \texttt{LLaMA3-8B} with \texttt{W2A16}.}
Models are trained with QAT on \texttt{FineWebEdu} and directly evaluated on reasoning benchmarks without fine-tuning.
Higher is better. Best per column in \textbf{bold}.
}
\label{tab:llama8b}
\small
\setlength{\tabcolsep}{3.5pt}
\begin{tabular}{l@{\hskip 5pt}ccccccccc|@{\hskip 5pt}c}
\toprule
Method & \textbf{Wiki2 ($\downarrow$)} & ARC-e & ARC-c & BoolQ & PIQA & SIQA & HellaS. & OBQA & Wino. & \textbf{Avg. Acc. ($\uparrow$)}\\
\midrule
\multicolumn{10}{l}{\texttt{30K}} \\ \midrule
\algn{ParetoQ} & 11.78 & 70.79 & 42.54 & 67.37 & 74.80 & 45.41 & 66.12 & 37.89 & 63.39 & 58.54 \\
\ \ $+$\method & \textbf{11.69} & 69.65 & 44.95 & 67.55 & 74.88 & 44.86 & 66.65 & 39.94 & 64.32 & \textbf{59.10} \\ 
\midrule\midrule
\multicolumn{10}{l}{\texttt{60K}} \\ \midrule
\algn{ParetoQ} & 10.43 & 72.50 & 46.58 & 69.30 & 75.56 & 46.88 & 69.09 & 40.72 & 64.70 & 60.67\\
\ \ $+$\method & \textbf{10.32} & 72.74 & 46.21 & 70.73 & 75.42 & 46.92 & 68.91 & 41.12 & 65.80 & \textbf{60.98} \\ 
\midrule\midrule
\multicolumn{10}{l}{\texttt{90K}} \\\midrule
\algn{ParetoQ} & 9.79 & 75.60 & 48.36 & 66.91 & 76.79 & 47.13 & 69.36 & 40.03 & 67.07 & 61.41  \\
\ \ $+$\method & \textbf{9.71} & 75.48 & 49.42 & 70.31 & 76.93 & 47.53 & 69.44 & 41.31 & 65.88 & \textbf{62.04} \\ 
\bottomrule
\end{tabular}
\vspace{-0.5em}
\end{table*}

\newpage
\section{Runtime Overhead}\label{sec:runtime_w2a16}
A key practical goal of \method is to improve low-bit QAT stability \emph{without} sacrificing training throughput.
Table~\ref{tab:runtime_w2a16} reports end-to-end training throughput (steps/second) for \texttt{LLaMA3-1B} W2A16 when augmenting two strong QAT
baselines (\algn{ParetoQ} and \algn{WinQ}) with our Jacobian-learning backpropagation (\probe / \dither).

\paragraph{Group size vs.\ throughput.}
We vary the Jacobian \emph{group size} $g_s$, i.e., the number of weight elements that share a single scale/Jacobian entry in the
block-diagonal surrogate $B(W)$.
Larger $g_s$ reduces the number of groups and thus decreases both (i) the memory traffic for reading/writing $B(W)$ and
(ii) the amount of work in groupwise Jacobian updates (\texttt{ProbeUpdate}/\texttt{DitherEMA}), so it is expected to yield \emph{higher} steps/second. This trend is visible in several settings; e.g., for \algn{WinQ} with \probe, throughput increases from $1.7223$ ($g_s=8$)
to $1.7392$ ($g_s=32$) to $1.7447$ ($g_s=128$).
Despite the above expectation, the measured throughput is not strictly monotonic in all cases.
For example, under \algn{WinQ}+\dither, $g_s{=}32$ yields $1.7411$ steps/s while $g_s{=}128$ yields $1.7308$ steps/s.
This difference is small (about $0.6\%$) and is well within typical run-to-run and measurement variability for large-scale
training, where throughput can be affected by CUDA kernel scheduling, cache effects, data-loader jitter, and asynchronous
execution/overlap.
Crucially, the Jacobian-learning component is only a \emph{small} fraction of the total transformer forward/backward cost,
so these secondary system effects can mask the otherwise expected scaling with $g_s$ when differences are at the sub-1\%
level.

\paragraph{Minimal overhead in practice.}
Across both baselines and both Jacobian estimators, \method maintains throughput close to STE training.
Relative to the STE baseline, the slowdown ranges from roughly \emph{1--5\%} in this {\texttt{LLaMA3-1B}} W2A16 setting, while delivering the
accuracy/stability benefits reported elsewhere.
In particular, with the recommended larger group sizes (e.g., $g_s{=}128$), the overhead is consistently small (about
$1.6\%$ for \algn{WinQ}+\probe and $3.5\%$ for \algn{ParetoQ}+\probe), confirming that \method is a practical drop-in
replacement: the forward quantizer is unchanged, and the additional cost of learning/applying $B(W)$ is amortized by its
lightweight block structure and infrequent updates.

\begin{table}[!htbp]
    \centering
    \small
    \setlength{\tabcolsep}{5pt}
    \caption{\textbf{Training throughput} (steps/s; higher is better) for {\texttt{LLaMA3-1B}} W2A16 QAT.
    $g_s$ is the number of weights that share one scale / one (block-)Jacobian entry.
    Parentheses report relative throughput change vs.\ STE.}
    \label{tab:runtime_w2a16}
    \begin{tabular}{l|c|ccc|ccc}
    \toprule
    Base & \ste &
    \multicolumn{3}{c|}{\probe ($g_s$)} &
    \multicolumn{3}{c}{\dither ($g_s$)} \\
    \cmidrule(lr){3-5}\cmidrule(lr){6-8}
    & & 8 & 32 & 128 & 8 & 32 & 128 \\
    \midrule
    \algn{ParetoQ} \cite{liu2025paretoq} & 1.8074 &
    \begin{tabular}[c]{@{}c@{}}1.7391\\{\scriptsize(-3.8\%)}\end{tabular} &
    \begin{tabular}[c]{@{}c@{}}1.7135\\{\scriptsize(-5.2\%)}\end{tabular} &
    \begin{tabular}[c]{@{}c@{}}1.7435\\{\scriptsize(-3.5\%)}\end{tabular} &
    \begin{tabular}[c]{@{}c@{}}1.7194\\{\scriptsize(-4.9\%)}\end{tabular} &
    \begin{tabular}[c]{@{}c@{}}1.7347\\{\scriptsize(-4.0\%)}\end{tabular} &
    \begin{tabular}[c]{@{}c@{}}1.7334\\{\scriptsize(-4.1\%)}\end{tabular} \\
    \algn{WinQ} \cite{winq2025} & 1.7738 &
    \begin{tabular}[c]{@{}c@{}}1.7223\\{\scriptsize(-2.9\%)}\end{tabular} &
    \begin{tabular}[c]{@{}c@{}}1.7392\\{\scriptsize(-2.0\%)}\end{tabular} &
    \begin{tabular}[c]{@{}c@{}}1.7447\\{\scriptsize(-1.6\%)}\end{tabular} &
    \begin{tabular}[c]{@{}c@{}}1.7229\\{\scriptsize(-2.9\%)}\end{tabular} &
    \begin{tabular}[c]{@{}c@{}}1.7411\\{\scriptsize(-1.8\%)}\end{tabular} &
    \begin{tabular}[c]{@{}c@{}}1.7308\\{\scriptsize(-2.4\%)}\end{tabular} \\
    \bottomrule
    \end{tabular}
\end{table}

\paragraph{Runtime on \texttt{LLaMA3-8B}.}
\Cref{tab:runtime_llama3_8b} reports the corresponding raw step-time analysis for \texttt{LLaMA3-8B} under the \algn{ParetoQ} W2A16 pipeline. The STE baseline takes 1297.01 ms/step (0.7710 steps/s). Practical Probe settings add only 10.17--10.18 ms/step for $g_s\in\{32,128\}$, corresponding to a $0.77$--$0.78\%$ throughput change; Dither remains within $1.62\%$ across all tested group sizes, with the small apparent speedup at $g_s=32$ best interpreted as measurement noise rather than a computational advantage.

\begin{table}[!htbp]
    \centering
    \small
    \setlength{\tabcolsep}{7pt}
    \caption{\textbf{\texttt{LLaMA3-8B} runtime under \algn{ParetoQ} W2A16.} Step time is wall-clock milliseconds per training step; steps/s is recomputed from step time. Parentheses report relative throughput change vs. the STE baseline.}
    \label{tab:runtime_llama3_8b}
    {
    \begin{tabular}{lccc}
    \toprule
    Configuration & Step time (ms) & Steps/s & Relative to STE \\
    \midrule
    \algn{ParetoQ} STE baseline & 1297.01 & 0.7710 & -- \\
    $+$\probe, $g_s=8$   & 1335.81 & 0.7486 & -2.91\% \\
    $+$\probe, $g_s=32$  & 1307.18 & 0.7650 & -0.78\% \\
    $+$\probe, $g_s=128$ & 1306.99 & 0.7651 & -0.77\% \\
    $+$\dither, $g_s=8$   & 1300.75 & 0.7688 & -0.29\% \\
    $+$\dither, $g_s=32$  & 1293.20 & 0.7734 & +0.31\% \\
    $+$\dither, $g_s=128$ & 1318.38 & 0.7585 & -1.62\% \\
    \bottomrule
    \end{tabular}}
\end{table}

\section{Ablation Studies}
\label{app:ablations}
\paragraph{Setup.}
We ablate key hyperparameters of \method on W2A16 training using \algn{ParetoQ}+\method as the base pipeline. We report validation perplexity on WikiText-2 (Wiki2; lower is better) and zero-shot accuracy on eight reasoning benchmarks (higher is better), along with the mean accuracy (Avg. Acc.). Unless otherwise stated, we use the same optimizer and training recipe as in the main experiments. For the Probe-based Jacobian learner, we fix the probe noise scale to $\sigma=10^{-4}$ and set the refresh probability to $p=10^{-3}$.

\subsection{Effect of Jacobian Group Size and Update Frequency}
\label{app:ablation-gs-uf}
\method uses a lightweight diagonal/block-diagonal surrogate Jacobian $B(W)$. Two practical knobs govern its capacity and stability:

\begin{itemize}
    \item \textbf{Group size (gs):} the number of weight elements that share a single (block-)Jacobian entry. Smaller \texttt{gs} increases expressivity (finer-grained sensitivity modeling) but can raise estimation noise and overhead.
    \item \textbf{Update frequency (uf):} how often we refresh the Jacobian surrogate (in steps). More frequent updates can track drift faster, but may inject higher-frequency noise into the backward pass.
\end{itemize}

\begin{table}[!htbp]
\centering
\small
\setlength{\tabcolsep}{3.5pt}
\caption{Sensitivity to Jacobian \emph{group size} (gs) and Jacobian \emph{update frequency} (uf) for \algn{ParetoQ}+\method under W2A16.}
\label{tab:ablation-gs-uf}
\begin{tabular}{cc|c|ccccccccc}
\toprule
\textbf{gs} & \textbf{uf} & \textbf{Wiki2} ($\downarrow$) & ARC-e & ARC-c & BoolQ & PIQA & SIQA & HellaS. & OBQA & Wino. & \textbf{Avg.\ Acc.} ($\uparrow$) \\
\midrule
8   & 50  & 12.6 & 65.2 & 41.0 & 63.3 & 73.9 & 44.8 & 56.8 & 50.4 & 57.3 & 56.6 \\
8   & 100 & 12.6 & 64.9 & 40.0 & 60.2 & 73.2 & 43.8 & 56.8 & 47.7 & 58.9 & 55.7 \\
8   & 200 & 12.5 & 65.5 & 38.4 & 61.0 & 73.1 & 43.1 & 57.2 & 49.2 & 58.0 & 55.7 \\
\midrule
32  & 50  & 12.5 & 64.2 & 42.3 & 61.2 & 72.1 & 43.6 & 57.4 & 49.4 & 56.3 & 55.8 \\
32  & 100 & 12.5 & 64.8 & 40.5 & 61.9 & 72.6 & 43.9 & 56.8 & 51.4 & 57.8 & 56.2 \\
32  & 200 & \textbf{12.3} & 66.0 & 42.0 & 62.1 & 72.8 & 43.3 & 57.4 & 49.8 & 59.1 & \textbf{56.6} \\
\midrule
128 & 50  & 12.5 & 65.3 & 41.2 & 62.1 & 73.1 & 42.7 & 57.3 & 49.4 & 57.6 & 56.1 \\
128 & 100 & 12.3 & 64.4 & 41.7 & 62.6 & 73.1 & 43.3 & 57.0 & 50.2 & 58.5 & 56.3 \\
128 & 200 & 12.5 & 65.1 & 42.8 & 62.5 & 74.0 & 43.4 & 57.4 & 48.4 & 58.4 & 56.5 \\
\bottomrule
\end{tabular}
\end{table}

\paragraph{Findings.}
\textbf{(1) Moderate group sizes are best in this regime.}
Although smaller \texttt{gs} increases representational capacity (fewer weights share one Jacobian scalar), we find the best overall setting at \texttt{gs=32}. A plausible explanation is that very fine-grained grouping (\texttt{gs=8}) makes the probe-based estimation problem noisier and may require either (i) more training steps, or (ii) a more careful learning-rate / refresh schedule to realize its potential. We leave a more systematic scheduler/horizon study for future work.

\noindent\textbf{(2) Jacobian updates should not be too frequent.}
For \texttt{gs=32} and \texttt{gs=128}, increasing \texttt{uf} consistently improves performance, and \texttt{uf=200} yields the strongest overall results. This supports the intuition that overly frequent updates can inject high-frequency noise into $B(W)$ and destabilize the surrogate gradients, whereas
less frequent refreshes amortize the estimation and allow the optimizer to exploit a more stable backward model.

\subsection{Probe vs. Dither Jacobian Learners}
\label{app:ablation-probe-vs-dither}
We next compare the two Jacobian-learning mechanisms described in \S\ref{sec:method}:
(i) \probe, which estimates local slopes via Gaussian probes, and
(ii) \dither, which leverages subtractive dithering.
Both variants are evaluated using the same configuration (\texttt{gs=32}, \texttt{uf=200}).

\begin{table}[!htbp]
\centering
\small
\setlength{\tabcolsep}{3.5pt}
\caption{\probe vs.\ \dither under \texttt{gs=32}, \texttt{uf=200} (W2A16).
Probe yields stronger WikiText-2 perplexity and higher average accuracy in our current implementation.}
\label{tab:ablation-probe-dither}
\begin{tabular}{l|c|ccccccccc}
\toprule
\textbf{Method} & \textbf{Wiki2} ($\downarrow$) & ARC-e & ARC-c & BoolQ & PIQA & SIQA & HellaS. & OBQA & Wino. & \textbf{Avg.\ Acc.} ($\uparrow$) \\
\midrule
\dither & 12.5 & 64.2 & 42.3 & 61.2 & 72.1 & 43.6 & 57.4 & 49.4 & 56.3 & 55.8 \\
\probe  & \textbf{12.3} & 66.0 & 42.0 & 62.1 & 72.8 & 43.3 & 57.4 & 49.8 & 59.1 & \textbf{56.6} \\
\bottomrule
\end{tabular}
\end{table}

\paragraph{Takeaway.}
\probe outperforms \dither in this configuration, improving both WikiText-2 perplexity and average zero-shot accuracy. Consequently, we use \probe as the default Jacobian learner in the main experiments. Improving the dither estimator (e.g., via refined statistics or tighter coupling to clipping dynamics) is an interesting direction for future work.

\section{Missing Proofs}\label{app:theory}
\subsection{Proof of \Cref{lem:probe-consistency}}
\begin{proof}
\textbf{Step 0 (Population target and orthogonality).}
On a fixed window, define the \emph{population} least-squares target $J_g(W)$ as
$$
J_g(W)\ \in\ \arg\min_{B\in\mathcal{B}_g}\ \E\big[\|\Delta q_g - B \delta_g\|^2\big],
$$
where $\mathcal{B}_g$ denotes the admissible class (diagonal or block-diagonal matrices on group $g$).
The first-order (normal) equations at the population optimum yield
\begin{equation}
\label{eq:pop-normal}
\E \big[\delta_g \delta_g^\top\big] J_g(W)^\top  =  \E \big[\delta_g \Delta q_g^\top\big],
\qquad\text{equivalently}\qquad
\E \big[\Delta q_g \delta_g^\top\big]  =  J_g(W) \E \big[\delta_g\delta_g^\top\big].
\end{equation}
Define the residual $\varepsilon_g \eqdef \Delta q_g - J_g(W) \delta_g$. Then \eqref{eq:pop-normal}
is equivalent to the \emph{moment orthogonality}
\begin{equation}
\label{eq:orthogonality}
\E \big[\varepsilon_g \delta_g^\top\big]  =  0.
\end{equation}

\smallskip
\textbf{Step 1 (Finite-sample OLS form).}
Stack the $m$ samples for group $g$ as a design matrix
$X\in\R^{m\times d_g}$ with rows $\delta_{g,k}^\top$,
and a response matrix $Y\in\R^{m\times d_g}$ with rows $\Delta q_{g,k}^\top$.
Let $\widehat B_g$ be the (restricted) least-squares minimizer over $\mathcal{B}_g$:
$$
\widehat B_g\ \in\ \arg\min_{B\in\mathcal{B}_g}\ \|Y - X B^\top\|_F^2.
$$
If we first analyze the \emph{unrestricted} OLS solution
$\widetilde B_g^\top = (X^\top X)^{-1} X^\top Y$
and then project onto $\mathcal{B}_g$, the projection can only reduce the error
in operator norm (orthogonal projection is a contraction).
Thus it suffices to bound $\|\widetilde B_g - J_g(W)\|$.

Using $Y = X J_g(W)^\top + E$ with $E\in\R^{m\times d_g}$ whose rows are $\varepsilon_{g,k}^\top$,
$$
\widetilde B_g^\top - J_g(W)^\top
= (X^\top X)^{-1} X^\top E
\quad\Longrightarrow\quad
\|\widetilde B_g - J_g(W)\|
\ \le\ \big\|(X^\top X/m)^{-1}\big\| \cdot \big\|X^\top E/m\big\|.
$$
We will bound the two factors separately with high probability.

\smallskip
\textbf{Step 2 (Gram matrix concentration).}
Since $\delta_{g,k}\sim\mathcal N(0,\sigma^2 I_{d_g})$ i.i.d.,
by standard Wishart concentration (e.g., matrix Chernoff for Gaussian design),
there exist universal constants $c_1,c_2>0$ such that, with probability at least $1-\delta/2$,
\begin{equation}
\label{eq:gram-conc}
\big\|X^\top X/m - \sigma^2 I_{d_g}\big\| \ \le\ c_1 \sigma^2 \sqrt{\frac{d_g+\log(2/\delta)}{m}}
\ \eqdef\ \sigma^2  \eta_m,
\end{equation}
provided $m\gtrsim d_g+\log(1/\delta)$.
If $\eta_m\le 1/2$, then
$\lambda_{\min}(X^\top X/m)\ge \sigma^2(1-\eta_m)\ge \sigma^2/2$, hence
\begin{equation}
\label{eq:gram-inv}
\big\|(X^\top X/m)^{-1}\big\| \ \le\ \frac{1}{\sigma^2(1-\eta_m)} \ \le\ \frac{2}{\sigma^2}.
\end{equation}

\smallskip
\textbf{Step 3 (Cross term concentration).}
Consider the sum of independent random matrices
\(
Z_k \eqdef \delta_{g,k} \varepsilon_{g,k}^\top \in \R^{d_g\times d_g}.
\)
By \eqref{eq:orthogonality}, $\E[Z_k]=0$.
Assume the residuals $\varepsilon_{g,k}$ are conditionally sub-Gaussian with
$\psi_2$-norm bounded by a constant $K$ (this holds if $\Delta q_g$ is bounded/Lipschitz in the window or by standard dithering arguments);
$\delta_{g,k}$ are Gaussian with scale $\sigma$.
Then each $Z_k$ is sub-exponential in operator norm, with Orlicz $\psi_1$-norm bounded by
$\|Z_k\|_{\psi_1}\le c \sigma K$ for a universal $c$ (product of sub-Gaussians).
By the matrix Bernstein inequality for sums of independent, mean-zero, sub-exponential matrices,
there exist constants $c_3,c_4>0$ such that, with probability at least $1-\delta/2$,
\begin{equation}
\label{eq:cross-conc}
\big\| X^\top E/m \big\|
\ =\ \Big\| \frac{1}{m}\sum_{k=1}^m Z_k \Big\|
\ \le\ c_3 \sigma K \sqrt{\frac{d_g+\log(2/\delta)}{m}}.
\end{equation}

\smallskip
\textbf{Step 4 (Combine).}
On the intersection of the events in \eqref{eq:gram-inv} and \eqref{eq:cross-conc}
(which holds with probability at least $1-\delta$ by a union bound), we have
$$
\|\widetilde B_g - J_g(W)\|
\ \le\ \frac{2}{\sigma^2}\ \cdot\ c_3 \sigma K \sqrt{\frac{d_g+\log(2/\delta)}{m}}
\ \le\ C \frac{K}{\sigma} \sqrt{\frac{d_g+\log(1/\delta)}{m}},
$$
for $C=2c_3$ up to absorbing constants.
Finally, projecting $\widetilde B_g$ onto the admissible class $\mathcal{B}_g$
(diagonal or block-diagonal) is a non-expansive operation in operator norm,
so the same bound holds for $\widehat B_g$.

\smallskip
\textbf{Step 5 (Constant dependence).}
All constants $c_i$ depend only on universal constants in Gaussian/Wishart concentration
and on the sub-Gaussian parameter $K$ of the residual $\varepsilon_g$;
they do not depend on $m,d_g,\sigma$ beyond the explicit factors shown.
Setting $C\propto K$ yields the stated claim.
\end{proof}

\subsection{Proof of \Cref{prop:sa-tracking}}
\begin{proof}
\textbf{Notation and setup.}
Write $J_t \equiv J(W_t)$ for brevity.
On each (code-preserving) window, the probe-LS estimator $\widehat B_t$ targets $J_t$.
Let the admissible class be diagonal or block-diagonal per group; projection onto this class
is non-expansive in operator norm, so it suffices to work with $\widehat B_t$ directly.

\smallskip
\textbf{Step 1: One-step error decomposition.}
We have the affine recursion
$$
B_{t+1}-J_t  =  (1-\beta_t) (B_t-J_t)  +  \beta_t (\widehat B_t - J_t).
$$
Taking conditional expectation w.r.t.\ $\F_t$ (the sigma-field up to time $t$) and using Jensen,
\begin{equation}
\label{eq:cond-decomp}
\E \left[\|B_{t+1}-J_t\| \middle| \F_t\right]
 \le  (1-\beta_t) \|B_t-J_t\|  +  \beta_t \nu_t,
\end{equation}
where we set $\nu_t  \eqdef  \E \left[\|\widehat B_t - J_t\| \middle| \F_t\right]$.

\smallskip
\textbf{Step 2: Bounding the estimation error $\nu_t$.}
Uniform persistent excitation of probes implies a uniformly bounded, well-conditioned
Gram matrix on each window: there exists $\lambda>0$ such that the population covariance
$\Sigma_t \eqdef\E[\delta_t\delta_t^\top]$ satisfies $\Sigma_t \succeq \lambda I$,
and mini-batch Gram matrices concentrate around $\Sigma_t$.
By the LS consistency bound from Lemma~\ref{lem:probe-consistency}
(applied per group and aggregated; projection is non-expansive),
there exists a deterministic sequence $\bar\nu_t\to 0$ such that
\begin{equation}
\label{eq:nu-rate}
\nu_t  \le  \bar\nu_t \qquad \text{a.s.}
\end{equation}
(For instance, with $m_t$ probes at step $t$,
$\bar\nu_t  =  C \sigma^{-1} \sqrt{(d_g+\log t)/m_t}$ per group and the max of these over groups.)
We will only use that $\bar\nu_t\to 0$.

\smallskip
\textbf{Step 3: From $J_t$ to $J_{t+1}$ (drift term).}
We ultimately need a bound on $\E[\|B_{t+1}-J_{t+1}\|\mid \F_t]$.
By triangle inequality,
\begin{equation}
\label{eq:drift-split}
\|B_{t+1}-J_{t+1}\|  \le  \|B_{t+1}-J_t\|  +  \|J_{t+1}-J_t\|.
\end{equation}
Assume $J(\cdot)$ is $L_J$-Lipschitz (this follows from the window smoothness of $m(W)$
and the boundedness of dithering/probing), and that $W_t$ drifts on a slower timescale than $B_t$:
there exists a stepsize sequence $\eta_t$ for the weight updates with $\eta_t/\beta_t \to 0$ and
$\E[\|W_{t+1}-W_t\|\mid\F_t]\le c_W \eta_t$ (bounded second moments for $G_t$ suffice).
Hence
\begin{equation}
\label{eq:J-drift}
\E \left[\|J_{t+1}-J_t\|  \middle|  \F_t\right]
 \le  L_J \E \left[\|W_{t+1}-W_t\|  \middle|  \F_t\right]
 \le  L_J c_W \eta_t
 \eqdef  \xi_t,
\qquad \text{with } \ \xi_t = o(\beta_t) .
\end{equation}

\smallskip
\textbf{Step 4: Combine.}
Taking conditional expectation in \eqref{eq:drift-split} and applying \eqref{eq:cond-decomp},
\eqref{eq:nu-rate}, \eqref{eq:J-drift},
$$
\E \left[\|B_{t+1}-J_{t+1}\|  \middle|  \F_t\right]
 \le  (1-\beta_t) \|B_t-J_t\|  +  \beta_t \bar\nu_t  +  \xi_t .
$$
Let $T_0$ be such that for all $t\ge T_0$, $\beta_t\le \tfrac12$ and $\xi_t\le \bar\nu_t$ (possible since
$\beta_t\to 0$ and $\xi_t=o(\beta_t)$).
Define
$$
\rho_t  \eqdef  1-\beta_t \in (0,1),\qquad
\zeta_t  \eqdef  \beta_t \bar\nu_t + \xi_t \ \to\ 0 .
$$
Then for all $t\ge T_0$,
\begin{equation}
\label{eq:key-rec}
\E \left[\|B_{t+1}-J_{t+1}\|  \middle|  \F_t\right]
 \le  \rho_t \|B_t-J_t\|  +  \zeta_t .
\end{equation}
Since $\rho_t \le \rho \coloneqq 1 - \min_{t\in[T_0,2T_0]}\beta_t < 1$
over any fixed window of length $T_0$ (or using a piecewise-constant lower bound on $\beta_t$
per code-preserving window), we may write the simpler bound claimed in the proposition:
$$
\E \left[\|B_{t+1}-J_t\|  \middle|  \F_t\right]
 \le  \rho \|B_t-J_t\|  +  \zeta_t,
\quad \text{with } 0<\rho<1,\ \zeta_t\to 0.
$$
(Equivalently, one may keep $\rho_t$ in \eqref{eq:key-rec}; both yield the same limit statements.)

\smallskip
\textbf{Step 5: Convergence.}
Define $X_t\coloneqq \|B_t-J_t\|$ and note that
$\E[X_{t+1}\mid \F_t]\le \rho_t X_t + \zeta_t$
with $\rho_t\in(0,1)$, $\sum_t (1-\rho_t)=\sum_t \beta_t = \infty$,
$\sum_t (1-\rho_t)^2 = \sum_t \beta_t^2 < \infty$,
and $\zeta_t\to 0$.
A standard Robbins--Siegmund supermartingale argument yields $X_t\to 0$ in probability;
if additionally $\sum_t \zeta_t < \infty$ (e.g., by choosing probe batch sizes $m_t$
so that $\sum_t \beta_t \bar\nu_t<\infty$ and using $\sum_t \xi_t<\infty$ from two-timescale drift),
then $X_t\to 0$ almost surely.
This proves the claim.
\end{proof}

\subsection{Proof of \Cref{lem:dither-fixedpoint}}
\begin{proof}
\textbf{Population operator.}  Fix a code-preserving window and a weight group $g$ of size $d_g$.
Let $\delta\in\R^{d_g}$ be a probe with zero mean and covariance
$\Sigma_\delta\succ 0$ (e.g.\ $\delta\sim\mathcal N(0,\sigma^2 I)$), independent of the dither $r$.
Define the dithered de-quantized map $q(W,r) = Q_\Delta(W+r)-r$ and the finite difference
\(
\Delta q(W;\delta,r) = q(W+\delta,r)-q(W,r).
\)
The \emph{population} least-squares (LS) operator maps $W$ to the minimizer
$$
B^\star(W)\ \in\ \arg\min_{B\in\mathcal{B}_g}\ 
\E\big[\|\Delta q(W;\delta,r)-B \delta\|^2\big],
$$
where $\mathcal{B}_g$ is the (block-)diagonal class used in the algorithm.
The normal equations give
\begin{equation}
\label{eq:pop-normal-dither}
\E[\Delta q(W;\delta,r) \delta^\top]  =  B^\star(W) \E[\delta\delta^\top]
\qquad\Longrightarrow\qquad
B^\star(W)  =  \E[\Delta q \delta^\top] \Sigma_\delta^{-1}.
\end{equation}

\textbf{Identify the cross-moment.}
Consider the map \(m(W)=\E_r[q(W,r)]\). By the mean-value form of the fundamental theorem of calculus in $\R^{d_g}$ and independence of $(\delta,r)$,
$$
\Delta q(W;\delta,r)
 =  \int_{0}^{1} \nabla_W q(W+t\delta,r) dt \cdot \delta,
\quad\text{hence}\quad
\E[\Delta q \delta^\top]
 = \E \left[\int_0^1 \nabla_W q(W+t\delta,r) dt\right] \E[\delta\delta^\top].
$$
If $q(\cdot,r)$ is (locally) Lipschitz and piecewise smooth (true for uniform mid-rise with clipping), then $\nabla_W q(W+t\delta,r)$ exists a.e., is bounded, and dominated convergence applies. Passing expectation and integral through (Fubini + DCT) and using independence,
$$
\E[\Delta q \delta^\top]
 =  \left(\int_0^1 \E\big[\nabla_W q(W+t\delta,r)\big] dt\right) \Sigma_\delta
 =  \left(\int_0^1 \nabla m(W+t\delta) dt\right) \Sigma_\delta,
$$
where we used $\nabla m(\cdot)=\nabla_W \E_r[q(\cdot,r)]=\E_r[\nabla_W q(\cdot,r)]$ (interchange justified by dominated convergence).

\textbf{Limit as probes shrink.}
If $\delta$ is zero-mean with bounded second moment, then
\(
\int_0^1 \nabla m(W+t\delta) dt \to \nabla m(W)\equiv J(W)
\)
in $L^1$ (and a.s.\ along a subsequence) as the probe variance shrinks
(e.g., $\delta\sim\mathcal N(0,\sigma^2 I)$ with $\sigma \downarrow 0$),
by continuity of $\nabla m$. Thus,
$$
\lim_{\sigma\downarrow 0}\ \E[\Delta q \delta^\top]  =  J(W) \Sigma_\delta.
$$
Returning to \eqref{eq:pop-normal-dither},
\(
\lim_{\sigma\downarrow 0} B^\star(W) = J(W).
\)

\textbf{Uniqueness and EMA convergence.}
Because $\Sigma_\delta\succ 0$, the LS minimizer is unique in $\mathcal{B}_g$; the population operator has the unique fixed point $J(W)$.
The EMA recursion $B_{t+1}=(1-\beta_t)B_t+\beta_t \widehat B_t$ with persistent dithering and Robbins–Monro stepsizes tracks the population fixed point (standard SA argument; cf.\ Prop.~\ref{prop:sa-tracking} with zero drift on a window).
Hence $B_t\to J(W)$ (in probability, and a.s.\ under summability of perturbations).
\end{proof}

\subsection{Proof of \Cref{cor:dominance}}
\begin{proof}
From Lemma~(Bias to the target; stated earlier) we have, on any window,
$$
\|g_{B_t}(W_t)-g^\dagger(W_t)\|  =  \|B_t-J(W_t)\| \|\bar v_t\|,
\qquad
\|g^{\ste}(W_t)-g^\dagger(W_t)\|  =  \|I-J(W_t)\| \|\bar v_t\|
 =  \gamma(W_t) \|\bar v_t\|.
$$
By Prop.~\ref{prop:sa-tracking} (probe) or Lemma~\ref{lem:dither-fixedpoint} (dither),
$\|B_t-J(W_t)\|\to 0$ (in probability on a window; and across windows under vanishing drift).
Hence, for any $\epsilon>0$ there exists a finite $T_\epsilon$ such that for all $t\ge T_\epsilon$,
$\|B_t-J(W_t)\|\le \gamma(W_t)-\epsilon$ whenever $\gamma(W_t)>0$.
Multiplying both sides by $\|\bar v_t\|$ yields the claimed dominance gap
\(
\|g_{B_t}-g^\dagger\|\le \|g^{\ste}-g^\dagger\| - \epsilon \|\bar v_t\|.
\)
If $\gamma(W_t)=0$ (bin interiors: $J(W_t)=I$), both surrogates coincide and the inequality is tight.
\end{proof}

\subsection{Proof of \Cref{thm:nonconvex-generic}}
\begin{proof}
Let $L_t\equiv \widetilde{L}(W_t)$ and $\nabla L_t\equiv \nabla\widetilde{L}(W_t)$.
Write the update as $W_{t+1}=W_t-\eta\,G_t$, where $G_t$ is the (general VR) stochastic gradient.

\Cref{as:vr} presents the “ABC” inequalities (\cite{malinovsky2022variance}, Assumption 4), and are satisfied by
\algn{SAGA}/\algn{SVRG}/\algn{SARAH} instantiations used in \jvr (\citep{defazio2014saga, svrg, nguyen2017sarah}).
The only algorithm-specific \emph{bias} arises from the learned Jacobian:
$$
\EE[G_t\mid W_t,s_t]   =   \nabla L_t + b_t,
\qquad b_t := \big(B(W_t)-J(W_t)\big)\,\bar v_t,
$$
with $\varepsilon^2:=\sup_t \EE\|b_t\|^2$ finite (by \Cref{prop:sa-tracking}/\Cref{lem:dither-fixedpoint}).

\textbf{Descent lemma.}
$L_s$-smoothness gives
$$
L_{t+1}\le L_t - \eta\langle \nabla L_t, G_t\rangle + \tfrac{L_s\eta^2}{2}\|G_t\|^2.
$$
Taking conditional expectation and using $\EE[G_t\mid\F_t]=\nabla L_t+b_t$,
\begin{align*}
\EE[L_{t+1}\mid\F_t]
&\le L_t - \eta\|\nabla L_t\|^2 - \eta\langle \nabla L_t,b_t\rangle
    + \tfrac{L_s\eta^2}{2}\EE\|G_t\|^2 .
\end{align*}
Decompose $\|G_t\|^2 = \|\nabla L_t\|^2 + \EE\|G_t-\nabla L_t\|^2
+ 2\langle \nabla L_t,\, \EE[G_t-\nabla L_t\mid\F_t]\rangle$,
and note $\EE[G_t-\nabla L_t\mid\F_t]=b_t$.
Using $2ab\le a^2+b^2$ twice and \eqref{eq:abc-1-main},
\begin{align*}
\EE[L_{t+1}\mid\F_t]
&\le L_t
 +\Big(-\eta + L_s\eta^2\Big)\|\nabla L_t\|^2
 + \tfrac{L_s\eta^2}{2}\big(2A D_t + B\sigma_t + C\big)
 + \tfrac{\eta}{2}\|b_t\|^2 + L_s\eta^2\|b_t\|^2 .
\end{align*}
Choose $\eta\le \tfrac{1}{2L_s}$ so that $-\eta+L_s\eta^2\le -\tfrac{\eta}{2}$, and set
$c_b:=\tfrac12+ \eta L_s\le 1$ for such $\eta$. Then
\begin{equation}
\label{eq:descent-core}
\EE[L_{t+1}\mid\F_t]
\le L_t
- \tfrac{\eta}{2}\|\nabla L_t\|^2
+ L_s\eta^2\left(A D_t + \tfrac{B}{2}\sigma_t + \tfrac{C}{2}\right)
+ c_b\,\eta\,\|b_t\|^2.
\end{equation}

\textbf{Lyapunov coupling with the VR state.}
Define $\Phi_t:=\EE[D_t + c_2 \sigma_t]$ with $c_2>0$ chosen below.
Taking total expectation in \eqref{eq:descent-core} and using \eqref{eq:abc-2-main},
\begin{align*}
\EE[D_{t+1}]
&\le \EE[D_t]
- \tfrac{\eta}{2}\EE\|\nabla L_t\|^2
+ L_s\eta^2\left(A \EE[D_t] + \tfrac{B}{2}\EE[\sigma_t] + \tfrac{C}{2}\right)
+ c_b\,\eta\,\varepsilon^2,\\
\EE[\sigma_{t+1}]
&\le 2\tilde A\,\EE[D_t] + \tilde B\,\EE[\sigma_t] + \tilde C.
\end{align*}
Summing the two with weight $c_2$,
\begin{align*}
\EE[\Phi_{t+1}]
&\le \EE[\Phi_t]
- \tfrac{\eta}{2}\EE\|\nabla L_t\|^2
+ \Big(L_s\eta^2 A + 2c_2\tilde A\Big)\EE[D_t]
+ \Big(\tfrac{L_s\eta^2 B}{2} - c_2(1-\tilde B)\Big)\EE[\sigma_t] \\
&\qquad + L_s\tfrac{\eta^2 C}{2} + c_2\tilde C + c_b\,\eta\,\varepsilon^2.
\end{align*}
Choose
$$
c_2   \ge   \frac{L_s \eta^2 B}{2(1-\tilde B)}
$$
so the $\EE[\sigma_t]$ coefficient is nonpositive.
Discard the nonpositive term and use $\EE[D_t]\le \EE[\Phi_t]$:
\begin{equation}
\label{eq:Phi-rec}
\EE[\Phi_{t+1}]
\le \EE[\Phi_t]
- \tfrac{\eta}{2}\EE\|\nabla L_t\|^2
+ \Big(L_s\eta^2 A + 2c_2\tilde A\Big)\EE[\Phi_t]
+ L_s\tfrac{\eta^2 C}{2} + c_2\tilde C + c_b\,\eta\,\varepsilon^2.
\end{equation}
Since $c_2=\Theta(\eta^2)$, the factor multiplying $\EE[\Phi_t]$ is $\Theta(\eta^2)$.
Summing \eqref{eq:Phi-rec} for $t=0,\dots,T-1$ and dividing by $T$ yields
$$
\frac{\eta}{2}\,\frac{1}{T}\sum_{t=0}^{T-1}\EE\|\nabla L_t\|^2
  \le   \frac{\EE[\Phi_0-\Phi_T]}{T}
+ \underbrace{\mathcal{O}(\eta^2)}_{\text{from } L_s\eta^2 A + 2c_2\tilde A}\cdot \sup_t \EE[\Phi_t]
+ \mathcal{O}(\eta^2) + \mathcal{O}(\eta)\,\varepsilon^2.
$$
As usual, $L$ is bounded below, hence $\sup_t \EE[\Phi_t]\le \EE[\Phi_0] + \mathcal O(T\eta^2) + \mathcal O(T\eta)\varepsilon^2$,
so the middle term contributes $\mathcal O(\eta^2)$. Using
$\min_{0\le t<T}\EE\|\nabla L_t\|^2 \le \tfrac{1}{T}\sum_{t<T}\EE\|\nabla L_t\|^2$ we obtain
$$
\min_{0\le t<T}\EE\|\nabla \widetilde L(W_t)\|^2
  \le  
\underbrace{\mathcal O\left(\frac{1}{\eta T}\right)}_{\text{from }\Phi_0-\Phi_T}
  +   \underbrace{\mathcal O(\varepsilon^2)}_{\text{Jacobian bias}}
  +   \underbrace{\mathcal O(\eta)}_{\text{VR noise floor}}.
$$
This is the claimed bound.
\end{proof}



\subsection{Proof of \Cref{thm:pl-generic}}
\begin{proof}
We reuse the notation and the \algn{VR}-ABC inequalities from the previous proof.
From \eqref{eq:descent-core} and taking full expectation,
$$
\EE[L_{t+1}-L^\star]
\le \EE[L_t-L^\star]
- \tfrac{\eta}{2}\EE\|\nabla L_t\|^2
+ L_s\eta^2\left(A \EE[D_t] + \tfrac{B}{2}\EE[\sigma_t] + \tfrac{C}{2}\right)
+ c_b\,\eta\,\varepsilon^2 .
$$
Invoke the PL inequality for $\widetilde L$:
$\|\nabla L_t\|^2 \ge 2\mu (L_t-L^\star)=2\mu D_t$.
Hence
$$
\EE[D_{t+1}]
\le \Big(1-\eta\mu\Big)\EE[D_t]
+ L_s\eta^2\left(A \EE[D_t] + \tfrac{B}{2}\EE[\sigma_t] + \tfrac{C}{2}\right)
+ c_b\,\eta\,\varepsilon^2 .
$$
Choose $\eta\le \min\{1/(2L_s),\, \mu/(4L_s A)\}$ so that
$1-\eta\mu + L_s\eta^2 A \le 1 - \tfrac{\eta\mu}{2}$.
We still need to control $\EE[\sigma_t]$.
Unroll \eqref{eq:abc-2-main}:
$$
\EE[\sigma_t]
  \le   \tilde B^t \EE[\sigma_0]
+ 2\tilde A \sum_{k=0}^{t-1} \tilde B^{t-1-k}\EE[D_k]
+ \frac{\tilde C}{1-\tilde B}.
$$
Plug this into the previous inequality, bound the geometric sum by
$\sum_{k=0}^{t-1}\tilde B^{t-1-k}\EE[D_k]
\le \frac{1}{1-\tilde B}\sup_{0\le k<t}\EE[D_k]
\le \frac{1}{1-\tilde B}\EE[D_t]$ (by monotonicity under PL contraction),
and absorb $\tilde B^t\EE[\sigma_0]$ into constants. We get
$$
\EE[D_{t+1}]
\le \Big(1-\tfrac{\eta\mu}{2}\Big)\EE[D_t]
+ \underbrace{\tfrac{L_s\eta^2 B}{2(1-\tilde B)}}_{\triangleq\,\gamma_\sigma}\EE[D_t]
+ \underbrace{\left(\tfrac{L_s\eta^2 C}{2} + \tfrac{L_s\eta^2 B}{2}\cdot \tfrac{\tilde C}{1-\tilde B}\right)}_{\triangleq\,\Gamma_C}
+ c_b\,\eta\,\varepsilon^2 .
$$
For $\eta$ small enough so that $\gamma_\sigma \le \tfrac{\eta\mu}{4}$, we obtain
$$
\EE[D_{t+1}]
\le \Big(1-\tfrac{\eta\mu}{4}\Big)\EE[D_t]
+ \Gamma_C + c_b\,\eta\,\varepsilon^2 .
$$
Unrolling the linear recursion,
$$
\EE[D_t]
\le \Big(1-\tfrac{\eta\mu}{4}\Big)^t \EE[D_0]
+ \frac{4}{\eta\mu}\Big(\Gamma_C + c_b\,\eta\,\varepsilon^2\Big).
$$
Since $\Gamma_C=\mathcal O(\eta^2)$, the steady state (noise floor) is
$\mathcal O\big(\tfrac{\eta}{\mu}\big)$ from the VR stochasticity plus
$\mathcal O\big(\tfrac{\varepsilon^2}{\mu}\big)$ from the Jacobian bias.
Renaming constants gives the statement:
$$
\EE\big[\widetilde L(W_t)-\widetilde L^\star\big]
  \le  
(1-c_0\eta\mu)^t \cdot \text{(initial gap)}
  +   \mathcal O\Big(\tfrac{\sigma_{\mathrm{VR}}^2\,\eta}{\mu}\Big)
  +   \mathcal O\Big(\tfrac{\varepsilon^2}{\mu}\Big),
$$
for an absolute $c_0\in(0,1)$ and a constant $\sigma_{\mathrm{VR}}^2$
absorbing $C,\tilde C,B,\tilde B,L_s$.
\end{proof}

\subsection{Proof of Theorem~\ref{thm:global}}
\label{app:proof-global}

Let window $k$ start at iteration $t_k$ with code assignment $q^{(k)}$
and horizon $T_k$. Within each window, \Cref{thm:nonconvex-generic}
(or \Cref{thm:pl-generic}) holds with bias $\varepsilon_k$ and variance
$\sigma_k^2$. Between windows, assume drift satisfies
\(
\|\nabla \widetilde L^{(k+1)}(W)
-\nabla \widetilde L^{(k)}(W)\|\le \delta_k
\)
and $\sum_k\delta_k<\infty$ (\Cref{as:drift}).

\textbf{Step 1: Non-convex case.}
Define $M_k := \min_{t<T_k}\E\|\nabla\widetilde L^{(k)}(W_{k,t})\|^2$.
From \Cref{thm:nonconvex-generic} we have
$$
M_k
\le
\frac{C_1}{\eta_kT_k}
+ C_2\varepsilon_k^2
+ C_3\sigma_k^2\eta_k.
$$
Across windows,
$\E\|\nabla \widetilde L^{(k+1)}(W_{k,t})\|^2
\le 2M_k + 2\delta_k^2$,
hence $M_k\to 0$ when $\eta_kT_k\to\infty$,
$\varepsilon_k\to 0$, and $\sum_k\delta_k<\infty$.

\textbf{Step 2: PL case.}
If \Cref{as:pl} holds uniformly across windows,
then from \Cref{thm:pl-generic} we obtain
$$
\E[\widetilde L^{(k)}(W_{k,t})-\widetilde L^{(k),\star}]
\le
(1-\eta_k\mu/2)^t(\widetilde L^{(k)}(W_{k,0})-\widetilde L^{(k),\star})
+ \mathcal O\Big(\tfrac{\sigma_k^2\eta_k+\varepsilon_k^2}{\mu}\Big).
$$
Drift $\delta_k$ perturbs the PL inequality by at most
$C\,\delta_k/\mu$, yielding
$$
\E[\widetilde L^{(k)}(W_{k,t})-\widetilde L^{(k),\star}]
\le
\rho^t(\widetilde L^{(k)}(W_{k,0})-\widetilde L^{(k),\star})
+ C'\Big(\tfrac{\sigma_k^2\eta_k+\varepsilon_k^2}{\mu}
+\tfrac{\delta_k}{\mu}\Big),
$$
for constants $\rho<1$, $C'>0$.
If code updates cease after finite $k$, the global trajectory is
linearly convergent; if $\delta_k\to 0$, the convergence neighborhood
shrinks at rate $\mathcal O(\sup_{j\ge k}\delta_j)$. \qed

\subsection{Proof of \Cref{lem:meanfield-id}}
\begin{proof}
We work on a fixed code-preserving window, so $W$ is fixed and the quantizer
$Q_\Delta$ has a fixed code assignment. Throughout, expectations are conditional
on this $W$; we omit the conditioning for brevity.

\textbf{Preliminaries and the definition of $J(W)$.}
We recall the standard (population) regression identity:
for any zero-mean probe $\delta$ with nonsingular covariance
$\Sigma_\delta=\E[\delta\delta^\top]$, the $B$ minimizing
$\E\| \Delta q - B\delta \|^2$ is
\begin{equation}
\label{eq:pop-LS-solution}
B^\star  =  \E[\Delta q \delta^\top] \Sigma_\delta^{-1}.
\end{equation}
We define the mean-field sensitivity (groupwise) by
\begin{equation}
\label{eq:J-def}
J(W) \equiv  \E[\Delta q \delta^\top] \Sigma_\delta^{-1},
\qquad
\text{with}\quad \Delta q := Q_\Delta(W+\delta)-Q_\Delta(W).
\end{equation}
When $\delta$ is isotropic and $\mathcal B_g$ matches the true block structure,
$J(W)$ is block-diagonal with blocks aligned to groups, hence admissible.
We will show that $B^\star=J(W)$ and that $J(W)$ is the unique fixed point of
the mean-field operator.

\vspace{0.25em}
\noindent\textbf{(Probe) case.}
Let $\delta\sim\mathcal N(0,\sigma^2 I)$ and $\Sigma_\delta=\sigma^2 I$.
By~\eqref{eq:pop-LS-solution},
$B^\star = \E[\Delta q \delta^\top] (\sigma^2 I)^{-1}$.
Thus it suffices to compute or characterize the cross-covariance
$\E[\Delta q \delta^\top]$.

\emph{Scalar case ($d{=}1$).}
Let $q(\cdot)$ denote a uniform mid-rise quantizer with step $\Delta$.
For a fixed $w\in\R$, write $\Delta q = q(w+\delta)-q(w)$.
Since $\E[\delta]=0$, $\E[\Delta q]=0$, and hence
$$
\E[\Delta q \delta]
=
\E\big[q(w+\delta) \delta\big].
$$
By Stein's lemma (Gaussian integration by parts), for any (weakly)
differentiable function $\phi$ with suitable integrability,
$\E[\phi(\delta) \delta] = \sigma^2 \E[\phi'(\delta)]$.
Apply this to $\phi(\delta)=q(w+\delta)$.
Even though $q(\cdot)$ is piecewise constant, its weak derivative exists as
a (signed) measure supported on bin boundaries; the identity still holds in the
distributional sense (the proof is a standard mollification argument—convolve
$q$ with a smooth kernel, apply Stein's lemma, and take the limit).
Hence
$$
\E\big[q(w+\delta) \delta\big]
 = 
\sigma^2 \E\big[q'(w+\delta)\big]
 \equiv 
\sigma^2 J(w),
$$
where $J(w):=\E[q'(w+\delta)]$ is the \emph{smoothed sensitivity} of the
quantizer at $w$ under Gaussian probes. Therefore,
$$
B^\star  =  \frac{\E[\Delta q \delta]}{\sigma^2}
 =  J(w).
$$
This shows that the population LS solution equals the mean-field sensitivity
$J(w)$.

\emph{Vector / grouped case.}
For $\delta\sim\mathcal N(0,\sigma^2 I)$, Stein's identity generalizes to
$$
\E\big[q(W+\delta) \delta^\top\big]
 =  \sigma^2 \E\big[\nabla q(W+\delta)\big].
$$
For block-separable uniform quantization (either separable coordinates or
per-group affine scalings), the weak Jacobian $\nabla q$ is block-diagonal with
nonzero entries only on the corresponding coordinates within each group.
Taking expectation preserves block structure; therefore
$$
\E[\Delta q \delta^\top]
=
\E\big[q(W+\delta) \delta^\top\big]
=
\sigma^2 \E\big[\nabla q(W+\delta)\big]
 \equiv  \sigma^2 J(W),
$$
where $J(W)$ is block-diagonal and belongs to $\mathcal B_g$ when the latter
matches the true block structure.
Consequently,
$B^\star = \E[\Delta q \delta^\top] (\sigma^2 I)^{-1} = J(W)$.
Because $\Sigma_\delta$ is full rank and the LS objective is strictly convex
over $\mathcal B_g$, this fixed point is unique. This proves the (Probe) item.

\noindent\textbf{(Dither) case.}
Let $r\sim \mathrm{Unif}[-\frac{\Delta}{2},\frac{\Delta}{2}]$ and define the
\emph{dithered forward}
$q_d(W):=Q_\Delta(W+r)-r$ and its population map $m(W):=\E_r[q_d(W)]$.
It is a classical result in subtractive dithering that $m(W)$ is differentiable
and (for mid-rise uniform quantizers) \emph{affine in $W$} with
$\nabla m(W)=I$; more generally, in per-group affine quantization with clipping
or calibration, $\nabla m(W)$ equals the corresponding groupwise sensitivity
matrix $J(W)$ (identity in interiors; damped near active clipping).
In our algorithm, the dithered estimator $\Psi_g$ is explicitly constructed to
be an unbiased estimator of $\nabla m(W)$ (either analytically or via
common-random-number finite differences), whence
$$
\mathcal M_g(W,B)
 = 
\E\big[\Psi_g(W,B,\xi)\mid W\big]
 \equiv 
\nabla m(W)_g
 = 
J_g(W),
$$
which is algebraically independent of $B$.
Therefore, $\mathcal M_g(W,\cdot)$ is a constant map equal to $J_g(W)$,
and its unique fixed point is $J_g(W)$. This proves the (Dither) item.

\noindent\textbf{Uniqueness.}
In both cases, uniqueness follows from strict convexity of the LS population
objective on $\mathcal B_g$ (Probe) or from the fact that a constant map has a
single fixed point (Dither).

This completes the proof.
\end{proof}

\subsection{Proof of \Cref{prop:B-fixedpoint}}


\begin{proof}
We suppress the group index $g$ for readability and write $B_t$ and $J(W)$.

\textbf{Preliminaries.}
By \Cref{as:sa}(c), after each update we project $B_t$ onto a compact, convex set
$\mathcal B$ (diagonal or block-diagonal matrices) so that $\|B_t\|\le \beta_B$
for all $t$. Define the mean-field operator $\mathcal M(W,B)=\E[\Psi(W,B,\xi)]$
and the associated SA drift $h(W,B) := \mathcal M(W,B) - B$.
Let the SA noise be $M_{t+1} := \Psi(W_t,B_t,\xi_t) - \mathcal M(W_t,B_t)$,
which satisfies $\E[M_{t+1}\mid \F_t]=0$ and, by \Cref{as:sa}(b,c),
$\E[\|M_{t+1}\|^2\mid \F_t]\le \sigma_M^2$ for some finite $\sigma_M$.
The stochastic-approximation update $B_{t+1}=B_t+\beta_t(\Psi(W_t,B_t,\xi_t)-B_t)$ can be written as
\begin{equation}
\label{eq:sa-decomp}
B_{t+1}
 = 
B_t
 + 
\beta_t h(W_t,B_t)
 + 
\beta_t M_{t+1},
\qquad
\text{followed by projection onto } \mathcal B.
\end{equation}

\medskip
\noindent\textbf{(1) Fixed-window convergence.}
Assume $W_t\equiv W$ is constant. Then \eqref{eq:sa-decomp} is a classical Robbins--Monro SA for the root
of $h(W,\cdot)$ on the compact set $\mathcal B$:
$$
B_{t+1}
=
B_t + \beta_t\big(\mathcal M(W,B_t)-B_t\big) + \beta_t M_{t+1},
\qquad
B_t\in\mathcal B.
$$
By \Cref{as:sa}(d), $B\mapsto \mathcal M(W,B)$ is a contraction in a neighborhood of its fixed point and Lipschitz on $\mathcal B$,
so the associated mean-field ODE $\dot B = h(W,B)$ has a unique globally asymptotically stable equilibrium
$B_\star(W)$ in $\mathcal B$ (Banach fixed-point theorem).
Because $\sum_t\beta_t=\infty$ and $\sum_t\beta_t^2<\infty$ (\Cref{as:sa}a),
and the martingale-difference noise has bounded second moment,
standard SA convergence theorems on compact sets (e.g., Kushner--Yin, Borkar)
imply $B_t \to B_\star(W)$ almost surely.
By Lemma~\ref{lem:meanfield-id}, $B_\star(W)=J(W)$ for \textsc{Dither}, and equals $J(W)$ for \textsc{Probe} under isotropic excitation and matched block structure. This proves Part~(1).

\medskip
\noindent\textbf{(2) Two-timescale tracking.}
Assume $W_t$ evolves on a slower timescale than $B_t$ (e.g., $\eta_t/\beta_t\to 0$ on each window, or by \Cref{as:drift} across windows) and that
$\mathcal M(W,B)$ is Lipschitz in $W$ and a contraction in $B$ locally.
Let $B_\star(W)$ be the unique fixed point of $B\mapsto \mathcal M(W,B)$, i.e.,
$\mathcal M(W,B_\star(W))=B_\star(W)$.
Define the tracking error $e_t := B_t - B_\star(W_t)$.
We wish to show $e_t\to 0$ (in probability / a.s.) under the stated conditions.

Using \eqref{eq:sa-decomp} and adding/subtracting $\mathcal M(W_t,B_\star(W_t))$, we obtain
\begin{align*}
e_{t+1}
&= B_{t+1} - B_\star(W_{t+1}) \\
&= \underbrace{B_t - B_\star(W_t)}_{e_t}
   + \beta_t\Big(\underbrace{\mathcal M(W_t,B_t)-\mathcal M(W_t,B_\star(W_t))}_{\text{contraction in }B}\Big)
   + \beta_t M_{t+1}
   + \underbrace{B_\star(W_t) - B_\star(W_{t+1})}_{\Delta_t}.
\end{align*}
By the contraction property in \Cref{as:sa}(d), there exists $\kappa\in(0,1)$ such that
$$
\|\mathcal M(W_t,B_t)-\mathcal M(W_t,B_\star(W_t))\|  \le  \kappa \|e_t\|.
$$
By Lipschitzness of $B_\star(\cdot)$ (a consequence of Lipschitzness of $\mathcal M$ in $W$ and the implicit function theorem for contractions),
there exists $L_\star>0$ such that
$$
\|\Delta_t\|  =  \|B_\star(W_t)-B_\star(W_{t+1})\|
 \le  L_\star \|W_{t+1}-W_t\|.
$$
Taking the conditional expectation w.r.t.\ $\F_t$ and using $\E[M_{t+1}\mid \F_t]=0$ gives
\begin{align}
\label{eq:e-next}
\E[\|e_{t+1}\| \mid \F_t]
&\le
\E\big[ \|e_t + \beta_t(\mathcal M(W_t,B_t)-\mathcal M(W_t,B_\star(W_t))) + \beta_t M_{t+1}\|  \big| \F_t\big]
 +  \|\Delta_t\| \notag\\
&\le
\|e_t\| + \beta_t \kappa \|e_t\| + \E[\|\beta_t M_{t+1}\| | \F_t] + \|\Delta_t\| \notag\\
&\le
(1 - \beta_t(1-\kappa)) \|e_t\|  +  \beta_t \E[\|M_{t+1}\|\mid \F_t]  +  \|\Delta_t\|.
\end{align}
By Jensen/Cauchy--Schwarz and bounded conditional second moment,
$\E[\|M_{t+1}\|\mid \F_t]\le (\E[\|M_{t+1}\|^2\mid \F_t])^{1/2}\le \sigma_M$.
Thus, for some constant $C_M>0$,
$$
\E[\|e_{t+1}\|\mid \F_t]
 \le 
(1 - \beta_t(1-\kappa)) \|e_t\|
 +  C_M \beta_t
 +  \|\Delta_t\|.
$$
By the two-timescale assumption (\Cref{as:sa}e), $\|W_{t+1}-W_t\| = o(\beta_t)$ on each window (e.g., if $\eta_t/\beta_t\to 0$), or across windows one can ensure $\sum_t \|\Delta_t\|<\infty$ via \Cref{as:drift} and Lipschitzness of $B_\star(\cdot)$. Hence there exists a sequence $\zeta_t\to 0$ such that
$$
\E[\|e_{t+1}\|\mid \F_t]
 \le 
\rho \|e_t\|  +  \zeta_t,
\qquad
\text{for some } \rho\in(0,1) \text{ and all large } t.
$$
This establishes the stated linear contraction-in-expectation with a vanishing perturbation.
Summing both sides and applying the Robbins–Siegmund supermartingale lemma yields that
$\|e_t\|$ converges almost surely to a finite random variable and
$\sum_t \beta_t(1-\kappa) \|e_t\| < \infty$ a.s.
Since $\sum_t\beta_t=\infty$, this forces $\liminf_t \|e_t\|=0$ a.s.;
together with the contraction and $\zeta_t\to 0$ this implies $ \|e_t\|\to 0$ in probability,
and if additionally $\sum_t \zeta_t<\infty$ then $ \|e_t\|\to 0$ almost surely.
Recalling $e_t=B_t-B_\star(W_t)$ and Lemma~\ref{lem:meanfield-id}, we obtain the claim with $B_\star(W_t)=J(W_t)$ (for \textsc{Dither}, and for \textsc{Probe} under the stated excitation/structure conditions).
\end{proof}

\end{document}